\newtheorem{assumption}[theorem]{Assumption}
\definecolor{myColor}{rgb}{0.1, 0.2, 0.6}
	\tikzset{box/.style ={
			rectangle, 
			rounded corners =5pt, 
			minimum width =50pt, 
			minimum height =20pt, 
			inner sep=5pt, 
			draw=blue} 
}
\tikzset{zbox/.style ={
		rectangle, 
		minimum width =50pt, 
		minimum height =20pt, 
		inner sep=5pt, 
		draw=black} 
}
\tikzset{ball/.style ={
	circle, 
	minimum width =20pt, 
	minimum height =20pt, 
	inner sep=0.1pt, 
	draw=blue} 
}
\tikzset{global scale/.style={
scale=#1,
every node/.append style={scale=#1}
}
}
\definecolor{darkcerulean}{rgb}{0.1, 0.1, 0.7}
\definecolor{brickred}{rgb}{0.8, 0.25, 0.33}
\definecolor{brightmaroon}{rgb}{0.65, 0.16, 0.16}
\begin{document}


%
%
\title{\Large Deep Neural ODE Operator Networks for PDEs}

\author{Ziqian Li}

\address{School of Mathematics, Jilin University, 2699 Qianjin Street, Changchun, 130012, Jilin, China.\\
	 Chair for Dynamics, Control, Machine Learning, and Numerics (Alexander von Humboldt Professorship), Department of Mathematics, Friedrich–Alexander-Universit\"at Erlangen–N\"urnberg, 91058 Erlangen, Germany.\\
\href{zqli23@mails.jlu.edu.cn}{zqli23@mails.jlu.edu.cn}}

\author{Kang Liu}

\address{Institut de Mathematiques de Bourgogne, Université Bourgogne Europe, CNRS, 21000 Dijon, France.\\
\href{kang.liu@u-bourgogne.fr}{kang.liu@u-bourgogne.fr}}

\author{Yongcun Song}
\address{Division of Mathematical Sciences, School of Physical and Mathematical Sciences,
	Nanyang Technological University,
	21 Nanyang Link, 637371, Singapore.\\ \href{yongcun.song@ntu.edu.sg}{yongcun.song@ntu.edu.sg}}
	
\author{Hangrui Yue}
\address{School of Mathematical Sciences, Nankai University, Tianjin 300071, China. \\ \href{yuehangrui@gmail.com}{yuehangrui@gmail.com}}

\author{Enrique Zuazua}
\address{Chair for Dynamics, Control, Machine Learning, and Numerics (Alexander von Humboldt Professorship), Department of Mathematics, Friedrich–Alexander-Universit\"at Erlangen–N\"urnberg, 91058 Erlangen, Germany. \\
	Departamento de Matem\'aticas, Universidad Aut\'onoma de Madrid, 28049 Madrid, Spain. \\
	Chair of Computational Mathematics, Fundaci\'on Deusto. Av. de las Universidades, 24, 48007 Bilbao, Basque Country, Spain. \\  \href{enrique.zuazua@fau.de}{enrique.zuazua@fau.de}}

\maketitle


\begin{abstract}
Operator learning has emerged as a promising paradigm for developing efficient surrogate models to solve partial differential equations (PDEs). However, existing approaches often overlook the domain knowledge inherent in the underlying PDEs and hence suffer from challenges in capturing temporal dynamics and generalization issues beyond training time frames. This paper introduces a deep neural ordinary differential equation (ODE) operator network framework, termed NODE-ONet, to alleviate these limitations. The framework adopts an encoder-decoder architecture comprising three core components: an encoder that spatially discretizes input functions, a neural ODE capturing latent temporal dynamics, and a decoder reconstructing solutions in physical spaces. Theoretically, error analysis for the encoder-decoder architecture is investigated. Computationally, we propose novel physics-encoded neural ODEs to incorporate PDE-specific physical properties. Such well-designed neural ODEs significantly reduce the framework's complexity while enhancing numerical efficiency, robustness, applicability, and generalization capacity. Numerical experiments on nonlinear diffusion-reaction and Navier-Stokes equations demonstrate high accuracy, computational efficiency, and prediction capabilities beyond training time frames. Additionally, the framework’s flexibility to accommodate diverse encoders/decoders and its ability to generalize across related PDE families further underscore its potential as a scalable, physics-encoded tool for scientific machine learning.
\end{abstract}

\keywords{Operator learning, neural ordinary differential equations, partial differential equations, error analysis, neural networks, scientific machine learning.}

\ccode{AMS Subject Classification: 65M99,  
47-08,  	
68T07,  	
65D15  	
}

\section{Introduction}
\subsection{Background}
Partial Differential Equations (PDEs) serve as fundamental tools for modeling systems across physics, engineering, biology, and economics, where quantities such as temperature, pressure, wave amplitude, or population density evolve in space and time. Analytical solutions to PDEs are often intractable, particularly for nonlinear or high-dimensional problems. Consequently, numerical methods have become essential for solving PDEs approximately.  

Traditional numerical methods for PDEs, such as finite difference methods (FDM), finite element methods (FEM), finite volume methods, and spectral methods, discretize continuous temporal-spatial domains into computational grids, transforming PDEs into solvable algebraic systems.
These methods are grounded in rigorous mathematical theories and can produce highly accurate and interpretable solutions, especially for linear PDEs and well-posed problems. Moreover, these methods are efficient for low-to-moderate dimensionality (no more than 3) and the corresponding linear system solvers and preconditioners are highly optimized. Hence, these traditional numerical methods are indispensable for their precision and reliability in diverse PDE applications. However, traditional numerical methods are still struggling to address PDEs in high-dimensional spaces and complex domains. 

Many problems in science and engineering, such as inverse problems and optimal control, require solving PDEs repeatedly for different parameters \cite{song2023accelerated,song2024operator,tanyu2023deep,wang2021fast}. Traditional numerical solvers for these problems rely on mesh-based discretization, which generates large-scale algebraic systems. Solving these systems repeatedly is computationally expensive, rendering traditional approaches inefficient for many-query scenarios. To alleviate this issue, we propose a novel operator learning framework for solving PDEs. 

Operator learning follows an offline training and online inference paradigm.  As analyzed in \cite{chen2023deep}, operator learning lessens the curse of dimensionality for solving PDEs. However, we should mention that operator learning usually requires training data generated by traditional numerical PDE solvers, and thus cannot fully escape the curse of dimensionality.  Nevertheless, operator learning involves only a single, offline training phase for the neural network and during inference, obtaining a solution for a new input function requires only a forward pass of the neural network. This enables the construction of highly effective emulators capable of real-time predictions for critical applications such as freeway traffic control, weather forecasting, and digital twins, see e.g., \citeup{Azizzadenesheli2024neural,kobayashi2024improved,kobayashi2024deep,liu2024deep,lv2025neural,pathak2022fourcastnet} and the references therein.

\subsection{Methodology and contributions}
To impose our ideas clearly, we consider a class of  PDEs  modeled by
\begin{equation}\label{e_pde}
	\left\{
	\begin{aligned}
		&\partial_t u(t,x)+\mathcal{L}[a](u)(t,x)=f(t,x) &\forall (t,x)\in [0,T]\times \Omega,\\
		&u(0,x)=u_0(x) & \forall x\in \Omega,\\
		&\mathcal{B}u(t,x)=u_b(t,x) &\forall (t,x)\in [0,T]\times \partial\Omega.
	\end{aligned}
	\right.
\end{equation}
Above, $T>0$, $\Omega\subset \mathbb{R}^d$ is a bounded domain with boundary $\partial\Omega$, $\mathcal{L}$ is a differential operator with $a: [0,T]\times 
\Omega\to \mathbb{R}$ as the underlying parameter (e.g., $\mathcal{L}[a](u)(t,x)=-\nabla\cdot(a(t,x))\nabla u(t,x))$, $f: [0,T]\times 
\Omega\to \mathbb{R}$ is the source term, $u_0: \Omega\to \mathbb{R}$ is the initial value, $\mathcal{B}$ denotes a
boundary conditions operator that enforces any Dirichlet, Neumann, Robin, or periodic boundary conditions, and $u_b: [0,T]\times 
\partial\Omega\to \mathbb{R}$ is the boundary value. 

Let $v \subset \{f, a, u_0, u_b\}$ denote a collection of parameters in \eqref{e_pde}.  We assume that, for any admissible $v$, the system \eqref{e_pde} has a unique classical solution $u$ in the appropriate function space. 
The goal of operator learning is to approximate the solution operator $\Psi^\dagger$, which maps $v$ to the solution $u$, by a neural-network--based functional $\Psi_{\theta}$ with trainable parameters $\theta$:
\begin{equation*}
	\Psi_{\theta} \approx \Psi^\dagger : v \mapsto u.
\end{equation*}
In this article, we introduce $\Psi_{\theta}$ within a deep Neural Ordinary Differential Equation Operator Network (NODE-ONet) framework.
\subsubsection{The NODE-ONet framework}
The NODE-ONet employs an encoder--decoder architecture, with its error analysis presented in Section~\ref{se:error_analysis}. 
In practice, the NODE-ONet consists of the following three components (a concrete example is provided in Section \ref{se:example}, and a complete version is presented in Section~\ref{se_NODE-ONet}):
\begin{enumerate}
	\item \textbf{Encoding:} The input parameter set $v$ is embedded into a latent space using a spatial discretization scheme (e.g., pointwise evaluation on grids, expansion in a finite element or Fourier basis).  
	The resulting encoded system of \eqref{e_pde} is represented by a reduced number of state variables, referred to as \emph{latent variables}, whose dynamics evolve in time.
	\item \textbf{NODE surrogate:} We develop physics-encoded NODEs, which employ \emph{explicitly time-dependent} parameters (e.g., with polynomial dependence on $t$) to approximate the dynamics of the latent variables, thereby substantially reducing the complexity of vanilla NODEs \cite{chen2018neural}.  
	Furthermore, the architecture of physics-encoded NODEs is designed to encode structural properties of the underlying PDEs, such as the nonlinear dependence of $a$ and $u$, as well as the additive relationship between $f$ and $u$ in \eqref{e_pde}.  
	The effects of other known parameters (i.e., $\{f, a, u_0, u_b\}\setminus v$ in \eqref{e_pde}) are also incorporated into the NODE design.
	\item \textbf{Decoding:} After the latent dynamics are learned, the PDE solution $u$ is reconstructed from the NODE outputs by a decoder that depends only on the spatial domain.
\end{enumerate}

Note that the temporal and spatial variables are treated separately in the NODE-ONet. Such separation is in alignment with traditional numerical methods for time-dependent PDEs, which often employ a time-sequential method and solve the problems step by step instead of the entire temporal-spatial domain. As a result, the trained space-dependent decoder can be generalized to other operator learning tasks for PDEs with similar structures, as demonstrated in Section \ref{se:num_DR}. Moreover, it is remarkable that although the NODE-ONets are designed for evolution PDEs, they can be readily applied to stationary cases, see Remark \ref{rem:example}, Corollary \ref{cor:example}, and Section \ref{se: architecture}.

\subsubsection{Contributions}
We introduce NODE-ONets, a new framework for operator learning in PDEs, with theoretical and practical advancements.
Theoretically, we establish error estimates in Theorem~\ref{thm:main} for the general encoder--decoder architecture, which can serve as a foundation for proving the convergence of NODE-ONets on a case-by-case basis in future work.  Practically, we develop physics-encoded NODEs, which are central to the efficiency, robustness, and broad applicability of the proposed NODE-ONets, as will be demonstrated in Section~\ref{se:num}.

The physics-encoded design enables two critical capabilities. First, it allows the model to effectively capture dynamic patterns of the underlying PDEs and predict system evolution beyond the training time frame. Second, it seamlessly scales to multi-input functions without increasing the neural network complexity. This scalability makes NODE-ONets adaptable to diverse input configurations while maintaining computational efficiency. 

{It is noteworthy that the physics-encoded NODEs are designed by leveraging the specific mathematical structure of underlying PDEs, while the neural networks are trained offline on PDE-generated data. Hence, the physics-encoded NODE-ONets demonstrate a promising synergy between PDE-based domain knowledge and data-driven paradigms. Such a hybrid approach benefits from the interpretability and reliability of PDEs and the flexibility and generalization of neural networks.}

\subsection{Related work and motivation}
Some deep learning methods have been recently proposed for solving PDEs, as evidenced by~\cite{e2018deep,li2021fourier,lu2021learning,raissi2019physics,sirignano2018dgm} and the references therein.
Thanks to the powerful representation capabilities~\cite{cybenko1989approximation,hornik1989multilayer,kidger2020universal} and generalization abilities~\cite{kawaguchi2017generalization,neyshabur2017exploring} of deep neural networks (DNNs), the above deep learning methods enhance the feasibility of tackling complex, multi-scale PDE systems and have found widespread applications in various scientific and engineering fields, see e.g., \cite{cuomo2022scientific,gao2025prox,karniadakis2021physics,lai2025hard,song2024admm,tanyu2023deep} and the references therein.

Compared with traditional numerical methods, deep learning methods are typically mesh-free, easy to implement, and flexible in solving various PDEs, especially for high-dimensional problems or those with complex geometries.
Some deep learning methods, such as the deep Ritz method \cite{e2018deep}, the deep Galerkin method \cite{sirignano2018dgm}, and physics-informed neural networks (PINNs) \cite{lu2021physics,raissi2019physics}, approximate the solution of a given PDE by DNNs, and the PDE solution can be obtained by training the DNNs.
Despite that these methods have shown promising results in diverse applications, each of them is tailored for a specific instance of PDEs. It is thus necessary to train a new DNN given a different PDE parameter (e.g., initial/boundary value, source term, or coefficients), which may be computationally costly and thus challenging to generate real-time predictions for varying input data. To alleviate this issue, some operator learning methods have been recently proposed in the literature, see e.g., \cite{bhattacharya2021model,kovachki2023neural,li2021fourier,lu2021learning}

\subsubsection{Operator learning for PDEs}
Operator learning applies a DNN to approximate the
solution operator of a PDE, which maps from a PDE parameter to the solution.
Once a neural solution operator is learned, we obtain a neural surrogate model and only require a forward pass of the DNN to solve a PDE. Representative operator learning methods for solving PDEs include the deep operator networks (DeepONets) \cite{lu2021learning}, the MIONet \cite{jin2022mionet}, the physics-informed DeepONets \cite{wang2021learning}, the Fourier neural operator (FNO) \cite{li2021fourier}, the graph neural operator \cite{li2020neural}, the random feature model \cite{nelsen2021random}, the PCA-Net \cite{bhattacharya2021model}, the Laplace neural operator  \cite{cao2023lno}, and the in-context operator network \cite{yang2023context}. 

Although the traditional numerical PDE solvers may still be competitive in pursuit of solutions with high accuracy, it is well known that operator learning approaches could also achieve satisfactory accuracy of solutions while additionally gaining significance in numerical efficiency as well as generalization ability. Hence, operator learning approaches are being widely used to construct effective surrogates for PDEs and are computationally attractive for problems that require repetitive yet expensive simulations, see e.g., \cite{Azizzadenesheli2024neural,hwang2022solving,kobayashi2024improved,kobayashi2024deep,liu2024deep,lv2025neural,pathak2022fourcastnet,song2023accelerated,song2024operator,wang2021fast}.

Among the above operator learning methods, the DeepONets, which adopt an encoder-decoder architecture and are motivated by the universal approximation theorems for operators \cite{chen1995universal,lu2021learning}, have demonstrated good performance in diverse applications, such as electroconvection \cite{cai2021deepm}, multiscale bubble growth dynamics \cite{lin2021operator}, and aortic dissection \cite{yin2022simulating}.  More applications, theoretical analysis, and numerical study of DeepONets can be referred to \cite{faroughi2024physics,hao2022physics,lu2022comprehensive}.  Several variants of DeepONets have also been developed for learning PDE solution operators in different settings, such as the Bayesian DeepONet \cite{garg2022variational}, the DeepONet with
proper orthogonal decomposition (POD-DeepONet) \cite{lu2022comprehensive}, and the physics-informed DeepONets \cite{wang2021learning}. 

Note that all these DeepONets treat temporal and spatial variables together and train the neural networks over the entire temporal-spatial domain at once. This may lead to neural networks being hard to train and deteriorate the numerical accuracy, see e.g., \cite{krishnapriyan2021characterizing,wang2024respecting}. Despite their broad applicability, DeepONets always utilize generic neural networks (e.g., fully connected neural networks (FCNNs), convolutional neural networks (CNNs), and recurrent neural networks) and overlook domain-specific knowledge inherent to the PDEs, such as their intrinsic structure or the  effect of specific PDE parameters. This oversight may compromise their computational efficiency in learning PDE solution operators. In particular, DeepONets exhibit limitations in capturing the evolution dynamics of time-dependent  PDEs, which restricts their predictive performance beyond the training temporal domain. 

Furthermore, as validated in \cite{jin2022mionet}, vanilla DeepONets \cite{lu2021learning} are not sufficiently accurate when approximating PDE solution operators involving multiple input functions. To address this issue, the MIONet was proposed in \cite{jin2022mionet}, which enhances the numerical accuracy for learning operators with multi-input functions. However, the model complexity of MIONet increases as the number of input functions scales, and the MIONet still suffers from the prediction issue beyond training time frames.

These limitations motivate architectures that (i) treat time as a continuous variable rather than fixed training grids, (ii) encode the dynamical structure of PDEs explicitly, and (iii) remain stable when extrapolating beyond the training horizon. A natural candidate in this direction is the framework of NODEs.

\subsubsection{Neural ODEs}\label{se:intro_penode}
To handle time-dependent systems, the authors in \cite{chen2018neural} introduced the concept of NODEs, a continuous-time limit of deep residual networks (ResNets) \cite{he2016deep} that merges deep learning with dynamical-systems theory. Classical ResNets can be viewed as a forward-Euler discretization of a NODE. 
\color{black}
Formally, a vanilla NODE models the dynamics of a state trajectory $\bm{x}(t):[0,T]\to\mathbb{R}^d$ via an ODE parameterized by a neural network:
\begin{equation}
	\label{eq:trad-NODE}
	\begin{dcases}
		\dot{\bm{x}} =\sum_{i = 1}^P W_i(t)\odot\bm{\sigma}(A_i(t)\bm{x}+B_i(t)), \\
		\bm{x}(0) = x_0.
	\end{dcases}
\end{equation}
Here, $A_i \in L^\infty([0,T];\mathbb{R}^{d \times d}), W_i \in L^\infty([0,T];\mathbb{R}^d),$ and $B_i \in L^\infty([0,T];\mathbb{R}^d)$ for $i = 1, \ldots, P$ are
trainable parameters, $\bm{\sigma}: \mathbb{R}^d \to \mathbb{R}^d$ is an activation function, and $\odot$ stands for the Hadamard product. 
The continuous-time modeling capability of the vanilla NODEs offers significant advantages for applications requiring smooth interpolations or handling irregularly sampled data, such as time series modeling \cite{chen2018neural} and classification tasks \cite{ruiz2023neural}. In particular, the application of NODEs to solve PDEs has been studied in \cite{nair2025understanding,regazzoni2024learning}.

{However, vanilla NODEs \cite{chen2018neural} suffer from several critical limitations when applied to operator learning.}
First, the parameters therein are all implicitly time-dependent, which introduces substantial computational complexity when training with finer temporal resolutions. Furthermore, the implicit temporal dependence of these parameters restricts generalization. 
To be concrete, we can only get their values at $\{t_k\}_{k=1}^{N_t}$ that are used in the training process, rendering predictions at $\hat{t}\notin\{t_k\}_{k=1}^{N_t}$ unreliable or unavailable.  We refer to \cite{li2024universal} for related discussions. 
Second, vanilla NODEs are largely restricted to learning solution operators conditioned solely on the initial condition (see \cite{nair2025understanding,regazzoni2024learning}). This limits their applicability to broader operator-learning settings in which the inputs include, for example, spatially varying diffusion coefficients, source terms, boundary data, or other PDE parameters.
Finally, standard implementations of NODEs, which rely on conventional architectures such as FCNNs or CNNs, neglect PDE-specific knowledge.

Therefore, the main goal of this article is to develop a physics-encoded NODE–based operator-learning framework that learns the operator mapping from PDE parameters to the corresponding solution. By encoding PDE structure into the NODE architecture and incorporating numerical solutions into the training data, our method enables an efficient offline--online workflow: after a one-time training cost, the surrogate provides fast, high-fidelity predictions for new parameter queries, yielding high-performance numerical resolution of PDEs.

\subsection{Organization}
The remainder of this paper is organized as follows. In Section \ref{se:general edn}, we introduce the general encoder-decoder networks and investigate the corresponding error analysis. Then, we present the generic architecture of the NODE-ONet framework and discuss the training methodology in Section \ref{se: architecture}. In Section \ref{se:pe_node}, we elaborate on the design of physics-encoded NODEs and present the resulting NODE-ONets. In Section \ref{se:num}, comprehensive numerical results are presented to validate the effectiveness, efficiency, and flexibility of the NODE-ONets in different contexts. Finally, some conclusions and research perspectives are given in Section \ref{se:conclusion}.

\section{General Encoder-Decoder Networks and Error Analysis}\label{se:general edn}

In this section, we first review the encoder-decoder networks \cite{bhattacharya2021model,kovachki2024operator,lu2021learning}, a widely used architecture in operator learning. We then present a general error analysis framework for such networks. This analysis framework serves as the foundation for both the design and theoretical analysis of the proposed deep NODE-ONets, which are developed in subsequent sections for solving PDEs.

\subsection{The architecture of encoder-decoder networks}\label{sec:general_encoder-decoder}
Encoder-decoder networks aim to approximate the operator $\Psi^\dagger: \mathcal{V} \to \mathcal{U}$ with infinite-dimensional input and output spaces $\mathcal{V}$ and $\mathcal{U}$. For this purpose, we first introduce two suitable latent spaces, denoted by $\mathcal{V}_h$ and $\mathcal{U}_h$, respectively, which generally possess a simpler structure than the original spaces $\mathcal{V}$ and $\mathcal{U}$, often characterized by finite dimensionality.
Then, we select an encoder \(E_{\mathcal{V}} \) and a decoder \( D_{\mathcal{U}}\):
\[
E_{\mathcal{V}}\colon \mathcal{V} \to \mathcal{V}_h\quad \text{and} \quad D_{\mathcal{U}}\colon \mathcal{U}_h\to \mathcal{U},
\]
either fixed \emph{a priori} or parameterized by neural networks. Next, we design a decoder $D_{\mathcal{V}}$ and an encoder $E_{\mathcal{U}}$:
\[
D_{\mathcal{V}}\colon \mathcal{V}_h \to \mathcal{V} \quad\text{and}\quad E_{\mathcal{U}}\colon \mathcal{U} \to \mathcal{U}_h,
\]
so that the compositions 
\[
D_{\mathcal{U}}\circ E_{\mathcal{U}} \quad \text{and} \quad D_{\mathcal{V}}\circ E_{\mathcal{V}}
\]
approximate the identity mappings in $\mathcal{U}$ and $\mathcal{V}$ (see Assumption~\ref{ass1}(1)-(2) for the linear case).

These encoder/decoder pairs in turn imply an encoding of the underlying infinite-dimensional operator $\Psi^\dagger$, resulting in a function between the latent spaces $ \mathcal{V}_h$ and $ \mathcal{U}_h$:
\[
\psi: \mathcal{V}_h \to \mathcal{U}_h, \quad \psi(\zeta) = E_\mathcal{U} \circ \Psi^\dagger \circ D_\mathcal{V}(\zeta),\quad \forall \zeta\in  \mathcal{V}_h,
\]
as depicted in the right-hand-side of Figure \ref{fig:structure}. Then, formally, we have
$$
D_\mathcal{U}\circ\psi\circ E_{\mathcal{V}}\approx \Psi^\dagger.
$$

Using a neural network $\psi_{\theta}\colon \mathcal{V}_h \to \mathcal{U}_h$ to approximate $\psi$, we obtain an encoder-decoder network $\Psi_{\theta} \colon \mathcal{V}\to\mathcal{U}$ in the form of
\begin{equation}\label{eq:Psi_theta}
	\Psi_{\theta} \coloneqq D_\mathcal{U}\circ\psi_{\theta}\circ E_{\mathcal{V}},
\end{equation}
to approximate the operator $\Psi^\dagger$, as illustrated in the left-hand-side of Figure \ref{fig:structure}.
Representative encoder-decoder network architectures include the PCA-Net \cite{bhattacharya2021model}, the Integral Autoencoder Network \cite{ong2022integral}, the DeepONets \cite{lu2021learning}, the MIONet \cite{jin2022mionet}, and their variants \cite{choi2024spectral,hua2023basis,prasthofer2022variable,wang2021learning,zhang2023belnet}, just to name a few. 

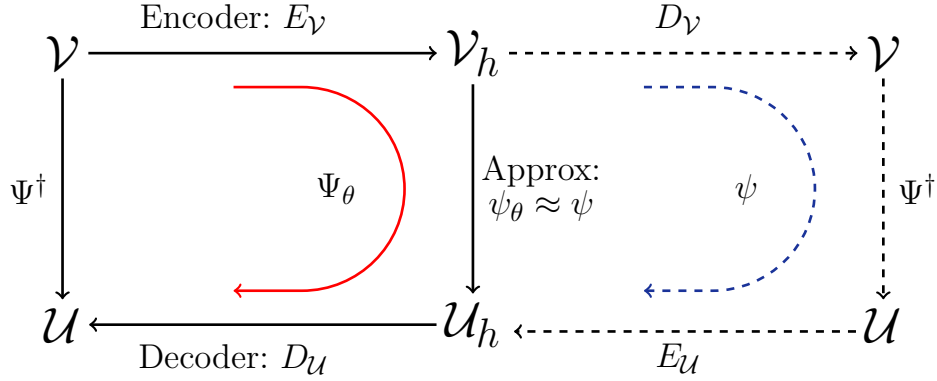
\begin{figure}[htpb]
	\label{fig:structure}
	\centering
	\begin{tikzpicture}[global scale =0.9]
		\node at(2,6)   (1) [font=\fontsize{20}{20}\selectfont]{$\mathcal{V}$};
		\node at(8,6)   (2) [font=\fontsize{20}{20}\selectfont]{$\mathcal{V}_h$};
		\node at(14,6)  (3) [font=\fontsize{20}{20}\selectfont]{$\mathcal{V}$};
		\node at(14,2)   (4) [font=\fontsize{20}{20}\selectfont]{$\mathcal{U}$};
		\node at(8,2)   (5) [font=\fontsize{20}{20}\selectfont]{$\mathcal{U}_h$};
		\node at(2,2)   (6) [font=\fontsize{20}{20}\selectfont]{$\mathcal{U}$};
		\node at(1.5,4)   (7) [font=\fontsize{15}{15}\selectfont]{$\Psi^{\dagger}$};
		\node at(9,3.75)   (8) [font=\fontsize{15}{15}\selectfont]{$\psi_{\theta}\approx\psi$};
		\node at(14.5,4)   (9) [font=\fontsize{15}{15}\selectfont]{$\Psi^{\dagger}$};
		\node at(4.5,6.5)   (10) [font=\fontsize{15}{15}\selectfont]{Encoder: $E_{\mathcal{V}}$};
		\node at(4.5,1.5)   (11) [font=\fontsize{15}{15}\selectfont]{Decoder: $D_{\mathcal{U}}$};
		\node at(11,6.5)   (12) [font=\fontsize{15}{15}\selectfont]{$D_{\mathcal{V}}$};
		\node at(11,1.5)   (14) [font=\fontsize{15}{15}\selectfont]{$E_{\mathcal{U}}$};
		\node at(9,4.25)   (15) [font=\fontsize{15}{15}\selectfont]{Approx:};
		\node at(6,4)   (16) [font=\fontsize{15}{15}\selectfont]{$\Psi_{\theta}$};
		\node at(12,4)   (17) [font=\fontsize{15}{15}\selectfont]{$\psi$};
		\draw[->][line width=1.0pt] (1)--(2);
		\draw[->][line width=1.0 pt] (1)--(6);
		\draw[->][line width=1.0 pt,dashed] (2)--(3);
		\draw[->][line width=1.0 pt] (2)--(5);
		\draw[->][line width=1.0 pt,dashed] (3)--(4);
		\draw[->][line width=1.0pt] (5)--(6);
		\draw[->][line width=1.0pt,dashed] (13.6,1.9)--(8.6,1.9);
		\draw[->, red, line width=1.0pt]  
		(4.5,5.5) -- (5.5,5.5) 
		arc[start angle=90, end angle=-90, radius=1.5] 
		-- (4.5,2.5); 
		\draw[->][draw=myColor][line width=1.0pt,dashed]  
		(10.5,5.5) -- (11.5,5.5) 
		arc[start angle=90, end angle=-90, radius=1.5] 
		-- (10.5,2.5);
	\end{tikzpicture}
	\caption{\normalsize Latent structure in maps between infinite-dimensional spaces $\mathcal{V}$ and $\mathcal{U}$.}
\end{figure}

\subsection{An illustrative example}\label{se:example}
We now demonstrate the integration of NODEs into an encoder-decoder architecture for learning PDE solution operators with a concrete example; a general framework is detailed in Section \ref{se: architecture}. Consider equation~\eqref{e_pde} in one spatial dimension ($d = 1$).  
The spatial operator takes the following form for a fixed coefficient field \( a(t,x) \) and a nonlinear function $R$:
\[
\mathcal{L}[a](u)(t,x) = -\nabla \cdot \big(a(t,x)\nabla u(t,x)\big) + R\big(u(t,x)\big).
\]  
The boundary condition \( u_b \) is fixed, while the initial condition is treated as a parameter of the PDE, given by \( v = u_0 \in \mathcal{C}(\Omega) \).  
This example aims to learn the initial-value-to-solution operator
\[
\Psi_{\mathrm{initial}}^\dagger: u_0 \mapsto u.
\]

First, we choose the latent spaces as
$\mathcal{V}_h= \mathbb{R}^{N_x}$ and $\mathcal{U}_h = \mathcal{C}([0,T];\mathbb{R}^{N_x})$ with some $N_x \in \mathbb{N}_+$.
The encoding and decoding steps are respectively realized by a uniform finite-difference grid and by a $P_1$ finite element interpolation. In particular, let $\{x_i\}_{i=1}^{N_x}$  be the uniform mesh of $\Omega$ and $\alpha_i $ the $P_1$-FEM basis centered at $x_i$. Then,
the resulting encoder-decoder architecture is given by:
\[ \textbf{Architecture}
\left\{
\begin{aligned}
	&\text{Encoder $E_{\mathcal{V}}$: } \;
	u_0 \;\mapsto\; U_{0,h} = \big(u_{0}(x_i)\big)_{i=1}^{N_x} \in \mathbb{R}^{N_x} ,
	 \\[0.6em]
	&\text{NODE surrogate $\psi_{\theta}$: } U_{0,h} \mapsto U_{\theta}, \; \text{with }
	\begin{cases}
		\dot{U}_{\theta}(t) = \mathcal{NN}_{\theta}(U_{\theta}(t),t), \\[0.4em]
		U_{\theta}(0) = U_{0,h},
	\end{cases} \\[0.6em]
	&\text{Decoder $D_{\mathcal{U}}$: } \;
	u(t,x) \;=\; \sum_{i=1}^{N_x}  \big(U_{\theta}(t)\big)_i \,\alpha_i(x).\;  \\
\end{aligned}
\right.
\]
Here, $\mathcal{NN}_{\theta}$ denotes a neural network function with parameters $\theta$ that are independent of $t$, which marks a
key distinction from the vanilla NODE formulation in \eqref{eq:trad-NODE}.  The specialized structures of $\mathcal{NN}_{\theta}$, designed to incorporate PDE information, are demonstrated in \eqref{eq:NODE_pe3} and \eqref{eq:NODE_ns}.

To train the model, we generate a dataset using an FDM and optimize the associated loss function.
Let $N_t$ denote the number of temporal discretization steps in FDM, with $t_j$ ($j=1,\ldots,N_t$) representing the discrete time nodes. Similarly, let $N_v$ denote the number of input samples, and $u_0^k$ ($k=1,\ldots,N_v$) their corresponding initial inputs.
The training setting is summarized as follows:
\[
{\small
\textbf{Training:}\;
\left\{
\begin{aligned}
	&\text{Dataset: } \begin{cases}
		\text{Features: }  U_{0,h}^k \in \mathbb{R}^{N_x}, \; \text{obtained from encoding } u_0^k;\\[0.4em]
		\text{Labels: } U_h^k \in \mathbb{R}^{N_x \times N_t}, \; \text{obtained from a FDM discretization of \eqref{e_pde};}
	\end{cases}\\[0.8em]
	&\text{Loss function: } \;
	L(\theta) = \frac{1}{N_v N_x N_t}\sum_{k=1}^{N_v}\sum_{i=1}^{N_x}\sum_{j=1}^{N_t} 
	\left| \big(U_{\theta}^k(t_j)\big)_i - U_h^k(x_i,t_j) \right|^2 \;+\; \mathcal{R}(\theta).
\end{aligned}
\right.
}
\]
Here, $U_{\theta}^k$ denotes the solution of the NN surrogate with initial condition $U_{0,h}^k$, and $\mathcal{R}(\theta)$ is a general regularization term.

The training procedure is carried out in an offline setting. Once the optimal parameter $\theta^*$ is obtained, the inference of the solution to \eqref{e_pde} for a new initial condition $u_0$ is performed by reusing the previously defined architecture.

\color{black}

\subsection{Error analysis of encoder-decoder networks}\label{se:error_analysis}
This subsection is dedicated to the error estimate of the general encoder-decoder networks presented in Section \ref{sec:general_encoder-decoder}, i.e., $\Psi_{\theta} - \Psi^{\dagger}$.  	
To this end, we first make the following assumptions on spaces, encoders/decoders, and the objective mapping $\Psi^\dagger$.

\begin{assumption}\label{ass1}
	Let $\mathcal{V}$, $\mathcal{U}$, $\mathcal{V}_h$, and $\mathcal{U}_h$ be Banach spaces. We assume that
	\begin{enumerate}
		\item (\textbf{Linearity}) The encoder \( E_{\mathcal{V}} \colon \mathcal{V} \to \mathcal{V}_h \) and the decoder \( D_{\mathcal{U}} \colon \mathcal{U}_h \to \mathcal{U} \) are bounded linear operators that are invertible in the generalized sense (see Definition~1.38 in \cite{boichuk2016generalized}).
		
		\item (\textbf{Generalized inversion}) The decoder \( D_{\mathcal{V}} \colon \mathcal{V}_h \to \mathcal{V} \) and the encoder \( E_{\mathcal{U}} \colon \mathcal{U} \to \mathcal{U}_h \) are generalized inverses of \( E_{\mathcal{V}} \) and \( D_{\mathcal{U}} \), respectively, in the sense that (see Definition~1.38 and Equation~(1.7) in \cite{boichuk2016generalized})
		\[
		D_{\mathcal{V}} \circ E_{\mathcal{V}} \circ D_{\mathcal{V}} = D_{\mathcal{V}}, 
		\qquad
		D_{\mathcal{U}} \circ E_{\mathcal{U}} \circ D_{\mathcal{U}} = D_{\mathcal{U}}.
		\]
		
		\item (\textbf{Continuity}) The operator \( \Psi^{\dagger} \colon \mathcal{V} \to \mathcal{U} \) is \(\beta\)-H{\"o}lder continuous with H{\"o}lder constant \( L_{\Psi^{\dagger}} \) for some \( \beta \in (0,1] \).
		
		\item (\textbf{Universal approximation property})  
		Define the operator \( \psi \colon \mathcal{V}_h \to \mathcal{U}_h \) by
		\[
		\psi(\zeta) = E_{\mathcal{U}} \circ \Psi^{\dagger} \circ D_{\mathcal{V}}(\zeta).
		\]
		For any compact subset \( \mathcal{K} \subset \mathcal{V}_h \) and any \( \epsilon > 0 \), there exists a neural network \( \psi_{\theta} \colon \mathcal{V}_h \to \mathcal{U}_h \), with appropriate architecture and parameters \( \theta \), such that
		\[
		\|\psi_{\theta}(v) - \psi(v)\|_{\mathcal{U}_h} \leq \epsilon,
		\qquad \forall v \in \mathcal{K}.
		\]
	\end{enumerate}
\end{assumption}
Then, we define the following two crucial \textit{consistency errors} of the encoding$-$decoding schemes:
\begin{align}
	&d_{1} (v) \coloneqq   \| D_{\mathcal{V}}\circ E_{\mathcal{V}} (v) -v \|_{\mathcal{V}}, \qquad\forall v\in \mathcal{V}  \label{eq:distances_1},\\
	&d_{2} (u) \coloneqq \|D_{\mathcal{U}} \circ E_{\mathcal{U}} (u) -u  \|_{\mathcal{U}},  \qquad\!\forall u\in \mathcal{U}\label{eq:distances_2}.
\end{align}

Before presenting the main error estimate results, we provide an illustrative example 
(see the remark below) that meets all of the above assumptions. Moreover, under 
additional regularity conditions on $v$ and $u$, its consistency errors vanish 
as the latent space is refined.
\begin{remark}\label{rem:example}
	Consider the \emph{stationary} reaction–diffusion equation posed on the torus \( \mathbb{T}^d \):
	\[
	-\Delta u + c(x)\,u = f(x), \quad  x \in \mathbb{T}^d,
	\]
	where \( c \in \mathcal{C}^{0,\alpha}(\mathbb{T}^d) \) is uniformly positive and \( f \in \mathcal{C}^{0,\alpha}(\mathbb{T}^d) \) for some \( \alpha \in (0,1] \) \footnote{For any \(k \in \mathbb{Z}_+\) and \(\alpha \in (0,1]\), the H{\"o}lder space \(\mathcal{C}^{k,\alpha}(\mathbb{T}^d)\) is defined as
		\[
		\mathcal{C}^{k,\alpha}(\mathbb{T}^d)
		\coloneqq
		\bigl\{\, u \in \mathcal{C}^k(\mathbb{T}^d) \;\colon\; \|u\|_{\mathcal{C}^{k,\alpha}} < \infty \,\bigr\},
		\]
		where the H{\"o}lder norm is given by
		\[
		\|u\|_{\mathcal{C}^{k,\alpha}}
		\coloneqq
		\sum_{\|I\|_{\ell^1} \le k} \|\partial^I u\|_{L^\infty(\mathbb{T}^d)}
		\;+\;
		\sum_{\|I\|_{\ell^1} = k}
		\sup_{\substack{x,y \in \mathbb{T}^d \\ x \neq y}}
		\frac{\bigl| \partial^I u(x) - \partial^I u(y) \bigr|}{\|x - y\|^\alpha}.
		\]
		Here, the notation \(\partial^I u\) represents the partial derivative of \(u\) associated with the multi-index \(I \in \mathbb{Z}_+^d\).}.
	By the classical Schauder estimate \cite[Thm.~4.3.2]{krylov1996lectures}, 
	together with the Sobolev estimate for elliptic equations 
	\cite[Thm.~1(ii), Ch.~5.1]{krylov2024lectures} and Morrey's inequality \cite[Thm.~4.12 Part II]{adams2003sobolev}, 
	the unique solution $u$ satisfies
	\[
	u \in \mathcal{C}^{2,\alpha}(\mathbb{T}^d),
	\qquad
	\|u\|_{\mathcal{C}^{2,\alpha}} \leq C_1 \|f\|_{\mathcal{C}^{0,\alpha}},\qquad
	\|u\|_{\mathcal{C}^1(\mathbb{T}^d)} \leq C_2 \|f\|_{\mathcal{C}(\mathbb{T}^d)},
	\]
	where $C_1$ and $C_2$ are constants independent of $f$.
	Therefore, the solution map \( \Psi^\dagger\colon f \mapsto u \) is Lipschitz (and hence H{\"o}lder) continuous from \( \mathcal{C}^{0,\alpha}(\mathbb{T}^d) \) to \( \mathcal{C}^{2,\alpha}(\mathbb{T}^d) \), and from \( \mathcal{C}(\mathbb{T}^d) \) to \( \mathcal{C}^1(\mathbb{T}^d) \).
	Then we consider the following setups for the encoder-decoder network:
	\begin{itemize}
		\item Let \( \mathcal{U} = \mathcal{V} = \mathcal{C}(\mathbb{T}^d) \), and set \( \mathcal{U}_h = \mathcal{V}_h = \mathbb{R}^{N^d} \) for mesh size \( h = 1/N \ll 1 \);
		\item Define the encoder \( E_{\mathcal{V}} \) and decoder \( D_{\mathcal{U}} \) via finite-difference stencils and interpolation by $Q_1$ finite-element basis on a uniform grid with step size $h$;
		\item Take \( D_{\mathcal{V}} = D_{\mathcal{U}} \) and \( E_{\mathcal{U}} = E_{\mathcal{V}} \), and we can verify that the generalized inversion condition is satisfied (See Lemma~\ref{lemma:inverse-stationary} and Remark~\ref{rem:fd-fem} for a rigorous proof).
	\end{itemize}
	Under this construction, it follows directly from the Lipschitz continuity of \( \Psi^{\dagger} \) and continuity of the encoder/decoder pair that the operator
	\[
	\psi(v) = E_{\mathcal{U}} \circ \Psi^{\dagger} \circ D_{\mathcal{V}}(v)
	\]
	is continuous. By the classical universal approximation theorem (e.g., \cite{cybenko1989approximation}), this continuity ensures that \( \psi \) can be uniformly approximated by shallow neural networks \footnote{A shallow neural network mapping \(\mathbb{R}^n\) to \(\mathbb{R}^m\) is given by
		\[
		f_{\text{shallow}}(x;\,\theta) \coloneqq W_2 \, \sigma(W_1 x + b),
		\]
		where \(\sigma\) denotes the activation function applied componentwise, and the parameters $\theta=(W_1,W_2,b)$ with \(W_1 \in \mathbb{R}^{P \times n}\), \(W_2 \in \mathbb{R}^{m \times P}\), and \(b \in \mathbb{R}^P\). Here, \(P\in \mathbb{N}\) represents the number of neurons in the hidden layer. The universal approximation property of this neural network is understood in the sense that the number of neurons \(P\) tends to infinity.
	} on any compact subset of \( \mathcal{V}_h \), thereby validating Assumption~\ref{ass1}(4).
	Finally, the consistency errors associated with the encoder and decoder satisfy the following estimates: 
	\begin{equation*}
		\sup_{\begin{subarray}{c}
				v \in \mathcal{C}^{0,\alpha}(\mathbb{T}^d) \\
				\|v\|_{\mathcal{C}^{0,\alpha}} \le 1
		\end{subarray}}
		d_1(v) \; \leq \;  C h^{\alpha}, \qquad   \sup_{\begin{subarray}{c}
				u \in \mathcal{C}^{1}(\mathbb{T}^d) \\
				\|u\|_{\mathcal{C}^{1}} \le 1
		\end{subarray}}
		d_2(u) \;\le\; C h,
	\end{equation*}
	where $C>0$ is a constant independent of $h$. 
	In particular, the consistency errors vanish at the rate $h^{\alpha}$ and can thus be made arbitrarily small as $h \to 0$.
	
\end{remark}

The main result on the error analysis of the generic encoder-decoder network is presented in the following theorem.

\begin{theorem}\label{thm:main}
	Let Assumption~\ref{ass1} hold. Then, for any compact subset \( \mathcal{K} \subset \mathcal{V}_h \) and any \( \epsilon > 0 \), there exists a neural network function \( \psi_{\theta} \colon \mathcal{V}_h \to \mathcal{U}_h \), with a suitable architecture and parameters \( \theta \), such that for any \( v \in \mathcal{V} \) satisfying \( E_{\mathcal{V}}(v) \in \mathcal{K} \), we have
	\begin{equation}\label{eq:main}
	\begin{aligned}
		\| \Psi_{\theta}(v) - \Psi^{\dagger}(v) \|_{\mathcal{U}} \,\leq\,
		\underbrace{L_{\Psi^{\dagger}} \, d_1(v)^{\beta}}_{\textnormal{encoding-decoding error in } \mathcal{V}} 
		\;&+\;
		\underbrace{d_2\big(\Psi^{\dagger} \circ D_{\mathcal{V}} \circ E_{\mathcal{V}}(v)\big)}_{\textnormal{encoding-decoding error in } \mathcal{U}} 
		\;\\
		&+\;
		\underbrace{\|D_{\mathcal{U}}\| \, \epsilon}_{\textnormal{NN approximation error}},
	\end{aligned}
	\end{equation}
	where \( \Psi_{\theta} \) is defined in~\eqref{eq:Psi_theta}, and the functions \( d_1 \) and \( d_2 \) are given in~\eqref{eq:distances_1}–\eqref{eq:distances_2}.
\end{theorem}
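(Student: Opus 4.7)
The plan is a standard triangle-inequality decomposition in which two carefully chosen intermediate elements isolate each of the three error contributions appearing in \eqref{eq:main}. Setting $\zeta := E_{\mathcal{V}}(v)$ (which lies in $\mathcal{K}$ by hypothesis) and $w := \Psi^{\dagger} \circ D_{\mathcal{V}}(\zeta)$, I would write
\[
\Psi_{\theta}(v) - \Psi^{\dagger}(v) \,=\, D_{\mathcal{U}}\bigl(\psi_{\theta}(\zeta) - \psi(\zeta)\bigr) \,+\, \bigl[D_{\mathcal{U}} \circ E_{\mathcal{U}}(w) - w\bigr] \,+\, \bigl[\Psi^{\dagger} \circ D_{\mathcal{V}} \circ E_{\mathcal{V}}(v) - \Psi^{\dagger}(v)\bigr],
\]
where the identity $D_{\mathcal{U}}(\psi(\zeta)) = D_{\mathcal{U}}(E_{\mathcal{U}}(w))$ follows directly from the definition $\psi = E_{\mathcal{U}} \circ \Psi^{\dagger} \circ D_{\mathcal{V}}$. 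Taking norms in $\mathcal{U}$ and applying the triangle inequality then reduces the theorem to estimating these three terms separately.

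The three bounds match the hypotheses one by one. For the first term, Assumption~\ref{ass1}(4) applied to $\mathcal{K}$ with tolerance $\epsilon$ supplies a neural network $\psi_{\theta}$ satisfying $\sup_{\zeta \in \mathcal{K}} \|\psi_{\theta}(\zeta) - \psi(\zeta)\|_{\mathcal{U}_h} \le \epsilon$; combined with the boundedness of $D_{\mathcal{U}}$ from Assumption~\ref{ass1}(1), this yields the neural-network approximation bound $\|D_{\mathcal{U}}\|\,\epsilon$. For the second term, the definition \eqref{eq:distances_2} of $d_2$ evaluated at $w$ produces exactly $d_2\bigl(\Psi^{\dagger} \circ D_{\mathcal{V}} \circ E_{\mathcal{V}}(v)\bigr)$. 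For the third term, the $\beta$-H{\"o}lder continuity of $\Psi^{\dagger}$ from Assumption~\ref{ass1}(3) together with \eqref{eq:distances_1} yields $L_{\Psi^{\dagger}} d_1(v)^{\beta}$. Summing the three contributions gives \eqref{eq:main}.

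The argument is essentially a routine triangle-inequality split, so I do not anticipate a serious obstacle; the only mildly delicate point is choosing the intermediate elements so that the compositions $D_{\mathcal{U}} \circ E_{\mathcal{U}}$ and $D_{\mathcal{V}} \circ E_{\mathcal{V}}$ fall on the appropriate sides of the targets $w \in \mathcal{U}$ and $v \in \mathcal{V}$, respectively, so that each cross term reproduces the consistency error $d_1$ or $d_2$ verbatim. It is worth noting that the generalized-inversion property from Assumption~\ref{ass1}(2) does not appear explicitly in the chain of inequalities; rather, it is the structural justification that $d_1$ and $d_2$ are the correct consistency errors (the encoder/decoder pairs act as near-projections) and it legitimizes treating $\psi = E_{\mathcal{U}} \circ \Psi^{\dagger} \circ D_{\mathcal{V}}$ as the natural target operator that $\psi_{\theta}$ is asked to approximate on $\mathcal{V}_h$.
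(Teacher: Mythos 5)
Your proposal is correct and follows essentially the same triangle-inequality decomposition as the paper: with $v_1 = D_{\mathcal{V}}\circ E_{\mathcal{V}}(v)$ and $w=\Psi^{\dagger}(v_1)$, your three terms coincide with the paper's $\gamma_1$, $\gamma_{2,2}$, and $\gamma_{2,1}$, and each is bounded by the same hypothesis (Assumption~\ref{ass1}(4) plus boundedness of $D_{\mathcal{U}}$, the definition of $d_2$, and $\beta$-H{\"o}lder continuity with $d_1$, respectively). The only cosmetic difference is that your choice of intermediates makes the cancellations immediate, whereas the paper's nested decomposition invokes Assumption~\ref{ass1}(2) to establish $D_{\mathcal{V}}\circ E_{\mathcal{V}}(v_1)=v_1$ before its cross terms vanish; your remark that the generalized-inversion property does not enter the inequality chain itself is accurate.
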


\begin{proof}
	Fix any compact set $\mathcal{K}\subset \mathcal{V}_h$ and $\epsilon >0$.
	Let \(v\in \mathcal{V}\) such that $E_{\mathcal{V}}(v)\in \mathcal{K}$. We first decompose the difference
	\[
	\Psi_{\theta}(v) - \Psi^{\dagger}(v) = D_{\mathcal{U}}\circ\psi_{\theta}\circ E_{\mathcal{V}}(v) - \Psi^{\dagger}(v)
	\]
	as follows:
	\begin{equation}\label{eq:proof_1}
		\Psi_{\theta}(v) - \Psi^{\dagger}(v) := \gamma_1(v) + \gamma_2(v),
	\end{equation}
	where
	\begin{align*}
		\gamma_1(v) := D_{\mathcal{U}}\circ\psi_{\theta}\circ E_{\mathcal{V}}(v) - D_{\mathcal{U}}\circ\psi\circ E_{\mathcal{V}}(v), \quad
		\gamma_2(v) := D_{\mathcal{U}}\circ\psi\circ E_{\mathcal{V}}(v) - \Psi^{\dagger}(v).
	\end{align*}
	By Assumption~\ref{ass1}(4), the neural network approximation error satisfies
	\begin{equation}\label{eq:proof_2}
		\|\gamma_1(v)\| \le \|D_{\mathcal{U}}\| \, \epsilon.
	\end{equation}
	
	Next, define
	\[
	v_1 \coloneqq D_{\mathcal{V}}\circ E_{\mathcal{V}}(v) \quad \text{and} \quad u_1 \coloneqq D_{\mathcal{U}}\circ E_{\mathcal{U}}\circ \Psi^{\dagger}(v_1).
	\]
	Since \(v_1\) and \(u_1\) lie in the ranges of \(D_{\mathcal{V}}\) and \(D_{\mathcal{U}}\), respectively, Assumption~\ref{ass1}(2) implies that
	\begin{equation*}
		D_{\mathcal{V}}\circ E_{\mathcal{V}}(v_1) = v_1 \quad \text{and} \quad D_{\mathcal{U}}\circ E_{\mathcal{U}}(u_1) = u_1.
	\end{equation*}
	We now further decompose \(\gamma_2(v)\) as
	\begin{equation}\label{eq:proof_3}
		\gamma_2(v) = D_{\mathcal{U}}\circ\psi\circ E_{\mathcal{V}}(v) - \Psi^{\dagger}(v)
		= \gamma_{2,1}(v) + \gamma_{2,2}(v),
	\end{equation}
	where
	\begin{align*}
		\gamma_{2,1}(v) &:= D_{\mathcal{U}}\circ\psi\circ E_{\mathcal{V}}(v)
		- D_{\mathcal{U}}\circ\psi\circ E_{\mathcal{V}}(v_1)
		- \Bigl( \Psi^{\dagger}(v) - \Psi^{\dagger}(v_1) \Bigr), \\
		\gamma_{2,2}(v) &:= D_{\mathcal{U}}\circ\psi\circ E_{\mathcal{V}}(v_1)
		- u_1 - \Bigl( \Psi^{\dagger}(v_1) - u_1 \Bigr).
	\end{align*}
	Using the definition of \(\psi\) and the fact that \(D_{\mathcal{V}}\circ E_{\mathcal{V}}(v_1) = v_1\), we obtain
	\begin{equation}\label{eq:proof_4}
		\|\gamma_{2,1}(v)\|_{\mathcal{U}} = \Bigl\| \Psi^{\dagger}(v) - \Psi^{\dagger}(v_1) \Bigr\|_{\mathcal{U}} \le L_{\Psi^{\dagger}}\, d_1(v)^{\beta},
	\end{equation}
	where the inequality follows from Assumption~\ref{ass1}(3). Similarly, by the definitions of \(\psi\) and \(u_1\), we have
	\begin{equation*}
		\|\gamma_{2,2}(v)\|_{\mathcal{U}} = \Bigl\| \Psi^{\dagger}(v_1) - D_{\mathcal{U}}\circ E_{\mathcal{U}}\bigl( \Psi^{\dagger}(v_1) \bigr) \Bigr\|_{\mathcal{U}}.
	\end{equation*}
	Recalling the definition of \(d_2\) from \eqref{eq:distances_2}, we deduce
	\begin{equation}\label{eq:proof_5}
		\|\gamma_{2,2}(v)\|_{\mathcal{U}} = d_2\Bigl(\Psi^{\dagger}(v_1)\Bigr)
		= d_2\Bigl( \Psi^{\dagger}\circ D_{\mathcal{V}}\circ E_{\mathcal{V}}(v) \Bigr).
	\end{equation}
	Then the desired result \eqref{eq:main} follows from \eqref{eq:proof_1}-\eqref{eq:proof_5} directly.
\end{proof}

\begin{corollary}\label{cor:example}
	Under the setting of Remark~\ref{rem:example}, fix the regularity order $\alpha\in (0,1)$ and fix a discretization step size \(h = 1/N\) with \(N\in\mathbb{N}_+\).  Then there exists a shallow neural network 
	\( \psi_{\theta}\colon \mathbb{R}^{N^d} \;\to\;\mathbb{R}^{N^d},\)
	with a sufficiently large number of neurons and appropriately chosen parameters \(\theta\), such that for every 
	\[
	f \in C^{0,\alpha}(\mathbb{T}^d),
	\qquad
	\|f\|_{C^{0,\alpha}(\mathbb{T}^d)}\le1,
	\]
	the following error bound holds:
	\[
	\bigl\|\Psi_{\theta}(f) - \Psi^{\dagger}(f)\bigr\|_{\mathcal{C}(\mathbb{T}^d)}
	\;\le\;
	C\,h^{\alpha},
	\]
	where the constant \(C>0\) is independent of both \(h\) and \(f\).
\end{corollary}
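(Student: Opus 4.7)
The plan is to apply Theorem~\ref{thm:main} to the concrete encoder--decoder setup of Remark~\ref{rem:example}, with $\mathcal{V}=\mathcal{U}=\mathcal{C}(\mathbb{T}^d)$, $\mathcal{V}_h=\mathcal{U}_h=\mathbb{R}^{N^d}$, and $\beta=1$ (Lipschitz continuity of $\Psi^\dagger$ from $\mathcal{C}(\mathbb{T}^d)$ to $\mathcal{C}^1(\mathbb{T}^d)$, hence also to $\mathcal{C}(\mathbb{T}^d)$). Assumption~\ref{ass1}(1)--(3) are already discussed in Remark~\ref{rem:example}, and item~(4) follows from the classical universal approximation theorem of Cybenko applied to the continuous map $\psi=E_\mathcal{U}\circ\Psi^\dagger\circ D_\mathcal{V}$ on any compact subset of $\mathbb{R}^{N^d}$. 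Thus, once an appropriate compact set $\mathcal{K}$ is fixed, the theorem produces a shallow network $\psi_\theta$ with the three-term bound in~\eqref{eq:main}.

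First I would specify the compact set. For $f\in \mathcal{C}^{0,\alpha}(\mathbb{T}^d)$ with $\|f\|_{\mathcal{C}^{0,\alpha}}\le 1$, the grid-sampling encoder satisfies $\|E_\mathcal{V}(f)\|_{\ell^\infty}\le \|f\|_{\mathcal{C}(\mathbb{T}^d)}\le 1$, so the image $\{E_\mathcal{V}(f)\}$ is contained in the closed unit ball of $\mathbb{R}^{N^d}$, which is compact. Taking $\mathcal{K}$ to be this ball makes the compactness hypothesis of Theorem~\ref{thm:main} uniformly valid over the input class.

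Next I would bound each of the three terms in~\eqref{eq:main} by $C h^{\alpha}$. (i) For the encoding--decoding error in $\mathcal{V}$, Remark~\ref{rem:example} yields $d_1(f)\le C h^{\alpha}$ uniformly in $f$, and multiplication by the Lipschitz constant $L_{\Psi^\dagger}$ preserves this rate. (ii) For the encoding--decoding error in $\mathcal{U}$, write $v_1\coloneqq D_\mathcal{V}\circ E_\mathcal{V}(f)$; since $D_\mathcal{V}$ is $Q_1$-interpolation of bounded nodal values, $\|v_1\|_{\mathcal{C}(\mathbb{T}^d)}\le \|f\|_{\mathcal{C}(\mathbb{T}^d)}\le 1$, so the Sobolev/Morrey estimate of Remark~\ref{rem:example} gives $\|\Psi^\dagger(v_1)\|_{\mathcal{C}^1(\mathbb{T}^d)}\le C_2$. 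Applying the $\mathcal{C}^1$-consistency bound $d_2(u)\le Ch\|u\|_{\mathcal{C}^1}$ to $u=\Psi^\dagger(v_1)$ yields $d_2(\Psi^\dagger(v_1))\le Ch\le Ch^{\alpha}$, since $h\le 1$ and $\alpha\in(0,1)$. (iii) For the network error, invoke Assumption~\ref{ass1}(4) with tolerance $\epsilon=h^{\alpha}/\|D_\mathcal{U}\|$ to obtain a shallow network $\psi_\theta$ with $\|D_\mathcal{U}\|\,\epsilon\le h^{\alpha}$; the boundedness of $\|D_\mathcal{U}\|$ here is uniform because, for fixed $h$, it is a bounded linear operator on the finite-dimensional space $\mathbb{R}^{N^d}$. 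Summing the three contributions gives the claimed bound with a constant $C$ independent of $h$ and $f$.

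The main obstacle I anticipate is the second term: the consistency error $d_2$ must be evaluated at $\Psi^\dagger(v_1)$ rather than at $\Psi^\dagger(f)$, and one must ensure that the $\mathcal{C}^1$-norm of this image stays uniformly bounded in terms of $\|f\|_{\mathcal{C}^{0,\alpha}}\le 1$ rather than $\|v_1\|_{\mathcal{C}^{0,\alpha}}$ (which could blow up under $Q_1$-interpolation as $h\to 0$). Avoiding this pitfall is precisely the reason for invoking the $\mathcal{C}(\mathbb{T}^d)\to \mathcal{C}^1(\mathbb{T}^d)$ regularization estimate (rather than the sharper Schauder estimate) to control $\Psi^\dagger(v_1)$, so that the subsequent $\mathcal{C}^1$-interpolation rate $h$ can be absorbed into $h^{\alpha}$.
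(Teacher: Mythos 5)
Your proposal is correct and follows essentially the same route as the paper's proof: both apply Theorem~\ref{thm:main} with $\beta=1$, bound $d_1(f)^{\beta}\lesssim h^{\alpha}$ via the Remark~\ref{rem:example} consistency estimate, control the second term by first bounding $\|D_{\mathcal{V}}\circ E_{\mathcal{V}}(f)\|_{\mathcal{C}(\mathbb{T}^d)}$ and then invoking the $\mathcal{C}(\mathbb{T}^d)\to\mathcal{C}^1(\mathbb{T}^d)$ elliptic regularity (rather than the Schauder estimate) so that $d_2\lesssim h\le h^{\alpha}$, and choose the universal-approximation tolerance $\epsilon$ of order $h^{\alpha}$ for the network term. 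Your explicit identification of the pitfall in the second term (the $\mathcal{C}^{0,\alpha}$ norm of the interpolant could blow up as $h\to 0$) matches exactly the reason the paper routes through the $\mathcal{C}\to\mathcal{C}^1$ estimate.
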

\begin{proof}
	Throughout the proof, the notation $\lesssim$ indicates an inequality $\leq$ 
	up to a multiplicative constant on the right-hand side that is independent 
	of both $h$ and $f$.
	Recall the estimate \eqref{eq:main} from Theorem~\ref{thm:main}.

	For the first term $L_{\Psi^{\dagger}} \, d_1(v)^{\beta}$, we invoke the Lipschitz 
	continuity of $\Psi^\dagger$ (with $\beta=1$), together with the consistency-error 
	bound for $d_1$ from Remark~\ref{rem:example}, to obtain
	\begin{equation*}
		L_{\Psi^{\dagger}} \, d_1(f)^{\beta} \;\lesssim\; h^{\alpha}.
	\end{equation*}
	
	For the second term, $d_2\big(\Psi^{\dagger} \circ D_{\mathcal{V}} \circ E_{\mathcal{V}}(f)\big)$, 
	recalling the definition of encoder-decoder operators in Remark \ref{rem:example}, we deduce
	\[
	\|D_{\mathcal{V}} \circ E_{\mathcal{V}}(f)\|_{\mathcal{C}(\mathbb{T}^d)} 
	\;\lesssim\; \|f\|_{\mathcal{C}(\mathbb{T}^d)} 
	\;\lesssim\; \|f\|_{\mathcal{C}^{0,\alpha}(\mathbb{T}^d)}.
	\]
	By elliptic regularity in Sobolev spaces \cite[Thm.~1 (ii), Chp.~5.1]{krylov2024lectures} together with Morrey’s inequality \cite[Thm.~4.12 Part II]{adams2003sobolev}, it follows that
	\begin{equation*}
		\|\Psi^{\dagger} \circ D_{\mathcal{V}} \circ E_{\mathcal{V}}(f)\|_{\mathcal{C}^1(\mathbb{T}^d)}\;\lesssim \|D_{\mathcal{V}} \circ E_{\mathcal{V}}(f)\|_{\mathcal{C}(\mathbb{T}^d)}\; 
		\;\lesssim\; \|f\|_{\mathcal{C}^{0,\alpha}(\mathbb{T}^d)}.
	\end{equation*}
	Hence, the consistency error for $d_2$ in Remark \ref{rem:example} leads
	\begin{equation*}
		d_2\!\left(\Psi^{\dagger} \circ D_{\mathcal{V}} \circ E_{\mathcal{V}}(f)\right)  
		\;\lesssim\; h.
	\end{equation*}
	
	For the third term, $\|D_{\mathcal{U}}\|\,\epsilon$, note first that $D_{\mathcal{U}}$ is a bounded 
	operator with norm $1$. Moreover, $E_{\mathcal{V}}(f)$ takes values in a fixed compact set in 
	$\mathbb{R}^{N^d}$, since $f$ is bounded and $E_{\mathcal{V}}$ is continuous. The mapping 
	$E_{\mathcal{U}} \circ \Psi^{\dagger}\circ D_{\mathcal{V}}$ restricted to this compact set can be 
	approximated arbitrarily well by a shallow neural network, thanks to the universal approximation 
	theorem~\cite{cybenko1989approximation}. Taking $\epsilon = h^{\alpha}$, we conclude that
	\[
	\|D_{\mathcal{U}}\|\,\epsilon \;\lesssim \; h^{\alpha}.
	\]
	Combining the upper bounds on all three terms yields the desired estimate.
\end{proof}

For stationary PDEs, Remark \ref{rem:example} and Corollary \ref{cor:example} provide a canonical framework for constructing and analyzing encoder-decoder networks. Nevertheless, extending this analysis framework to non-stationary PDEs is more delicate. A major challenge is temporal discretization: fully discrete schemes often require careful stability control. As an alternative, one may adopt a time-continuous (semi-discrete) formulation, but this forces the latent spaces to be infinite-dimensional functional spaces, demanding more advanced universal approximation results in such settings.

Recently,
approximation guarantees for dynamical systems via NODEs has been studied in \cite{li2024universal}, providing complementary insights into ODE-driven mappings between function spaces. Leveraging this advance, we introduce the NODE-ONet framework for PDEs in Section \ref{se: architecture}. Although the error analysis for an NODE-ONet depends intricately on the specific PDE under consideration and is technically involved, the overall framework adheres to Theorem \ref{thm:main}. A rigorous investigation of approximation errors is beyond the scope of this work and will be pursued in future studies. Consequently, we focus here on the algorithmic design and the numerical investigation of NODE-ONets for various PDEs. 

\section{The NODE-ONet Framework}\label{se: architecture}
This section introduces the NODE-ONet framework, within the general architecture of the previously discussed encoder-decoder networks, to approximate the mapping from the parameters of \eqref{e_pde} to its solution $u$.  

\subsection{Stationary case}\label{se:static}
Before addressing the general case \eqref{e_pde}, we first consider its stationary counterpart:
\begin{equation}\label{e_pde_stationary}
	\left\{
	\begin{aligned}
		&\mathcal{L}[a](u)(x) = f(x), &\quad \forall x \in \Omega,\\
		&\mathcal{B}u(x) = u_b(x), &\quad \forall x \in \partial\Omega,
	\end{aligned}
	\right.
\end{equation}
to provide some insights into the design of the NODE-ONets. 
Here, $a$ and $f$ depend only on space. To simplify and unify the analysis, we assume $\Omega\subset \mathbb{R}^d$ is compact and $u_b$ is fixed. Thus, the relevant parameters in this setting are $a$ and $f$. Let $\mathcal{C}(\Omega)$ be the space of continuous functions on $\Omega$. We assume that the parameters $a, f \in \mathcal{E}\subset\mathcal{C}(\Omega)$ and that the stationary equation \eqref{e_pde_stationary} admits a unique classical solution for all $a, f\in\mathcal{E}$. Consequently, we assume that  $\mathcal{V}$ and $\mathcal{U}$ are subspaces of $\mathcal{C}(\Omega)$.  

As outlined in Section~\ref{sec:general_encoder-decoder}, the key components of an encoder-decoder network involve determining suitable latent spaces $\mathcal{V}_h$ and $\mathcal{U}_h$, designing the neural network approximation $\psi_\theta$, and defining the encoder $E_\mathcal{V}$ and the decoder $D_{\mathcal{U}}$. 

First, the latent spaces are chosen as finite-dimensional Euclidean spaces with dimensions $d_{\mathcal{V}}$ and $d_{\mathcal{U}}$, respectively:
\begin{equation*}
	\mathcal{V}_h = \mathbb{R}^{d_{\mathcal{V}}}, \quad \mathcal{U}_h = \mathbb{R}^{d_{\mathcal{U}}}.
\end{equation*}
Let $\psi_\theta$ be a neural network such that Assumption \ref{ass1}(4) holds. Then, the encoder and the decoder for the stationary case are defined as follows:
\begin{enumerate}
	\item[(S1).] \textbf{Stationary Encoding (Space Discretization):}  
	Given any function $v\in\mathcal{C}(\Omega)$, the stationary encoder is defined as
	\[
	E^s_{\mathcal{V}}: \mathcal{C}(\Omega) \to \mathbb{R}^{d_{\mathcal{V}}}, \quad v \mapsto \bigl(L_{\ell}(v)\bigr)_{\ell=1}^{d_{\mathcal{V}}},
	\]
	where $L_{\ell}$ are bounded linear operators on $\mathcal{C}(\Omega)$. Therefore, by the Riesz representation theorem, each $L_{\ell}$ has an integral representation written as
	\[
	L_{\ell}(v) = \int_{\Omega} v \, d\mu_{\ell}, \quad \text{for some } \mu_{\ell} \in \mathcal{M}(\Omega),
	\]
	where $\mathcal{M}(\Omega)$ denotes the space of Radon measures supported in $\Omega$. Notably, if we take $\mu_{\ell} = \delta_{x_{\ell}}$, the operator reduces to evaluating $v$ at the point $x_{\ell}$.
	
	\item[(S2).] \textbf{Stationary Decoding (Reconstruction):}  
	Given $(\psi_j)_{j=1}^{d_{\mathcal{U}}}\in\mathbb{R}^{d_{\mathcal{U}}}$ the output of the neural network $\psi_\theta$, the stationary decoder $D^s_{\mathcal{U}}$ is defined as the following interpolation form:
	\[
	D^s_{\mathcal{U}}: \mathbb{R}^{d_{\mathcal{U}}} \to \mathcal{C}(\Omega), \quad (\psi_j)_{j=1}^{d_{\mathcal{U}}} \mapsto \sum_{j=1}^{d_{\mathcal{U}}} \alpha_j(x; \theta_{\alpha})\, \psi_j, \quad \text{for } x \in \Omega.
	\]
	Here, the set 
	\begin{equation}\label{eq:basis_decoder}
		\bm{\alpha}(x) := \{\alpha_j(x; \theta_{\alpha})\}_{j=1}^{d_{\mathcal{U}}} \in \mathbb{R}^{d_{\mathcal{U}}}
	\end{equation}
	represents either the values of a set of predefined spatial basis functions at $x$ (e.g., finite element or Fourier bases) or the output of a neural network $\mathcal{N}_{\theta_{\alpha}}: \Omega \to \mathbb{R}^{d_{\mathcal{U}}}$ parameterized by $\theta_{\alpha} \in \mathbb{R}^{p_\alpha}$.
\end{enumerate}

\begin{lemma}\label{lemma:inverse-stationary}
	Let $\mathcal{V}$ and $\mathcal{U}$ be non-empty closed subspaces of $\mathcal{C}(\Omega)$. The encoder $E^s_{\mathcal{V}}$ and the decoder $D^s_{\mathcal{U}}$, defined above, satisfy Assumption~\ref{ass1}(1). Moreover, their generalized inverses, $D^s_{\mathcal{V}}$ and $E^s_{\mathcal{U}}$, which satisfy Assumption~\ref{ass1}(2), are given by:
	\begin{equation*}
		D^s_{\mathcal{V}}(z) = \sum_{k=1}^{d_{\mathcal{V}}} z_{k} f_{k} , \quad \forall \,z\in \mathbb{R}^{d_{\mathcal{V}}}, \quad \textnormal{where } \int_{\Omega} f_{k} \,d\,\mu_{\ell} = \begin{cases}
			1, \quad \textnormal{if } k= \ell;\\
			0, \quad \textnormal{otherwise}.
		\end{cases}
	\end{equation*}
	and 
	\begin{equation*}
		E^s_{\mathcal{U}}(u) = \left( \int_{\Omega} u \,d\,m_i \right)_{i=1}^{d_{\mathcal{U}}} ,\; \forall \,u\in \mathcal{U}, \quad \textnormal{where } \int_{\Omega} \alpha_j \,d\,m_i = \begin{cases}
			1, \quad \textnormal{if }i =j;\\
			0, \quad \textnormal{otherwise}.
		\end{cases}
	\end{equation*}
\end{lemma}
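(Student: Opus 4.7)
The plan is to check each clause of Assumption~\ref{ass1}(1)–(2) directly from the given formulas, reducing everything to two pointwise identities:
\begin{equation*}
E^s_{\mathcal{V}} \circ D^s_{\mathcal{V}} = \mathrm{Id}_{\mathbb{R}^{d_{\mathcal{V}}}} \quad \text{and} \quad E^s_{\mathcal{U}} \circ D^s_{\mathcal{U}} = \mathrm{Id}_{\mathbb{R}^{d_{\mathcal{U}}}}.
\end{equation*}
Once these are in place, the two generalized-inverse relations $D^s_{\mathcal{V}} \circ E^s_{\mathcal{V}} \circ D^s_{\mathcal{V}} = D^s_{\mathcal{V}}$ and $D^s_{\mathcal{U}} \circ E^s_{\mathcal{U}} \circ D^s_{\mathcal{U}} = D^s_{\mathcal{U}}$ will follow by a single left-composition, which in turn certifies that $E^s_{\mathcal{V}}$ and $D^s_{\mathcal{U}}$ are invertible in the generalized sense of \cite{boichuk2016generalized}.

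First I would handle the routine verifications. Linearity of all four operators is immediate, since each is a finite linear combination of bounded linear functionals, Radon integrals, or fixed continuous profiles. For boundedness, $E^s_{\mathcal{V}}$ inherits its operator norm from the hypothesis in (S1) that each $L_\ell$ is bounded on $\mathcal{C}(\Omega)$; $D^s_{\mathcal{U}}$ and $D^s_{\mathcal{V}}$ are finite sums of fixed continuous functions on the compact set $\Omega$, so their operator norms are controlled by $\sup_{x\in\Omega} \sum_j |\alpha_j(x;\theta_\alpha)|$ and $\sum_k \|f_k\|_{\mathcal{C}(\Omega)}$ respectively; and $E^s_{\mathcal{U}}$ is bounded since each $m_i$ is a finite Radon measure.

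Next I would turn to the key computational step. Substituting the formula for $D^s_{\mathcal{V}}(z)$ into $E^s_{\mathcal{V}}$ gives
\begin{equation*}
E^s_{\mathcal{V}} \circ D^s_{\mathcal{V}}(z)
\;=\; \Bigl( \int_{\Omega} \textstyle\sum_{k} z_{k} f_{k} \, d\mu_{\ell} \Bigr)_{\ell=1}^{d_{\mathcal{V}}}
\;=\; \Bigl( \textstyle\sum_{k} z_{k}\, \delta_{k\ell} \Bigr)_{\ell=1}^{d_{\mathcal{V}}} \;=\; z,
\end{equation*}
where the second equality uses the prescribed biorthogonality $\int_{\Omega} f_{k} \, d\mu_{\ell} = \delta_{k\ell}$. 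An entirely parallel computation, using $\int_{\Omega} \alpha_j \, dm_i = \delta_{ij}$, gives $E^s_{\mathcal{U}} \circ D^s_{\mathcal{U}} = \mathrm{Id}_{\mathbb{R}^{d_{\mathcal{U}}}}$. Post-composing each of these identities on the left with $D^s_{\mathcal{V}}$ (resp.\ $D^s_{\mathcal{U}}$) yields the two relations in Assumption~\ref{ass1}(2), and hence Assumption~\ref{ass1}(1) as well, since a bounded linear operator admitting a bounded linear generalized inverse is by definition invertible in the generalized sense.

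The main obstacle I anticipate is not the chain of checks above but the tacit \emph{existence} of the biorthogonal families $\{f_k\}_{k=1}^{d_{\mathcal{V}}} \subset \mathcal{V}$ and $\{m_i\}_{i=1}^{d_{\mathcal{U}}} \subset \mathcal{M}(\Omega)$ satisfying the stated $\delta$-relations. Such families exist only if the restrictions $\{\mu_\ell|_{\mathcal{V}}\}$ are linearly independent as continuous functionals on $\mathcal{V}$, and $\{\alpha_j\}_{j=1}^{d_{\mathcal{U}}}$ are linearly independent in $\mathcal{C}(\Omega)$; this independence should be added as a standing assumption on the architecture. Granted these, I would pick any candidate family $\{g_k\} \subset \mathcal{V}$ on which the $\mu_\ell$ are linearly independent, form the Gram matrix $G_{k\ell} := \int_{\Omega} g_k \, d\mu_\ell$, verify its invertibility from the independence, and set $f_k := \sum_j (G^{-1})_{kj} g_j$; a dual Hahn–Banach argument on $\mathcal{C}(\Omega)^{\ast} = \mathcal{M}(\Omega)$ then constructs the measures $\{m_i\}$ by the analogous linear-algebraic inversion. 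With these biorthogonal families in hand, the remainder of the proof consists of the algebraic identities already described and is routine.
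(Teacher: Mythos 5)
Your proposal is correct and follows the same route the paper intends: the paper's proof is simply ``The proof follows from a straightforward calculation,'' and your argument is exactly that calculation spelled out, namely $E^s_{\mathcal{V}}\circ D^s_{\mathcal{V}}=\mathrm{Id}$ and $E^s_{\mathcal{U}}\circ D^s_{\mathcal{U}}=\mathrm{Id}$ via the biorthogonality relations, followed by left-composition to obtain the generalized-inverse identities. Your additional observation that the existence of the biorthogonal families $\{f_k\}$ and $\{m_i\}$ tacitly requires linear independence of $\{\mu_\ell|_{\mathcal{V}}\}$ and of $\{\alpha_j\}$ is a legitimate point the paper leaves implicit (it is verified only for the concrete finite-difference/$Q_1$ pairing in Remark~\ref{rem:fd-fem}), and your Gram-matrix construction is a sound way to close that gap.
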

\begin{proof}
	The proof follows from a straightforward calculation.
\end{proof}

The above lemma establishes the pseudo-inverse property for the standard finite-difference mesh and the $Q_1$ finite-element basis, as illustrated in the following remark. This, in turn, justifies the validity of the stationary elliptic equation example given in Remark~\ref{rem:example}.

\begin{remark}[Uniform finite‐difference mesh and $Q_1$ finite‐element basis]\label{rem:fd-fem}
	Let \(\Omega\) be a hypercube in $\mathbb{R}^d$, and let
	\(
	\Omega_h = \{x_i\}_{i=1,\dots,N^d}
	\)
	be the corresponding uniform finite‐difference grid with mesh size \(h\).  
	We take $d_{\mathcal{V}} = d_{\mathcal{U}} = N^d$.
	Then the grid trace operator and the $Q_1$ finite‐element interpolant form a natural encoder–decoder pair satisfying Lemma~\ref{lemma:inverse-stationary}.  Specifically, in the notation of Lemma~\ref{lemma:inverse-stationary}, we define
	\[
	\mu_i = m_i = \delta_{x_i},
	\qquad
	\alpha_i(x) =f_i(x)= \prod_{j=1}^d \phi_{i,j}(x),\qquad \forall \, i=1,\ldots, N^d, \quad \forall\, x\in \Omega,
	\]
	where \(\phi_{i,j}\) is the one‐dimensional $P_1$ hat function centered at \(x_i\) in the coordinate direction \(e_j\) with support of length \(2h\). Following from Lemma \ref{lemma:inverse-stationary}, by the core identity:
	\[
	\int_{\Omega} \alpha_i  \, d\,\mu_j = \alpha_i(x_j) = \begin{cases}
		1, \quad \textnormal{if } i= j;\\
		0, \quad \textnormal{otherwise}.
	\end{cases}
	\]
	Assumption~\ref{ass1}(2) holds for the resulting pairs $(E_{\mathcal{V}}^s, D_{\mathcal{V}}^s)$ and $(E_{\mathcal{U}}^s, D_{\mathcal{U}}^s)$.
\end{remark}

\subsection{Non-stationary case}\label{se_NODE-ONet} 
Let us now present the proposed NODE-ONet framework for the non-stationary case \eqref{e_pde}, adhering to the general encoder-decoder architecture. In particular, we shall elaborate on the design of suitable latent spaces $\mathcal{V}_h$ and $\mathcal{U}_h$, an NODE-induced neural network $\psi_\theta$, the encoder $E_\mathcal{V}$, and the decoder $D_{\mathcal{U}}$.

For convenience, we assume that $\Omega$ is compact, $u_b$ is fixed, and the parameters $a, f\in\mathcal{K}\subset\mathcal{C}([0,T];\,\mathcal{C}(\Omega))$, $u_0\in \mathcal{C}(\Omega)$. First, inspired by the latent spaces in the stationary case, we define
\begin{equation*}
	\mathcal{V}_h =  \mathcal{C} ([0,T];\, \mathbb{R}^{d_{\mathcal{V}}}) \quad \text{and} \quad  \mathcal{U}_h = \mathcal{C} ([0,T];\, \mathbb{R}^{d_{\mathcal{U}}}),
\end{equation*}
where $d_{\mathcal{V}}$ and $d_{\mathcal{U}}$ are positive constants.  We then propose the NODE-ONet framework, consisting of the following three components:
\begin{enumerate}
	\item[(NS1).] \textbf{Encoding (Space Discretization):} Given an input function $v\in\mathcal{C}([0,T];\, \mathcal{C}(\Omega))$, by taking the stationary encoder $E^s_{\mathcal{V}}$ for each $v(t)\in\mathcal{C}(\Omega), t\in [0,T]$, we obtain the encoder for the non-stationary case: \[E_{\mathcal{V}}: \mathcal{C}([0,T];\, \mathcal{C}(\Omega)) \to 
	\mathcal{C} ([0,T];\, \mathbb{R}^{d_{\mathcal{V}}} ),\, v\mapsto \bm{v} = (v_{\ell})_{\ell=1}^{d_{\mathcal{V}}}, \]
	where
	\[
	v_{\ell} (t) = (E^s_{\mathcal{V}} (v(t,\cdot)))_{\ell} = L_{\ell}(v(t,\cdot)), \quad \text{for } t\in [0,T] \text{ and } \ell=1,\ldots, d_{\mathcal{V}}. 
	\]
	\item [(NS2).] \textbf{NODE (Approximation):} Given
	$\bm{v}\in\mathcal{C}([0,T];\, \mathbb{R}^{d_{\mathcal{V}}})$,
	let $\bm{\psi}\in \mathcal{C}([0,T];\, \mathbb{R}^{d_{\mathcal{U}}})$ satisfy the following NODE:
	\begin{equation}
		\label{eq:NODE}
		\begin{dcases}
			\dot{\bm{\psi}}(t) = \mathcal{N}_{\theta_{\psi}}\bigl(\bm{\psi}(t),\, \mathcal{P}_v\bm{v}(t), \,t\bigr), \quad \text{for } t\in [0,T], \\
			\bm{\psi}(0) = \mathcal{P}_u \, E_{\mathcal{U}}^s(u_0) \in \mathbb{R}^{d_{\mathcal{U}}}.
		\end{dcases}
	\end{equation}
	Here, $\mathcal{N}_{\theta_{\psi}} : \mathbb{R}^{d_{\mathcal{U}}} \times \mathbb{R}^{d_{\mathcal{U}}} \times \mathbb{R}_{+}\to \mathbb{R}^{d_{\mathcal{U}}}$ denotes a neural network parameterized by $\theta_{\psi} \subset \mathbb{R}^{p_\psi}$, while $\mathcal{P}_v,\, \mathcal{P}_u \in \mathbb{R}^{d_{\mathcal{U}} \times d_{\mathcal{V}}}$ are trainable transformation matrices (see Section \ref{se:pe_node} for further demonstrations).
	We then introduce the NODE operator, parameterized by $\theta_\Psi$, $\mathcal{P}_u$, and $\mathcal{P}_v$, as
	\[
	\text{NODE}\,({\theta_\Psi,\mathcal{P}_u,\mathcal{P}_v}) : \mathcal{C}([0,T];\, \mathbb{R}^{d_{\mathcal{V}}}) \to \mathcal{C}([0,T];\, \mathbb{R}^{d_{\mathcal{U}}}), \quad \bm{v}\mapsto \bm{\psi},
	\]
	and employ the NODE-induced neural network $\psi_\theta:=\text{NODE}\,({\theta_\Psi,\mathcal{P}_u,\mathcal{P}_v})$ to approximate the mapping 
	\(
	\psi = E_\mathcal{U} \circ \Psi^\dagger \circ D_\mathcal{V}.
	\)

	\item[(NS3).] \textbf{Decoding (Reconstruction):} Given
	$\bm{\psi}: =(\psi_j)_{j=1}^{d_{\mathcal{U}}}\in\mathcal{C}([0,T];\, \mathbb{R}^{d_{\mathcal{U}}})$ the output of an NODE-induced neural network $\psi_\theta$, the decoder $D_{\mathcal{U}}$ is defined by
	\[
	D_{\mathcal{U}} \colon\, \mathcal{C}([0,T];\, \mathbb{R}^{d_{\mathcal{U}}}) \to  \mathcal{C}([0,T];\, \mathcal{C}(\Omega)),\, \bm{\psi} \mapsto \Psi,
	\]
	where 
	\[
	D^s_{\mathcal{U}}(\bm{\psi}(t))(x) = \sum_{j=1}^{d_{\mathcal{U}}} \alpha_j(x; \theta_{\alpha})  \,\psi_j(t) \quad \text{for } (t,x)\in [0,T]\times \Omega,
	\]
	with $\{\alpha_j\}_{j=1}^{d_{\mathcal{U}}}$ a set of basis functions as defined in \eqref{eq:basis_decoder}.   
\end{enumerate}

With the above constructions, the proposed NODE-ONet framework reads: for any $  v\in {\mathcal{V}}\subseteq \mathcal{C}([0,T];\, \mathcal{C}(\Omega))$ and any $(t,x) \in [0,T]\times \Omega$,
\begin{align*}
	\Psi_{\text{NODE-ONet}}(v;\theta)(t, x) := \sum_{j=1}^{d_{\mathcal{U}}}\alpha_j(x; \theta_{\alpha}) \psi_j(t,\bm{v};\theta_{\psi},\mathcal{P}_u,\mathcal{P}_{v}),
\end{align*}
where $\bm{v} = E_{\mathcal{V}}(v)$ and $\theta=\{\theta_\psi,\theta_\alpha,\mathcal{P}_v,\mathcal{P}_{u}\}$ collects all the trainable parameters, see Figure \ref{fig: deepNODE}.

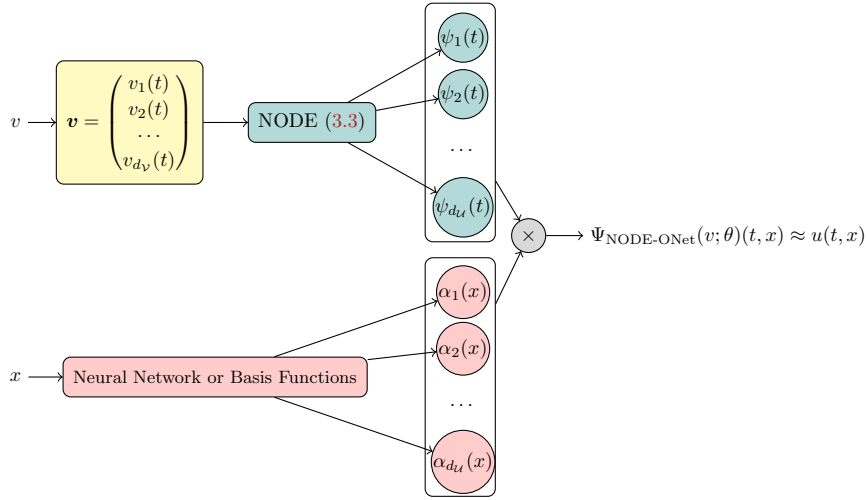
\begin{figure}[htpb]
	\label{fig:DnodeON}
	\centering
	\begin{tikzpicture}[global scale =0.75]
		\centering
		\node at(10,9)    (24) {$v$};
		\node at(12,9)      (25) [zbox,rounded corners=1mm,fill=yellow!30]{$\bm{v}=\begin{pmatrix}
				v_1(t)\\v_2(t)\\ \cdots\\v_{d_{\mathcal{V}}}(t)
			\end{pmatrix}$};
		\node at(15.2,9)      (26)[zbox,rounded corners=1mm,fill=teal!30]{NODE \eqref{eq:NODE} };
		\node at(13.5,4.5)    (27)[zbox,rounded corners=1mm,fill=red!20]{\small Neural Network or Basis Functions};
		\node at(17.8,9)    (28)[rectangle,rounded corners=1mm, minimum width =35pt, minimum height =120pt, inner sep=5pt,draw=black]  {} ;        
		\node at(17.8,4.5)  (29)[rectangle,rounded corners=1mm, minimum width =35pt, minimum height =120pt, inner sep=5pt,draw=black]  {} ; 
		\node at(17.85,10.5) (30)[circle, minimum width =17pt, minimum height =17pt, inner sep=0.1pt, fill=gray!30,draw=black,fill=teal!30]{$\psi_1(t)$};
		\node at(17.85,9.5)  (31)[circle, minimum width =17pt, minimum height =17pt, inner sep=0.1pt, fill=gray!30,draw=black,fill=teal!30]{$\psi_2(t)$};
		\node at(17.85,8.5)          {$\cdots$};
		\node at(17.85,7.5)  (32)[circle, minimum width =17pt, minimum height =17pt, inner sep=0.1pt, fill=gray!30,draw=black,fill=teal!30]{$\psi_{d_{\mathcal{U}}}(t)$};
		\node at(17.85,6)    (33)[circle, minimum width =17pt, minimum height =17pt, inner sep=0.1pt, fill=gray!30,draw=black,fill=red!20]{$\alpha_1(x)$};
		\node at(17.85,5)   (34)[circle, minimum width =17pt, minimum height =17pt, inner sep=0.1pt, fill=gray!30,draw=black,fill=red!20]{$\alpha_2(x)$};
		\node at(17.85,4)          {$\cdots$};
		\node at(17.85,3)   (35)[circle, minimum width =17pt, minimum height =17pt, inner sep=0.1pt, fill=gray!30,draw=black,,fill=red!20]{$\alpha_{d_{\mathcal{U}}}(x)$};
		\node at(19,7)    (36)[circle, minimum width =17pt, minimum height =17pt, inner sep=0.1pt, fill=gray!30,draw=black]{$\times$};
		\node at(22.5,7)      (37){$\Psi_{\text{NODE-ONet}}(v;\theta)(t, x) \approx u(t,x)$};
		\node at(10,4.5)  (38){$x$};
		
		\draw[->] (24) --(25);
		\draw[->] (25) --(26);
		\draw[->] (26) --(30);
		\draw[->] (26) --(31);
		\draw[->] (26) --(32);
		\draw[->] (27) --(33);
		\draw[->] (27) --(34);
		\draw[->] (27) --(35);
		\draw[->] (28) --(36);
		\draw[->] (29) --(36);
		\draw[->] (36) --(37);
		\draw[->] (38) --(27);
		
	\end{tikzpicture}
	\caption{The generic architecture for an NODE-ONet, which takes inputs consisting of $v\in \mathcal{V}$ and $x\in \Omega$, and outputs the value $\Psi_{\textnormal{NODE-ONet}}(v;\theta)(t,x)$ to approximate $\Psi^\dagger(v)(t,x)=u(t,x)$, for any $t\in [0,T]$.}
	\label{fig: deepNODE}
\end{figure}

Next, we provide two clarifying remarks on the pseudo-inverse properties of the encoder and the decoder, and the intuition behind the NODE component described in (NS2) of the NODE-ONet framework above.
\begin{remark}
	\textbf{(Pseudo-inverses).} By applying Lemma~\ref{lemma:inverse-stationary} in the stationary setting, we deduce that the encoder $E_{\mathcal{V}}$ and the decoder $D_{\mathcal{U}}$ (defined in (NS1) and (NS3) above) admit generalized inverses $D_{\mathcal{V}}$ and $E_{\mathcal{U}}$ in the sense of Assumption~\ref{ass1}(2). These inverses are explicitly given by
	\begin{align*}
		&D_{\mathcal{V}}(z)(t) = D_{\mathcal{V}}^s(z(t)), && \forall \, t \in [0,T], \; \forall\, z \in \mathcal{C}\big([0,T];\, \mathbb{R}^{d_{\mathcal{V}}}\big), \\
		&E_{\mathcal{U}}(u)(t) = E_{\mathcal{U}}^s(u(t,\cdot)), && \forall \, t \in [0,T], \; \forall\, u \in \mathcal{C}\big([0,T];\, \mathcal{C}(\Omega)\big),
	\end{align*}
	where $D_{\mathcal{V}}^s$ and $E_{\mathcal{U}}^s$ are defined in Lemma~\ref{lemma:inverse-stationary}.
\end{remark}
\begin{remark}\label{re:analysis}
	\textbf{(Neural ODE and semi-discrete schemes).} The NODE-induced neural network $\psi_{\theta}: \bm{v}\mapsto \bm{\psi}$, given in (NS2), aims to approximate the composite operator $\psi = E_\mathcal{U} \circ \Psi^\dagger \circ D_\mathcal{V}$. In contrast to the stationary case, establishing a universal approximation property for $\psi_{\theta}$ is generally more challenging, since $\psi$ does not uniformly map parameters of a vector field to the corresponding ODE  solutions . Nevertheless, $\psi$ admits an accurate approximation through a semi-discrete scheme applied to equation~\eqref{e_pde}.
	
	To illustrate, we consider learning the source-to-solution operator $\Psi_f \colon f \mapsto u$ for \eqref{e_pde}. Here, $v = f$, and we define the encoded input $f_h := E_{\mathcal{V}}(f) \in \mathcal{C}\big([0,T]; \mathbb{R}^{d_{\mathcal{V}}}\big)$ with $d_{\mathcal{U}} = d_{\mathcal{V}}$.  Then, $\psi(f_h)$ can be approximated by the solution $u_h\in\mathcal{C}([0,T];\, \mathbb{R}^{d_{\mathcal{U}}})$ of the semi-discrete version of~\eqref{e_pde}:
	\begin{equation}
		\frac{d u_h}{dt} + \mathcal{L}_h[a](u_h) = f_h, \quad \forall \, t \in [0,T],
	\end{equation}
	subject to appropriate initial and boundary conditions, where $\mathcal{L}_h[a]$ denotes the spatial discretization of $\mathcal{L}[a]$. The discrepancy between $\psi(f_h)$ and $u_h$ coincides with the semi-discretization error of \eqref{e_pde}, which is generally smaller and more tractable than that of a fully discrete scheme. In other words, the approximation error $\epsilon$ introduced by the NODE (see Assumption~\ref{ass1}(4) for the general setting) inherently includes the semi-discretization error. This is not problematic, since $\epsilon$ can be absorbed into the encoding-decoding errors without affecting the leading-order term of the total approximation error. A careful analysis of the overall error shapes the direction of subsequent works.
\end{remark}

\subsection{Training of NODE-ONets}

Given a dataset $\{v_i,x_j, \Psi^\dagger(v_i)(t,x_j)\}_{1\leq i\leq N_v, 1\leq j\leq N_x}$ of different input functions $\{v_i\}$ and spatial points $\{x_j\}$, the mean-squared error introduced by $\Psi_\text{NODE-ONet}$ reads
\begin{equation}\label{eq:loss_con}
	\frac{1}{N_vN_x}\sum_{i = 1}^{N_v}\sum_{j=1}^{N_x}\int_0^T\left\|\Psi_{\text{NODE-ONet}}(v_i;\theta)(t,x_j)-\Psi^\dagger(v_i)(t,x_j)\right\|_2^2 dt.
\end{equation}
To mitigate overfitting, we incorporate a regularization term $\mathcal{R}(\theta)$ into the mean-squared error to train $\Psi_{\text{NODE-ONet}}$. In practice, the integral in \eqref{eq:loss_con} is approximated by a quadrature scheme. To this end, we sample a set of temporal grid points $\{t_k\}_{k=1}^{N_t} \subset [0,T]$ and solve the NODE \eqref{eq:NODE} using a suitable ODE solver (e.g., Euler or Runge--Kutta methods) to compute $\{\Psi_{\text{NODE-ONet}}(v_i)(t_k,x_j)\}$ for $1 \leq i \leq N_v$, $1 \leq j \leq N_x$, and $1 \leq k \leq N_t$. The resulting loss function for training is then given by:
\begin{equation}\label{eq:loss_dis}
	\mathcal{L}(\theta)=\frac{1}{N_v N_xN_t}\sum_{i =1}^{N_v}\sum_{j=1}^{N_x}\sum_{k = 1}^{N_t}\|\Psi_{\text{NODE-ONet}}(v_i)(t_k,x_j)-\Psi^\dagger(v_i)(t_k,x_j)\|_2^2+\lambda\mathcal{R}(\theta),
\end{equation}
where $\lambda\geq0$ is a regularization parameter.

\begin{remark}
	The NODE-ONet framework can be implemented in a physics-informed machine learning manner. In particular, instead of \eqref{eq:loss_dis}, one can  minimize the residual of the underlying PDE in the spirit of the PINNs \cite{raissi2019physics}. 
\end{remark}

\begin{remark}
	The training inputs $\{v_i\}_{i=1}^{N_v}$ are usually drawn from a prescribed distribution. It is known that purely data-driven models, including learned operators, are predominantly effective in interpolation scenarios, i.e., when test inputs follow the training distribution, see e.g., \cite{barnard1992extrapolation,xu2021how,zhu2023reliable}. However,  real-world applications require extrapolation to out-of-distribution test inputs, which may lead to significant errors and model failure.
	To mitigate this extrapolation problem and improve the reliability of our model, we can integrate the approaches developed in \cite{zhu2023reliable} into the NODE-ONet framework.
\end{remark}

\section{Physics-Encoded NODEs}\label{se:pe_node}
The NODE-ONet framework is a high-level architecture that does not prescribe specific architectural details for its encoders and decoders. Consequently, a wide range of encoders and decoders including those proposed in \cite{choi2024spectral,hua2023basis,prasthofer2022variable,zhang2023belnet} can be directly integrated into this framework. Furthermore, with different NODEs, various NODE-ONets for PDEs can be specified from this framework. It is evident that the NODE component plays a pivotal role in determining the computational effectiveness and efficiency of the overall network. However, applying existing NODEs within NODE-ONets encounters various numerical challenges. To mitigate these limitations, we present a physics-encoded NODE design in this section and elaborate on its formulation and advantages with concrete examples.

Physics-encoded NODEs comprise two key components. First, the trainable parameters are either explicitly time-dependent or even time-invariant, reducing model complexity while enhancing temporal generalization. Second,  domain knowledge is encoded into the NODE architecture by leveraging the structural features inherent to the underlying PDEs. To make the discussions concrete, we focus on a nonlinear diffusion-reaction equation and a Navier-Stokes equation. 

\subsection{A nonlinear diffusion-reaction equation}
We specify \eqref{e_pde} as the following diffusion-reaction equation:
\begin{equation}\label{eq: diff-react}
	\left\{
	{	\begin{aligned}
			&\partial_t u(t,x)-\nabla\cdot(D(t,x)\nabla u(t,x))+R(t,x)u^2(t,x)=f(t,x), &\forall (t,x)\in [0,T]\times \Omega,\\
			&u(0,x)=u_0(x), & \forall x\in \Omega,\\
			&u(t,x)=u_b(t,x), &\forall (t,x)\in [0,T]\times \partial\Omega,
	\end{aligned}}
	\right.
\end{equation}
where $D: [0,T]\times \Omega\rightarrow \mathbb{R}$ is the diffusion coefficient, $R: [0,T]\times \Omega\rightarrow \mathbb{R}$ is the reaction coefficient, $f:[0,T]\times \Omega\rightarrow \mathbb{R}$ is the source term, $u_0(x)$ and $u_b(t,x)$ are the initial value and the boundary value, respectively. We consider $v:=\{D, R, f, u_0\}$ as the relevant input parameters for learning the solution operator of \eqref{eq: diff-react}.

The PDE (\ref{eq: diff-react}) combines diffusion, nonlinear reaction, and external forcing, leading to rich mathematical behavior and diverse applications. Analytical and numerical treatment depends critically on the interplay between these terms. In particular, we observe that the solution $u$ depends bilinearly on $D$ and nonlinearly on $R$, respectively. 

Inspired by the structural property of (\ref{eq: diff-react}), we propose the physics-encoded NODE:
\begin{equation}
\label{eq:NODE_pe3}
{	\begin{dcases}
\dot{\bm{\psi}}(t) =\sum_{i = 1}^P \Big\{(W_i\odot [\mathcal{P}_r\bm{R}(t)]+V_i)\odot\sigma\left(A_i\odot[\mathcal{P}_D\bm{D}(t)]\odot\bm{\psi} +A_i^n(t)+B_i\right)+\mathcal{P}_f\bm{f}(t)\Big\},\\
\bm{\psi}(0) = \mathcal{P}_u\bm{u}_0\in \mathbb{R}^{d_{\mathcal{U}}}.
\end{dcases}}
\end{equation}
Above, $\sigma:\mathbb{R}^{d_{\mathcal{U}}}\to\mathbb{R}^{d_{\mathcal{U}}}$ is an activation function, $\odot$ stands for the Hadamard product, $ W_i,V_i, A_i, B_i \in \mathbb{R}^{d_{\mathcal{U}}}$ for $i = 1, \ldots, P$ are \emph{time-independent} trainable parameters, $\mathcal{P}_D, \mathcal{P}_r, \mathcal{P}_f, \mathcal{P}_u\in \mathbb{R}^{{d_{\mathcal{U}}} \times {d_{\mathcal{V}}}}$ are trainable matrices, and $\bm{u}_0,\bm{D}(t), \bm{R}(t), \bm{f}(t)\in \mathbb{R}^{d_{\mathcal{V}}}$ are obtained via space discretization of $u_0(x)$, $D(t,x)$, $R(t,x),$ and $f(t,x)$, respectively. Furthermore, $A_i^n(t)\in \mathbb{R}^{d_{\mathcal{U}}}$ is assumed to be a
polynomial of degree $n$ with respect to $t$, which generally can be written in the form of 
\begin{equation}\label{eq:time_poly}
A_i^n(t)=\bm{a}_i^nt^n+\bm{a}_i^{n-1}t^{n-1}+\cdots+\bm{a}_i^1 t+\bm{a}_i^0
\end{equation}
with $\bm{a}_i^j\in\mathbb{R}^{d_{\mathcal{U}}}, 1\leq j\leq n, 1\leq i\leq P.$ 

It is notable that the intrinsic structural properties of the PDE \eqref{eq: diff-react} $-$ such as the bilinear coupling between $D$ and $u$, the nonlinear dependence of 
$R$ and $u$, and the additive source term $f$, are preserved in  \eqref{eq:NODE_pe3}, ensuring alignment with the underlying physical principles. This preservation  enables the NODE \eqref{eq:NODE_pe3} to effectively learn the dynamics of \eqref{eq: diff-react} and hence allows the resulting NODE-ONet to extrapolate solutions beyond the training temporal domain. 
Let $\{t_k\}_{k=1}^{N_t}\subset [0,T]$ be a set of grids for the temporal discretization of  \eqref{eq:NODE_pe3}. The architecture of the resulting NODE-ONet, designed to solve the reaction-diffusion PDE \eqref{eq: diff-react}, is illustrated in Figure \ref{fig:pe_NODE}, where  $\bm{\alpha}:=\{\alpha_j(x)\}_{j=1}^{d_{\mathcal{U}}}$ is supposed to the output of a neural network $\mathcal{N}_{\theta_\alpha}$.

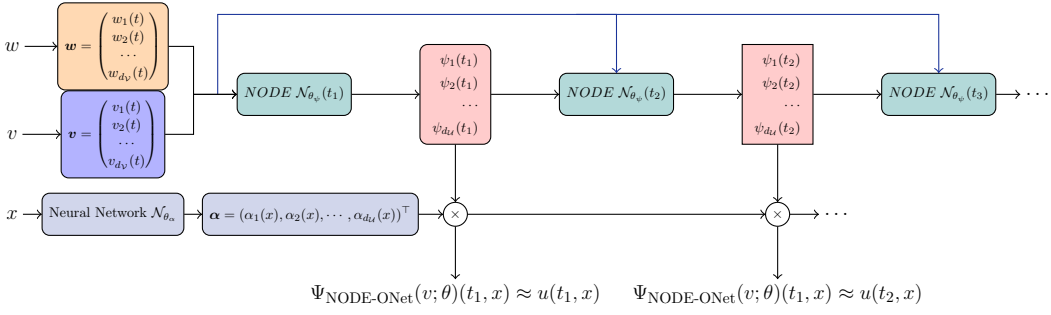
\begin{figure}[htpb]
\centering
\begin{tikzpicture}[global scale =0.53]
\centering

\node at(2.5,7)   (1) [font=\fontsize{15}{15}\selectfont]{$v$};
\node at(5,7)   (2) [zbox,rounded corners=1mm,fill=blue!30]{$\bm{v}=\begin{pmatrix}
	v_1(t)\\v_2(t)\\ \cdots\\v_{d_\mathcal{V}}(t)
\end{pmatrix}$};
\node at(2.5,9.2)   (18) [font=\fontsize{15}{15}\selectfont]{$w$};
\node at(5,9.2)   (19) [zbox,rounded corners=1mm, fill=orange!30]{$\bm{w}=\begin{pmatrix}
	w_1(t)\\w_2(t)\\ \cdots\\w_{d_\mathcal{V}}(t)
\end{pmatrix}$};
\node at(9.5,8) (3)[rectangle,rounded corners=1mm, minimum width =70pt, minimum height =30pt, inner sep=5pt,draw=black, fill=teal!30]  {$NODE~ \mathcal{N}_{\theta_{\psi}}(t_1)$}; 
\node at(13.5,8)   (4) [zbox,rounded corners=1mm, fill=red!20]{$\begin{aligned}
	\psi_1(t_1)\\\psi_2(t_1)\\ \cdots\\\psi_{d_{\mathcal{U}}}(t_1)
\end{aligned}$};
\node at(17.5,8) (5)[rectangle,rounded corners=1mm, minimum width =70pt, minimum height =30pt, inner sep=5pt,draw=black, fill=teal!30]  {$NODE~ \mathcal{N}_{\theta_{\psi}}(t_2)$}; 
\node at(21.5,8)   (6) [zbox, fill=red!20]{$\begin{aligned}
	\psi_1(t_2)\\\psi_2(t_2)\\ \cdots\\\psi_{d_{\mathcal{U}}}(t_2)
\end{aligned}$};
\node at(25.5,8) (7)[rectangle,rounded corners=1mm, minimum width =70pt, minimum height =30pt, inner sep=5pt,draw=black, fill=teal!30]  {$NODE~ \mathcal{N}_{\theta_{\psi}}(t_3)$}; 
\node at(28,8) (16)[font=\fontsize{15}{15}\selectfont,line width=1.2 pt]{$\cdots$};
\node at(2.5,5)   (8) [font=\fontsize{15}{15}\selectfont]{$x$};
\node at(5,5) (9)[rectangle,rounded corners=1mm, minimum width =70pt, minimum height =30pt, inner sep=5pt,draw=black, fill=myColor!20]  {Neural Network $\mathcal{N}_{\theta_{\alpha}}$};
\node at(9.9,5) (10)[rectangle,rounded corners=1mm, minimum width =90pt, minimum height =30pt, inner sep=5pt,draw=black, fill=myColor!20]  {$\bm{\alpha}=(\alpha_1(x),\alpha_2(x),\cdots,\alpha_{d_{\mathcal{U}}}(x))^\top$};
\node at(13.5,5)     (11)[circle, minimum width =17pt, minimum height =17pt, inner sep=0.1pt, draw=black]{$\times$};
\node at(21.5,5)     (12)[circle, minimum width =17pt, minimum height =17pt, inner sep=0.1pt, draw=black]{$\times$};
\node at(23,5) (15)[font=\fontsize{15}{15}\selectfont,line width=1.2 pt]{$\cdots$};
\node at(13.5,3)   (13) [font=\fontsize{15}{15}\selectfont]{$\Psi_{\text{NODE-ONet}}(v;\theta)(t_1,x)\approx u(t_1,x)$};
\node at(21.5,3)   (14) [font=\fontsize{15}{15}\selectfont]{$\Psi_{\text{NODE-ONet}}(v;\theta)(t_1,x)\approx u(t_2,x)$};
\draw[->](1)--(2);
\draw[->](3)--(4);
\draw[->](4)--(5);
\draw[->](5)--(6);
\draw[->](6)--(7);
\draw[->](8)--(9);
\draw[->](9)--(10);
\draw[->](10)--(11);
\draw[->](11)--(12);
\draw[->](12)--(15);
\draw[->](11)--(13);
\draw[->](12)--(14);
\draw[->](6)--(12);
\draw[->](4)--(11);
\draw[->](7)--(16);
\draw[->](18)--(19);
\draw[->][draw=myColor](7.6,8)--(7.6,10)--(17.5,10)--(5);
\draw[->][draw=myColor](16.5,10)--(25.5,10)--(7);
\draw[->](19)--(7,9.2)--(7,8)--(3);
\draw[->](2)--(7,7)--(7,8)--(3);
\end{tikzpicture}
\caption{A physics-encoded NODE-ONet for learning $\Psi^\dagger: v(t,x)\mapsto u(t,x)$ of \eqref{eq: diff-react}, where $ \mathcal{N}_{\theta_{\psi}}$ represents the NODE \eqref{eq:NODE_pe3} with $\theta_{\psi}$ the set of trainable parameters,  $v\subset\{D, f, R, u_0\}$ and $w=\{D, f, R, u_0\}\setminus v$ are the sets of input functions and known functions, respectively .}
\label{fig:pe_NODE}
\end{figure}

\subsection{A Navier-Stokes equation}
Let $\Omega=(0,1)^2$. For a given $T>0$, we consider the incompressible Navier-Stokes equation in the vorticity–velocity form:
\begin{equation}\label{eq:2dexample}
\left\{
\begin{aligned}
&\partial_t u(t, x) + \bm{V}(t, x)  \cdot \nabla u(t, x) = \nu \Delta u(t, x) + f(t, x) , \quad &\forall (t,x)\in [0,T]\times \Omega, \\
& u(t, x)  = \nabla\times \bm{V}(t, x)  \coloneqq \partial_{x_1} \bm{V}_2-\partial_{x_2} \bm{V}_1, \quad &\forall (t,x)\in [0,T]\times \Omega, \\
&\nabla \cdot \bm{V} (t, x) = 0, \quad &\forall (t,x)\in [0,T]\times \Omega, \\
&u(0,x) = u_0(x), \quad &\forall x \in \Omega,
\end{aligned}
\right. 
\end{equation}
with proper boundary conditions. 
In \eqref{eq:2dexample}, $\bm{V}(t, x)$ and $u(t, x)$ are the velocity and  the vorticity, respectively; $\nu>0$ is the viscosity coefficient, $u_0(x)$ is the initial vorticity, and $f(t, x) $ is the forcing term. 

It follows from the linearity of $\nabla$ and $\nabla\times$ that the term $ \bm{V}(t, x)  \cdot \nabla u(t, x) $ in \eqref{eq:2dexample} can be viewed as a quadratic form $\mathcal{F}u\cdot\mathcal{G}u$ with linear operators $\mathcal{F}$ and $\mathcal{G}$ to be determined. Moreover, the forcing term $f$ is additive to the equation. 
Inspired by these observations, we consider the following physics-encoded NODE:
\begin{equation}
\label{eq:NODE_ns}
{\small	\begin{dcases}
\dot{\bm{\psi}}(t) =\sum_{i = 1}^P \Big\{W_i\odot\sigma\Big(A_i\odot\bm{\psi}+[C_i\odot\bm{\psi}]\odot [D_i\odot\bm{\psi}] +A_i^n(t)+B_i\Big)+\mathcal{P}_f\bm{f}\Big\},\\
\bm{\psi}(0) = \mathcal{P}_u\bm{u}_0\in \mathbb{R}^{d_{\mathcal{U}}},
\end{dcases}}
\end{equation}
where $\sigma:\mathbb{R}^{d_{\mathcal{U}}}\to\mathbb{R}^{d_{\mathcal{U}}}$ is an activation function, $\odot$ stands for the Hadamard product, $ W_i, A_i, B_i, C_i, D_i \in \mathbb{R}^{d_{\mathcal{U}}}$ for $i = 1, \ldots, P$ are \emph{time-independent} trainable parameters, $\mathcal{P}_f, \mathcal{P}_u\in \mathbb{R}^{{d_{\mathcal{U}}} \times {d_{\mathcal{V}}}}$ are trainable matrices, and $\bm{u}_0,  \bm{f}\in \mathbb{R}^{d_{\mathcal{V}}}$ are obtained via space discretization of $u_0(x)$ and $f(x)$, respectively. Furthermore, $A_i^n(t)\in \mathbb{R}^{d_{\mathcal{U}}}$ is given in \eqref{eq:time_poly}.  
The NODE \eqref{eq:NODE_ns} possesses the same desirable properties as \eqref{eq:NODE_pe3}. The resulting NODE-ONet architecture is analogous to the one shown in Figure \ref{fig:pe_NODE}, and we therefore omit a detailed description.

Thanks to  the physics-encoded NODEs \eqref{eq:NODE_pe3} and \eqref{eq:NODE_ns}, the corresponding NODE-ONets explicitly incorporate the influence of known system-specific functions, a feature that distinguishes it from conventional operator learning methodologies. This integration not only enhances numerical efficiency but also improves predictive accuracy.
To further elucidate the implementation of physics-encoded NODEs and validate the advantages of the resulting NODE-ONets across diverse contexts, we present some case studies in the following section.

\section{Applications and Numerical Simulations}\label{se:num}

In this section, we validate the effectiveness, efficiency, and flexibility of the proposed physics-encoded NODEs and the resulting NODE-ONets. To this end, we focus on the nonlinear diffusion-reaction equation \eqref{eq: diff-react} and the Navier-Stokes equation \eqref{eq:2dexample}. Some numerical comparisons with state-of-the-art operator learning methods are also included.  All codes used in this paper are written
in Python and PyTorch and are publicly available on \url{https://github.com/DCN-FAU-AvH/NODE-ONet}.

\subsection{The nonlinear diffusion-reaction equation (\ref{eq: diff-react})}\label{se:num_DR}

We specify \eqref{eq: diff-react} as
\begin{equation}\label{eq:diff-react_num}
\partial_t u(t,x)-\nabla\cdot(D(x)\nabla u(t,x))+Ru^2(t,x)=f(x),\quad \forall (t,x)\in [0,1]\times [0,1],\\
\end{equation}
with zero initial and boundary conditions. This example has been well-studied in the literature of operator learning for PDEs, see e.g., \cite{jin2022mionet,lu2021learning}. We follow the above references and assume $D$ and $f$ are time-independent and the reaction coefficient $R=-0.01$. 

The purpose of considering the current example is three-fold. First, we implement the physics-encoded NODE \eqref{eq:NODE_pe3} to learn the solution operator $\Psi^\dagger: v(t,x)\mapsto u(t,x)$ of \eqref{eq:diff-react_num} in three input settings: $v=f(x)$, $v=D(x)$, and $v=(D(x),f(x))$, and we assess the effectiveness of the resulting NODE-ONets. Second, by comparing with some state-of-the-art operator networks including the DeepONets \cite{lu2021learning} and the MIONet \cite{jin2022mionet}, we shall show the advances of the NODE-ONets in terms of numerical efficiency and accuracy.   Finally, we test the generalization and prediction capabilities of the NODE-ONets.

\subsubsection{Set-ups}

To build the training set, we first generate a high-resolution reference dataset. We sample input functions $\{v_i\}_{i=1}^{N_{v_\text{train}}}$ from a Gaussian process $\mathcal{GP}(0,C)$ on the spatial domain,
\begin{equation}\label{s1}
v_i \sim \mathcal{GP}\!\left(0,\,C\right), \qquad 
C(x_1,x_2)=\exp\!\left(-\|x_1-x_2\|_2^2/(2l^2)\right),
\end{equation}
where $l>0$ is the length-scale of the Gaussian covariance kernel; larger $l$ yields smoother samples $v_i$. For each $v_i$, we solve \eqref{eq:diff-react_num} on a fine spatial grid of 1001 equi-spaced points using a finite-difference method to obtain the corresponding high-resolution solution $u_i$. To assemble training data at a target spatial resolution $N_x$, we define grid points $\{x_j\}_{j=1}^{N_x}$ and evaluate $u_i$ at these locations (via interpolation when needed), yielding tuples $(v_i,x_j,u_i(x_j))$. The collection $\{(v_i,x_j,u_i(x_j))\}_{1\le i\le N_{v_\text{train}},\;1\le j\le N_x}$ constitutes the training dataset for learning the solution operator $\Psi^\dagger: v\mapsto u$. Given each input function $v_i$, we follow the idea of DeepONets and define the encoder $E_{\mathcal{V}}: \mathcal{V} \to [0,T]\times\mathbb{R}^{d_{\mathcal{V}}}$ as $E_{\mathcal{V}}(v)=\bm{v}(t):=\{v_\ell(t)\}_{\ell=1}^{d_{\mathcal{V}}}\in \mathbb{R}^{d_{\mathcal{V}}}$ for any $t\in[0,T]$, where $v_\ell(t)=v(t,x_\ell)$ with $\{x_\ell\}_{\ell=1}^{d_{\mathcal{V}}}\subset\Omega$ a set of fixed sensors.

The neural network $\mathcal{N}_{\theta_\alpha}$ is set as an FCNN consisting of 2 hidden layers with $P=100$ neurons per layer and equipped with \texttt{ReLU} activation functions. We set $A_i^n(t)=\bm{a}_i^1t$ in \eqref{eq:NODE_pe3}.  All the NODEs use \texttt{ReLU} activation functions and are solved by the explicit Euler method with $N_t$ time steps.  The output dimension of $\mathcal{N}_{\theta_\alpha}$ and NODEs is set as $d_{\mathcal{U}}=50$.
Unless otherwise specified,  we minimize the loss function  \eqref{eq:loss_dis} with $\lambda=0$ by an ADAM optimizer with learning rate $10^{-3}$  for $1\times 10^5$ epochs to train the NODE-ONets,
The parameters for $\mathcal{N}_{\theta_\alpha}$ and the NODEs  are initialized by the default PyTorch settings.

To validate the test accuracy, we set $N_x=N_t=100$ and generate $N_{v_\text{test}}$ new input functions $v$ for the trained NODE-ONets following the same procedures for generating the training sets as the default setting.
To measure the test errors, we employ two kinds of metrics as follows:		
\begin{equation}\label{eq:errors_def}
\left\{
\begin{aligned}
&\text{Absolute error}:=\left(\frac{1}{N_{v_\text{test}} N_xN_t}\sum_{i =1}^{N_{v_\text{test}}}\sum_{j=1}^{N_x}\sum_{k = 1}^{N_t}\|\Psi_{\text{NODE-ONet}}(v_i)(t_k,x_j)-\Psi^\dagger(v_i)(t_k,x_j)\|^2_2\right)^{\frac{1}{2}},\\
&\text{Relative error}:=\text{Absolute error }/\left(\frac{1}{N_{v_\text{test}} N_xN_t}\sum_{i =1}^{N_{v_\text{test}}}\sum_{j=1}^{N_x}\sum_{k = 1}^{N_t}{\|\Psi^\dagger(v_i)(t_k,x_j)\|^2_2}\right)^\frac{1}{2}.
\end{aligned}
\right.
\end{equation}

\subsubsection{Results and discussions}
In this subsection, we elaborate on the implementation of the NODE-ONets with the physics-encoded NODE \eqref{eq:NODE_pe3} for solving \eqref{eq:diff-react_num} and present some preliminary results with various types of input functions. 

\noindent$\bullet$ \textbf{Learn the solution operator of \eqref{eq:diff-react_num} with a single input function.} We first learn the source-to-solution operator $\Psi_f^\dagger: v=f\mapsto u$. For this purpose, we take $D(t,x)=0.01$ in \eqref{eq:diff-react_num}, and the input function $f$ is generated by (\ref{s1}) with $l=0.5$. Since the coefficients $D$ and $R$ are assumed to be constants, the  physics-encoded NODE \eqref{eq:NODE_pe3} reduces to the following form:		
\begin{equation}
\label{eq:NODE_s}
\begin{dcases}
\dot{\bm{\psi}}(t) =\sum_{i = 1}^P W_i\odot\sigma(A_i\odot\bm{\psi}+\bm{a}_i^1t+B_i)+\mathcal{P}_f\bm{f},\\
\bm{\psi}(0) =\bm{0}\in \mathbb{R}^{d_{\mathcal{U}}}.
\end{dcases}
\end{equation}

We set $d_{\mathcal{V}}=20$, and take different $N_x$, $N_t$, and $N_{v_\text{train}}$ to train the NODE-ONet. We then sample $N_{v_\text{test}}$ new input functions $f(x)$ to test the learned operator $\Psi_f^*$. The test absolute and relative errors for different parameter settings are presented in Table \ref{tab:ex1_compare_deeponet}. The test results for one random input function $f(x)$ are reported in Figure \ref{fig: source_ex1}.

\begin{table}[h]
\setlength{\tabcolsep}{0.5em}
\scriptsize
\centering
\caption{Comparisons of the NODE-ONet with the unstacked DeepONet for learning the source-to-solution operator $\Psi_f^\dagger: f(x)\mapsto u(t,x)$ of (\ref{eq:diff-react_num}).}
\begin{tabular}[c]{|c||c|c|c|c|c|c|c|c|}
\hline
~ & \text{Training} & \text{Training}&$\#$Trainable &$\#$Training & Test& $\#$Test & Absolute & Relative \\
& \text{epochs} & \text{resolutions}&parameters& input $f$ & resolutions&input $f$ &error& error\\
\hline
\text{NODE-ONet}& $\begin{tabular}[c]{@{}l@{}}ADAM \\$5\times10^5$\end{tabular}$  & \begin{tabular}[c]{@{}l@{}}$N_x=10$\\ $N_t=5$\\$d_{\mathcal{V}}=20$\end{tabular} &27,550 &100&$\begin{tabular}[c]{@{}l@{}}$N_x=100$\\ $N_t=100$\end{tabular}$& 10,000 &$4.248 \times 10^{-3}$ & $ 7.370\times 10^{-3}$  \\
\hline

\text{NODE-ONet}& $\begin{tabular}[c]{@{}l@{}}ADAM \\$5\times10^5$\end{tabular}$  & \begin{tabular}[c]{@{}l@{}}$N_x=100$\\ $N_t=10$\\$d_{\mathcal{V}}=20$\end{tabular}&27,550 &500&$\begin{tabular}[c]{@{}l@{}}$N_x=100$\\ $N_t=100$\end{tabular}$& 10,000 &$1.368 \times 10^{-3}$ & $2.675 \times 10^{-3}$  \\
\hline
\text{DeepONet}  & $\begin{tabular}[c]{@{}l@{}}ADAM\\ $5\times10^5$\end{tabular}$  &$\begin{tabular}[c]{@{}l@{}}$N_x=100$\\ $N_t=10$\\$~K = 50$\\$d_{\mathcal{V}}=100$\end{tabular}$ &40,600& 100& $\begin{tabular}[c]{@{}l@{}}$N_x=100$\\ $N_t=100$\\$~K= 100$\end{tabular}$ & 10,000 &$6.352 \times 10^{-3}$ &$1.230 \times 10^{-2}$ \\
\hline
\text{DeepONet}  & $\begin{tabular}[c]{@{}l@{}}ADAM\\ $5\times10^5$\end{tabular}$  &$\begin{tabular}[c]{@{}l@{}}$N_x=100$\\ $N_t=100$\\$~K$ = 1,000\\$d_{\mathcal{V}}=100$\end{tabular}$ &40,600& 500& $\begin{tabular}[c]{@{}l@{}}$N_x=100$\\ $N_t=100$\\~$K$ = 1,000\end{tabular}$ & 10,000 &$1.313 \times 10^{-3}$ &$2.582 \times 10^{-3}$ \\
\hline
\end{tabular}
\label{tab:ex1_compare_deeponet}
\normalsize
\end{table}

For numerical comparisons, we implement the unstacked DeepONet \cite{lu2021learning} following the same setups therein. In particular, for the training of DeepONet, we randomly select $K$ points for each $f$ as the input of the trunk net.  The errors are listed in Table \ref{tab:ex1_compare_deeponet} and the numerical results for one random input function $f(x)$ are presented in Figure \ref{fig: DeepOnet_source_ex1}. The comparison results in Table \ref{tab:ex1_compare_deeponet} demonstrate that the  NODE-ONet achieves comparable or superior accuracy to the DeepONet in learning the source-to-solution operator \(\Psi_f\), while exhibiting significantly greater efficiency. In particular, the NODE-ONet attains low errors with coarser training resolutions  and fewer model parameters, whereas the DeepONet requires finer discretization and higher complexity to achieve similar accuracy. This validates the NODE-ONet's numerical efficiency in learning the solution operator of PDEs. 

\begin{figure}[h]
\caption{Test results of the NODE-ONet for learning the source-to-solution operator $\Psi_f^\dagger: f(x)\mapsto u(t,x)$ of \eqref{eq:diff-react_num} with one random $f(x)$. Training parameters: $N_x=100,N_t=10, N_{v_\text{train}}=500$. Test parameters: $N_x=N_t=100, N_{v_\text{test}}=10,000$. }\label{fig: source_ex1}
\centering
\includegraphics[scale=0.3]{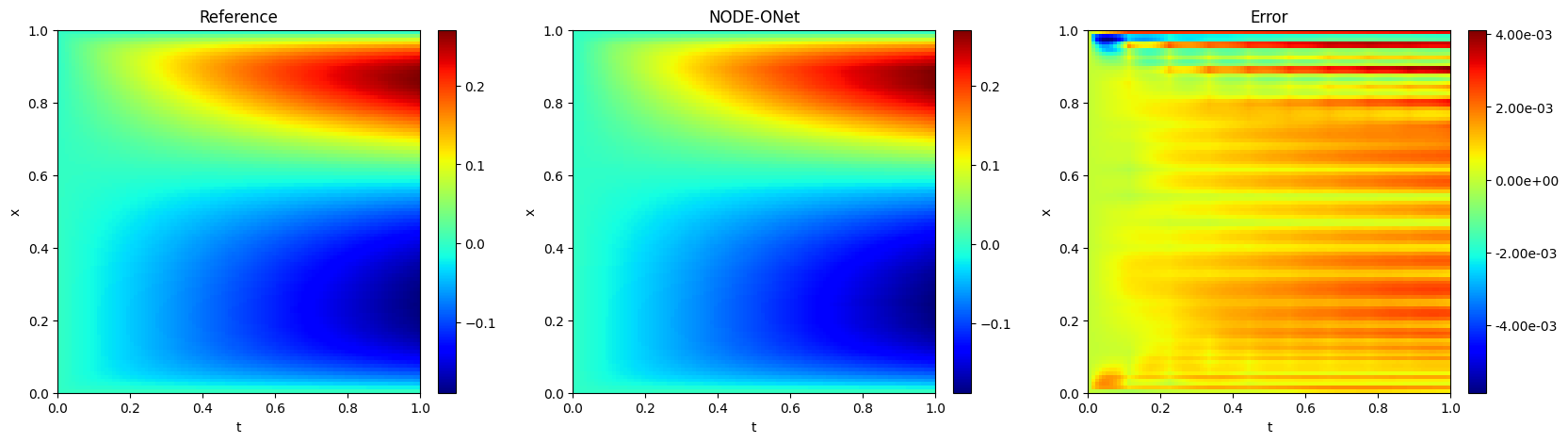}
\end{figure}

\begin{figure}[h]
\caption{ Test results of the DeepONet for learning the source-to-solution operator $\Psi_f^\dagger: f(x)\mapsto u(t,x)$ of \eqref{eq:diff-react_num} with one random $f(x)$. Training: $N_x=100,N_t=100, N_{v_\text{train}}=100$. Test: $N_x=N_t=100, N_{v_\text{test}}=10,000$.}\label{fig: DeepOnet_source_ex1}
\centering
\includegraphics[scale=0.3]{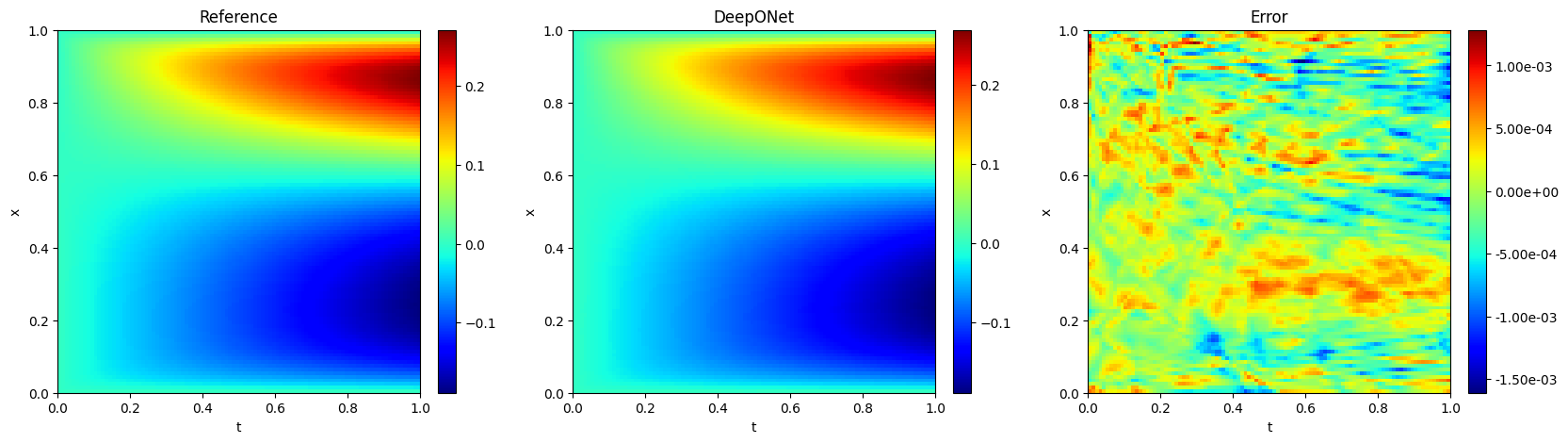}
\end{figure}

We then consider the diffusion-to-solution operator $\Psi_D^\dagger: v=D\mapsto u$. We take $f(x)=\sin (2\pi x)$ in \eqref{eq:diff-react_num}, and set $l=0.5$ in \eqref{s1} to generate the input function $D(x)$. In this case, the  physics-encoded NODE \eqref{eq:NODE_pe3} is in the form of \eqref{eq:NODE_d}. It is notable that $\bm{f}:=\{f(x_\ell)\}_{\ell=1}^{d_\mathcal{V}}$ in  \eqref{eq:NODE_d} is a spatial discretization of the given  $f(x)$. 
\begin{equation}
\label{eq:NODE_d}
\begin{dcases}
\dot{\bm{\psi}}(t) =\sum_{i = 1}^P W_i\odot\sigma(A_i\odot[\mathcal{P}_D\bm{D}]\odot\bm{\psi}+\bm{a}_i^1t+B_i)+\mathcal{P}_f\bm{f},\\
\bm{\psi}(0) =\bm{0}\in \mathbb{R}^{d_{\mathcal{U}}}.
\end{dcases}
\end{equation}

For the training of the NODE-ONet, we take $N_x=100,N_t=10$, $d_{\mathcal{V}}=20$, and $N_{v_\text{train}}=1,000$. Then, we  set $N_x=N_t=100$ and sample $N_{v_\text{test}}=10,000$ new input functions $D(x)$ to test the learned operator $\Psi_D^*$. The test absolute and relative errors are respectively $1.276\times 10^{-3}$ and $3.919\times10^{-3}$ and the test results for one random input function $D(x)$ are reported in Figure \ref{fig: diffusion_ex1}. As expected, the NODE-ONet is effective in learning the diffusion-to-solution operator. 
\begin{figure}[htpb]
\caption{ Test results of the NODE-ONet for learning the diffusion-to-solution operator $\Psi_D^\dagger: D(x)\mapsto u(t,x)$ of \eqref{eq:diff-react_num} with one random $D(x)$.}\label{fig: diffusion_ex1}
\centering
\includegraphics[scale=0.3]{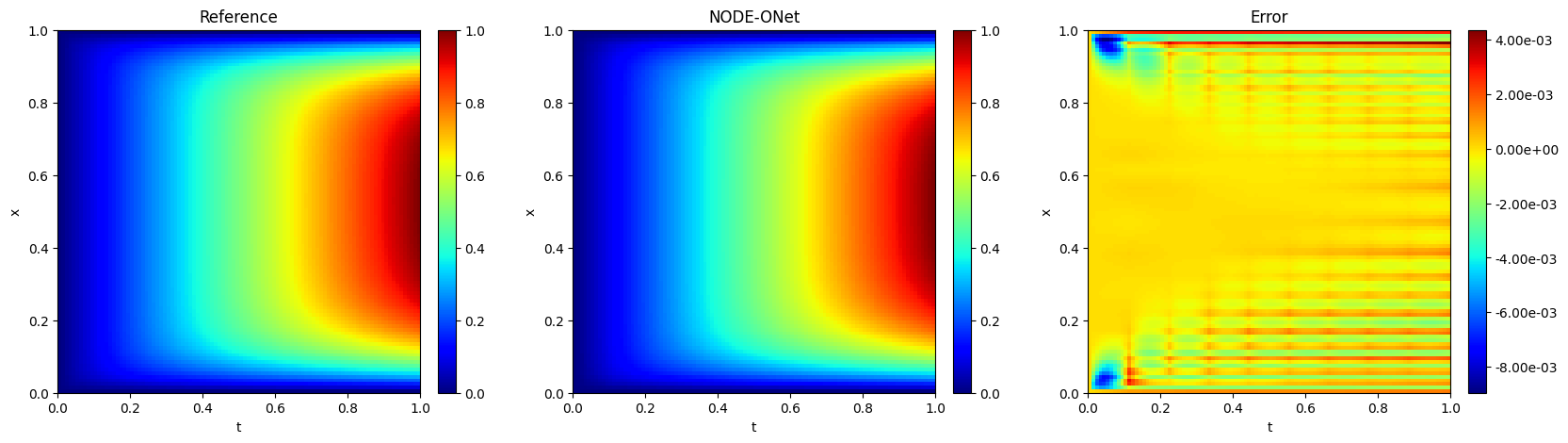}
\end{figure}

\noindent$\bullet$ \textbf{Learn the solution operator of \eqref{eq:diff-react_num} with multi-input functions.} Next, we apply the NODE-ONet to learn the solution operator of \eqref{eq:diff-react_num} with multi-inputs $\Psi_m^\dagger: v=\{D,f\}\mapsto u$. We follow \cite{jin2022mionet} to set $D(x)=0.01(|g(x)|+1)$ and the functions $g(x)$ and $f(x)$ are generated by (\ref{s1}) with $l=0.2$. It follows from \eqref{eq:NODE_pe3} that the  physics-encoded NODE used here reads
\begin{equation}
\label{eq:NODE_m}
\begin{dcases}
\dot{\bm{\psi}}(t) =\sum_{i = 1}^P W_i\odot\sigma(A_i\odot[\mathcal{P}_D\bm{D}]\odot\bm{\psi}+\bm{a}_i^1t+B_i)+\mathcal{P}_f\bm{f},\\
\bm{\psi}(0) = \bm{0}\in \mathbb{R}^{d_{\mathcal{U}}}.
\end{dcases}
\end{equation}

We set $d_{\mathcal{V}}=20$ and take different $N_x, N_t, N_{v_\text{train}}$ to train the NODE-ONet. To validate the effectiveness and efficiency, we compare the NODE-ONet with the MIONet \cite{jin2022mionet}, which is one of the benchmark algorithms for learning operators with multi-input functions.  The MIONet consists of two branch nets for encoding the input functions $D$ and $f$ and a trunk net for encoding the domain $(t, x)$ of the output function $u$. We implement the MIONet following the settings given in \cite{jin2022mionet}.

The test results of the NODE-ONet and the MIONet are listed in Table \ref{tab:ex1_compare_minonet} and the numerical results for one random input pair $\{D(x),f(x)\}$ are reported in Figures \ref{fig:multi-input_ex1}  and  \ref{fig:MINOnet_multi-input_ex1}. The NODE-ONet demonstrates superior efficiency and accuracy over the MIONet when learning the multi-input solution operator \(\Psi_m^\dagger\). With only 100 training pairs at moderate resolution (\(N_x=50\), \(N_t=10\)), the NODE-ONet achieves significantly lower errors compared to the MIONet trained at higher resolution. For the case of $N_{v_\text{train}}=1,000$, the NODE-ONet with coarser temporal resolution still outperforms the MIONet with finer discretization. These results highlight NODE-ONet's advantages in computational efficiency for operator learning with multiple inputs.

\begin{table}[h]
\setlength{\tabcolsep}{0.5em}
\scriptsize
\centering
\caption{Comparisons of the NODE-ONet with the MIONet for learning the solution operator of \eqref{eq:diff-react_num} with multi-input functions: $\Psi_m^\dagger: \{D(x),f(x)\}\mapsto u(t,x)$.}
\begin{tabular}[c]{|c||c|c|c|c|c|c|c|c|}
\hline
~ & \text{Training} & \text{Training}& $\#$Trainable& $\#$Training & Test&$\#$Test &Absolute & Relative \\
& \text{epochs} & \text{resolutions}&parameters& $ \{D,f\}$ & resolutions& $ \{D,f\}$ &error& error\\
\hline
\text{NODE-ONet}&  $\begin{tabular}[c]{@{}l@{}}ADAM\\ $1\times10^5$\end{tabular}$ & \begin{tabular}[c]{@{}l@{}}$N_x=50$\\ $N_t=10$\end{tabular}&28,550 & 100&$\begin{tabular}[c]{@{}l@{}}$N_x=100$\\ $N_t=100$\end{tabular}$& 5,000 &$2.362 \times 10^{-2}$ & $5.297 \times 10^{-2}$  \\
\hline
\text{NODE-ONet}&  $\begin{tabular}[c]{@{}l@{}}ADAM\\ $1\times10^5$\end{tabular}$  & \begin{tabular}[c]{@{}l@{}}$N_x=100$\\ $N_t=10$\end{tabular}&28,550 & 1,000&$\begin{tabular}[c]{@{}l@{}}$N_x=100$\\ $N_t=100$\end{tabular}$& 5,000 &$4.626 \times 10^{-3}$ & $1.032 \times 10^{-2}$  \\
\hline
\text{MIONet}  &  $\begin{tabular}[c]{@{}l@{}}ADAM\\ $1\times10^5$\end{tabular}$  &$\begin{tabular}[c]{@{}l@{}}$N_x=100$\\ $N_t=100$\end{tabular}$&161,600&100& $\begin{tabular}[c]{@{}l@{}}$N_x=100$\\ $N_t=100$\end{tabular}$ & 5,000 &$1.212 \times 10^{-1}$ & $2.661 \times 10^{-1}$ \\
\hline
\text{MIONet}  &  $\begin{tabular}[c]{@{}l@{}}ADAM\\ $1\times10^5$\end{tabular}$  &$\begin{tabular}[c]{@{}l@{}}$N_x=100$\\ $N_t=100$\end{tabular}$&161,600&1,000& $\begin{tabular}[c]{@{}l@{}}$N_x=100$\\ $N_t=100$\end{tabular}$ & 5,000 &$9.491 \times 10^{-3}$ & $2.072 \times 10^{-2}$ \\
\hline

\end{tabular}
\label{tab:ex1_compare_minonet}
\normalsize
\end{table}

\begin{figure}[htpb]
\caption{ Test results of the NODE-ONet for learning the solution operator $\Psi_m^\dagger:\{D(x),f(x)\}\mapsto u(t,x)$ of \eqref{eq:diff-react_num} with one random multi-input pair. Training: $N_x=100,N_t=10, N_{v_\text{train}}=1000$. Test: $N_x=N_t=100, N_{v_\text{test}}=5000$. }\label{fig:multi-input_ex1}
\centering
\includegraphics[scale=0.3]{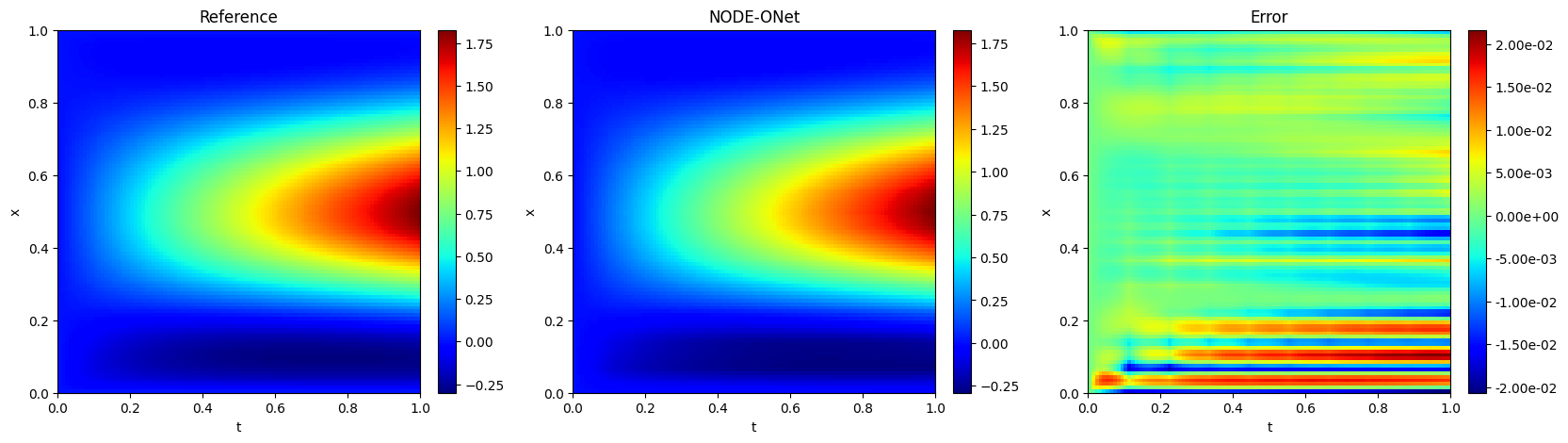}
\end{figure}

\begin{figure}[!htbp]
\caption{ Test results of the MIONet for learning the solution operator $\Psi_m^\dagger: \{D(x),f(x)\}\mapsto u(t,x)$ of \eqref{eq:diff-react_num} with one random input pair. Training: $N_x=N_t=100, N_{v_\text{train}}=1000$. Test: $N_x=N_t=100, N_{v_\text{test}}=5000$. }\label{fig:MINOnet_multi-input_ex1}
\centering
\includegraphics[scale=0.3]{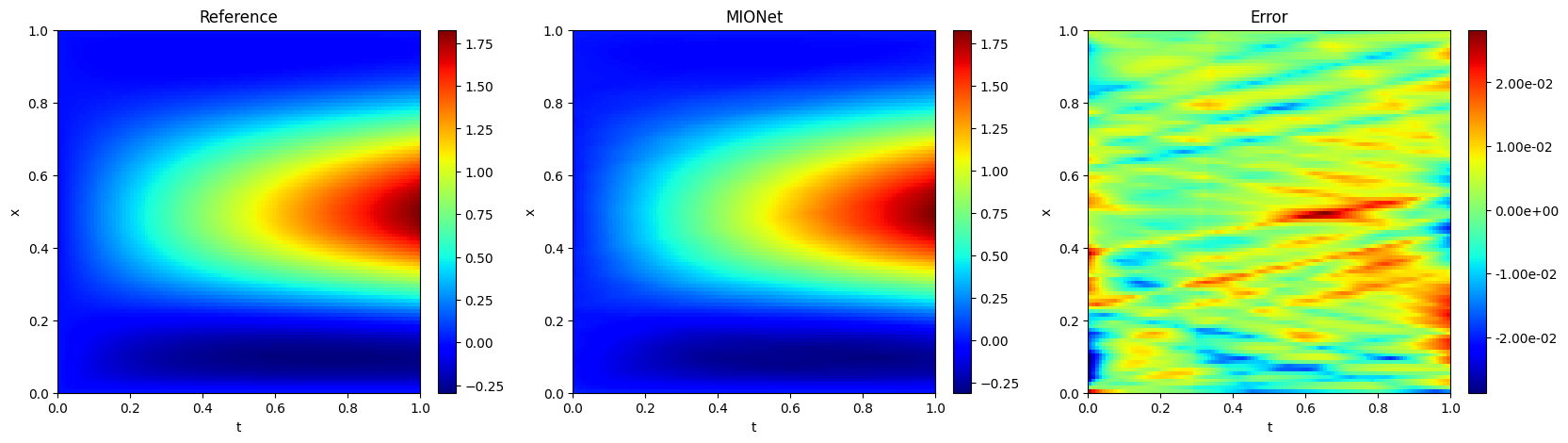}
\end{figure}

\noindent$\bullet$ \textbf{Generalization capacity of $\bm{\alpha}$.} Thanks to the separation of temporal and spatial variables in the NODE-ONet, we expect that the trained neural network $N_{\theta_{\alpha}^*}$ (and hence the corresponding output $\bm{\alpha}^*$) can be directly used as a decoder for learning other PDEs with similar structures. For validation, we first set $D=0.01, R=-0.01$ in \eqref{eq:diff-react_num} and learn the source-to-solution operator $\Psi_f^\dagger$ by the NODE-ONet. We set $l=0.5$ in \eqref{s1} to generate a training set with 500 input functions to get the trained neural network $N_{\theta_{\alpha}^*}$. 

Then, we set $D=0.2$, $R=0$ and hence the PDE \eqref{eq:diff-react_num} reduces to a diffusion equation. We aim to learn the resulting source-to-solution operator by the NODE-ONet with the pre-trained neural network $N_{\theta_{\alpha}^*}$. Note that, in this case, one only needs to train the NODE \eqref{eq:NODE_s}. For this purpose, we sample 500 input functions from \eqref{s1} with $l=0.3$ to generate a training set. To test the numerical accuracy, we set $l=0.3$ in \eqref{s1} to generate 10,000 new input functions. The test absolute and relative errors are $1.836\times 10^{-3}$ and $7.670\times 10^{-3}$, respectively. The numerical results with one random input function $f$ are presented in Figure \ref{fig: space_generalization_ex1}. 
\begin{figure}[!htpb]
\caption{ Test results for learning the source-to-solution operator $\Psi_f^\dagger: f(x)\mapsto u(t,x)$ of \eqref{eq:diff-react_num} with trained $\bm{\alpha}$. 
}\label{fig: space_generalization_ex1}
\centering
\includegraphics[scale=0.3]{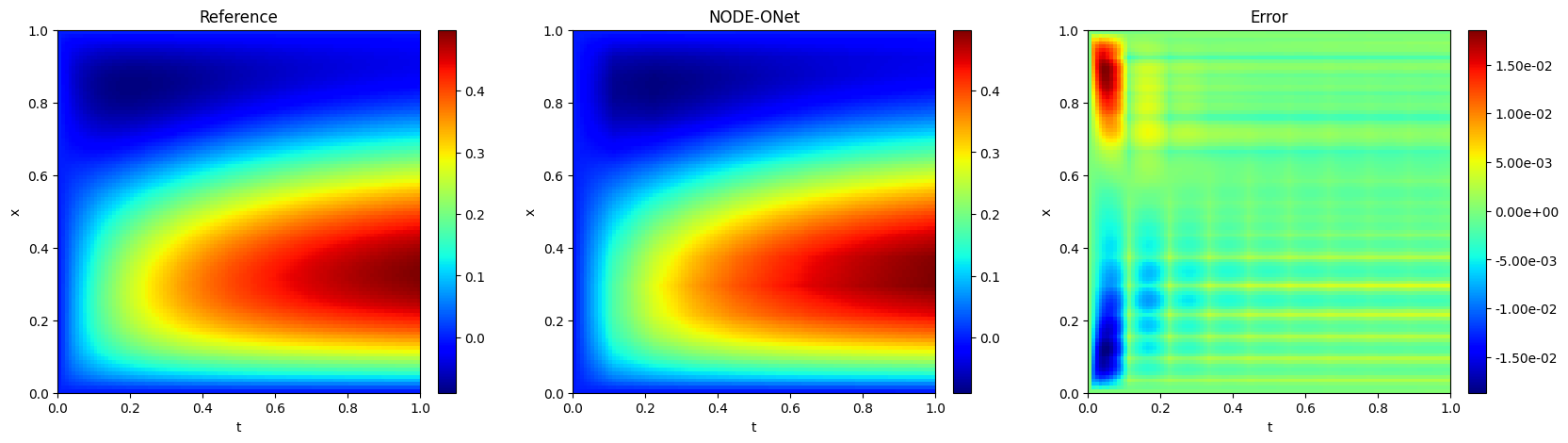}
\end{figure}

\noindent$\bullet$ \textbf{Prediction beyond the training time frame.} We test the above learned source-to-solution operator $\Psi_f^*$ and the operator $\Psi_m^*$ with multi-input functions for $t\in [0, 2]$. Note that these two operators are trained for $t\in[0,1]$. The test errors are listed in Table \ref{tab:ex1_prediction}, from which we can see that, compared with the DeepONet and MIONet, the NODE-ONets achieve much higher numerical accuracy for $t\in[0,2]$.

The numerical results with one random input function/pair are reported in Figures \ref{fig: prediction_time_ex1} -- \ref{fig: prediction_mionet_ex1}. We observe that the test errors for all methods are small for $t\in [0,1]$.  However, the prediction errors for $t\in[1,2]$ of the NODE-ONets are much smaller than those of the DeepONet and MIONet. 
These results demonstrate that the proposed physics-encoded NODE \eqref{eq:NODE_pe3} can efficiently learn the evolution pattern of the PDE (\ref{eq:diff-react_num}). Hence, the resulting NODE-ONet is capable of predicting the system beyond the training time frame, distinguishing it from DeepONets and MIONet.

\begin{table}[h]
\scriptsize
\centering
\caption{Prediction results of \eqref{eq:diff-react_num} for $t\in[0,2]$.  $\Psi_f^*:f(x)\mapsto u(t,x)$: the learned source-to-solution operator  ,$\Psi_m^*: \{D(x),f(x)\}\mapsto u(t,x)$: the learned solution operator with multi-input functions.}\label{tab:ex1_prediction}
\begin{tabular}{|cc|c|c|c|c|c|}
\hline
\multicolumn{2}{|l|}{\multirow{2}{*}{}}                       & $\#$Training&Training                     & Test                         & Absolute               & Relative               \\  
\multicolumn{2}{|l|}{}                                        & input functions&time frame                   & time frame                   & error            & error            \\ \hline
\multicolumn{1}{|c|}{\multirow{2}{*}{$\Psi_f^*$}} & NODE-ONet & \multirow{2}{*}{500}&\multirow{2}{*}{$t\in[0,1]$} & \multirow{2}{*}{$t\in[0,2]$} & $6.839 \times 10^{-3}$ & $7.113 \times 10^{-3}$ \\ \cline{2-2} \cline{6-7} 
\multicolumn{1}{|c|}{}                            & DeepONet  &                              &          &                    & $2.302\times10^{-1}$   & $2.360 \times 10^{-1}$ \\ \hline
\multicolumn{1}{|l|}{\multirow{2}{*}{$\Psi_m^*$}} & NODE-ONet & \multirow{2}{*}{1,000}&\multirow{2}{*}{$t\in[0,1]$} & \multirow{2}{*}{$t\in[0,2]$} & $1.392 \times 10^{-2}$ & $1.732 \times 10^{-2}$ \\ \cline{2-2} \cline{6-7} 
\multicolumn{1}{|l|}{}                            & MIONet    &                              &         &                     & $1.012 \times 10^{-1}$ & $1.251 \times 10^{-1}$ \\ \hline
\end{tabular}
\normalsize
\end{table}

\begin{figure}[!htbp]
\caption{Prediction solution of \eqref{eq:diff-react_num} by the NODE-ONet for the learned source-to-solution operator $\Psi_f^*: f(x)\mapsto u(t,x)$ with one random input function. (Test time frame $t\in [0, 2]$; Training time frame $t\in [0,1]$)
}\label{fig: prediction_time_ex1}
\centering
\includegraphics[width=1\textwidth]{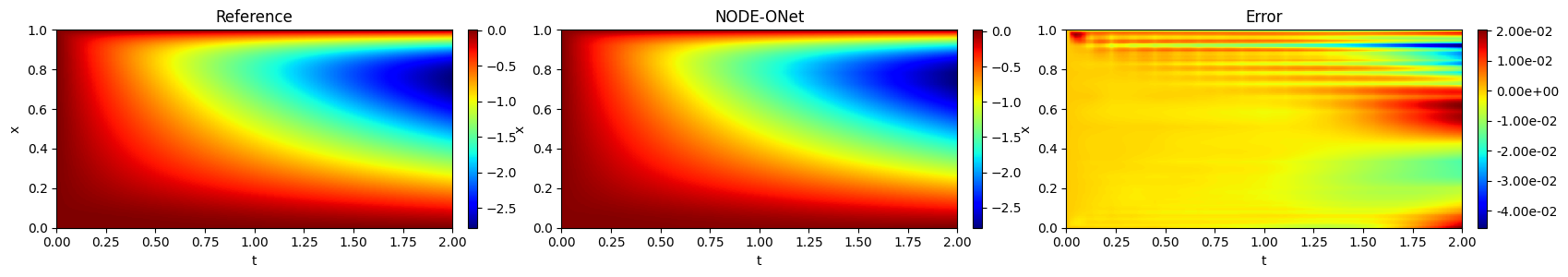}\\
\end{figure}

\begin{figure}[!htbp]
\caption{ Prediction solution of \eqref{eq:diff-react_num} by the DeepONet for the learned source-to-solution operator $\Psi_f^*: f(x)\mapsto u(t,x)$ with one random input function. (Test time frame $t\in [0, 2]$; Training time frame $t\in [0,1]$).
}\label{fig: prediction_time_donet_ex1}
\centering
\includegraphics[width=1\textwidth]{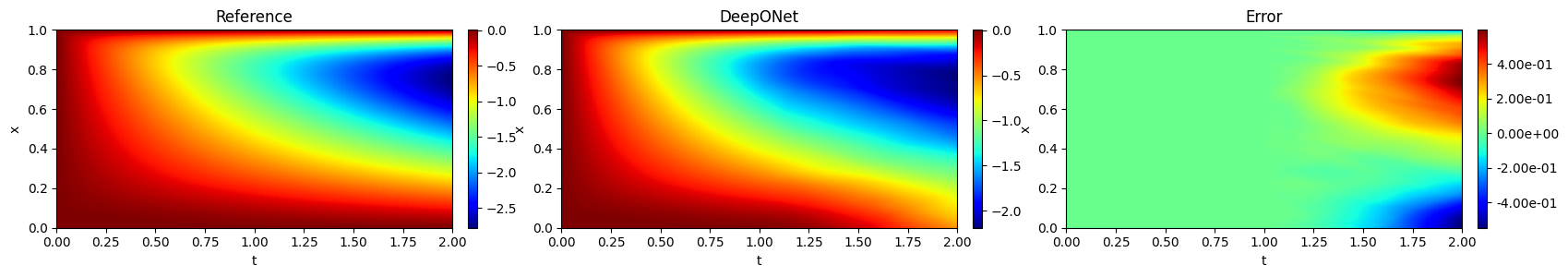}\\
\end{figure}

\begin{figure}[!htbp]
\caption{ Prediction solution of \eqref{eq:diff-react_num} by the NODE-ONet for the learned solution operator with multi-input functions: $\Psi_m^*: \{D(x),f(x)\}\mapsto u(t,x)$ with one random input pair. (Test time frame $t\in [0, 2]$; Training time frame $t\in [0,1]$).
}\label{fig: prediction_time2_ex1}
\centering
\includegraphics[width=1\textwidth]{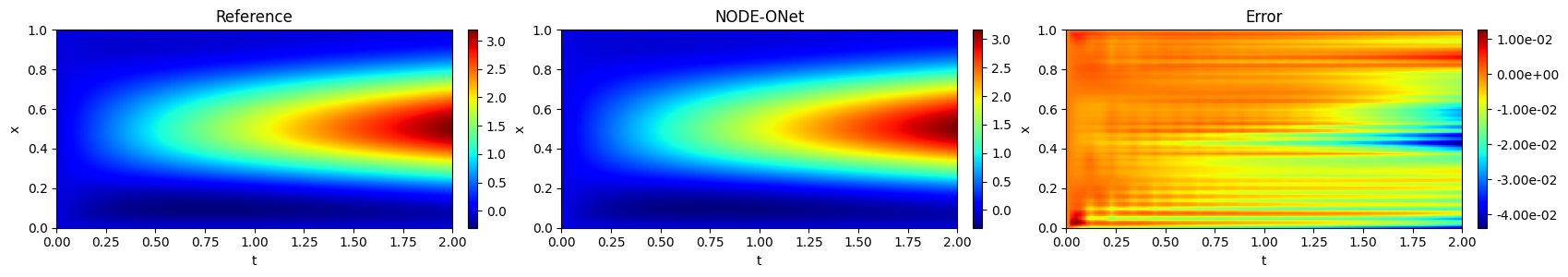}
\end{figure}

\begin{figure}[!htbp]
\caption{ Prediction solution of \eqref{eq:diff-react_num} by the MIONet for  the learned solution operator with multi-input functions: $\Psi_m^*: \{D(x),f(x)\}\mapsto u(t,x)$ with one random input pair. (Test time frame $t\in [0, 2]$; Training time frame $t\in [0,1]$).}\label{fig: prediction_mionet_ex1}
\centering
\includegraphics[width=1\textwidth]{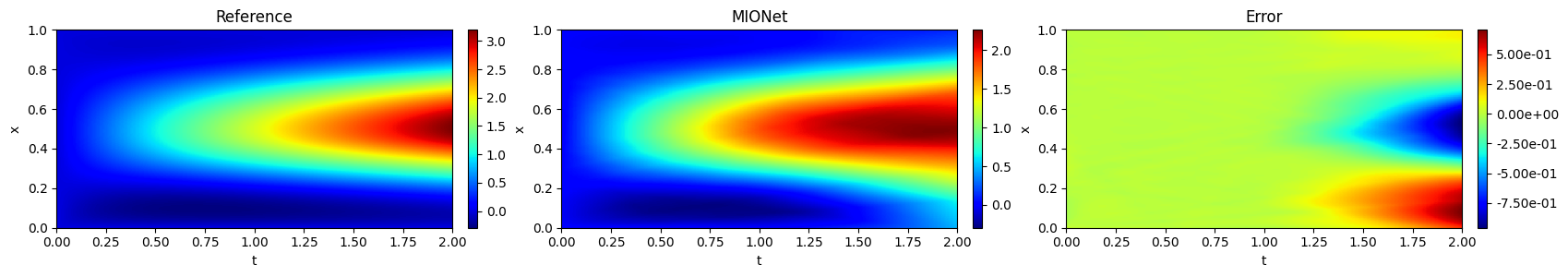}
\end{figure}

\noindent$\bullet$ \textbf{Flexibility of encoder/decoder.} Recall that the NODE-ONet framework does not prescribe specific architectures for the encoder/decoder. In previous simulations, we employ the neural network $\mathcal{N}_{\theta_\alpha}$ to generate $\bm{\alpha}$ in the decoders for comparison purposes. Alternatively, as mentioned in Section \ref{se: architecture}, one can also use some other basis functions for generating $\bm{\alpha}$. To validate, we consider learning the source-to-solution operator $\Psi_f: f\mapsto u$ of \eqref{eq:diff-react_num} by replacing  $\mathcal{N}_{\theta_\alpha}$ with a set of Fourier basis functions, and other settings are kept the same. The test absolute and relative errors are $9.734\times 10^{-4}$ and $1.908\times 10^{-3}$, respectively. The numerical results with one random input $f$ are presented in Figure \ref{fig: fourier_basis_ex1}. All these results demonstrate the flexibility of the NODE-ONet.  
\begin{figure}[!htbp]
\caption{ Test results of  the NODE-ONet with Fourier basis for learning the source-to-solution operator $\Psi_f: f(x)\mapsto u(t,x)$ of \eqref{eq:diff-react_num} with one random input function.
}\label{fig: fourier_basis_ex1}
\centering
\includegraphics[scale=0.3]{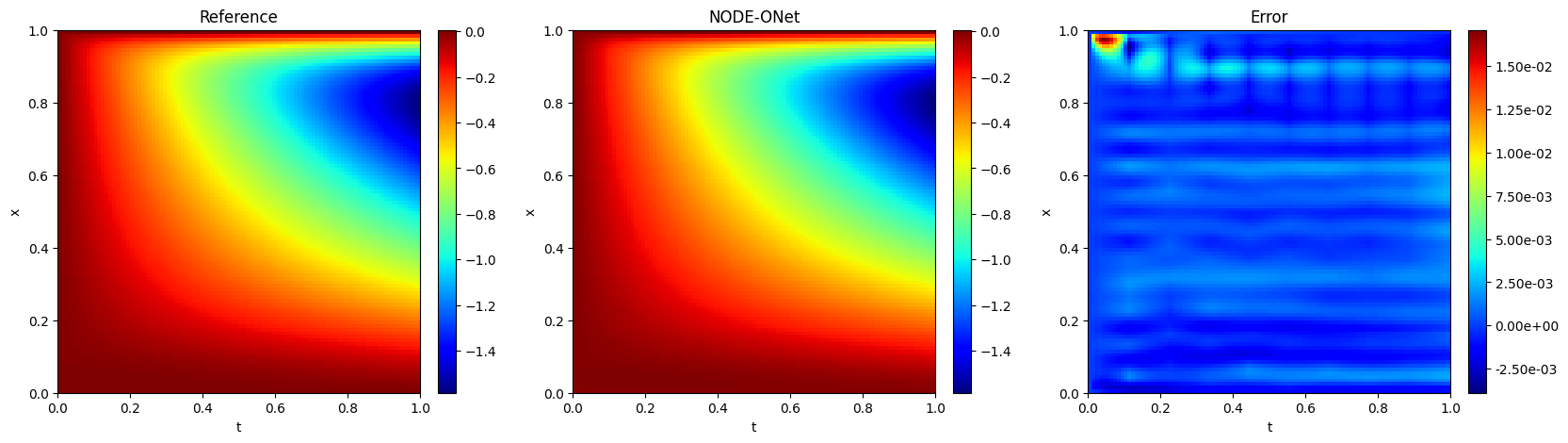}
\end{figure}
\subsection{The Navier-Stokes equation (\ref{eq:2dexample})}

In this section, we test the two-dimensional Navier-Stokes equation \eqref{eq:2dexample} to validate the effectiveness of the NODE-ONet framework. For this purpose, we set the viscosity $\nu=0.001$ and impose the periodic boundary condition on $u$. Following \cite{li2021fourier,lu2022comprehensive}, the forcing term $f$ is assumed to be time-invariant. Then, we are interested in learning the following three operators:
\begin{itemize}
\item the initial value-to-solution operator $\Psi_i: u_0\mapsto u$ with the fixed source term $f(x_1,x_2)=0.1 \sin(2\pi(x_1 + x_2)) + 0.1 \cos(2\pi(x_1 + x_2))$;
\item the source-to-solution operator $\Psi_f: f\mapsto u$ with the fixed initial value $u_0(x_1, x_2)=0.1 \sin(2\pi(x_1 + x_2)) + 0.1 \cos(2\pi(x_1 + x_2))$;
\item the solution operator with multi-input $\Psi_m: \{u_0,f\}\mapsto u$.
\end{itemize}

The training sets for learning the three operators are generated by the public codes provided in \cite{li2021fourier} with the input functions $u_0$ and $f$ sampled from the following Gaussian random fields:
$
u_0\sim \mathcal{GP}(0, 7^{3/2}(-\Delta + 49 I)^{-2.5})$ \text{and} $f\sim \mathcal{GP}(0, 3^{3/2}(-\Delta + 49 I)^{-5}).
$
The neural network $\mathcal{N}_{\theta_\alpha}$ for all the cases is set as an FCNN consisting of 4 hidden layers with $2,000$ neurons per layer and equipped with \texttt{ReLU} activation functions. We  set $A_i^n(t)=\bm{a}_i^1t$ and $P=2,000$ in \eqref{eq:NODE_ns}.  All the NODEs use \texttt{ReLU} activation functions and are solved by the explicit Euler method with $N_t$ time steps.  The output dimension of $\mathcal{N}_{\theta_\alpha}$ and NODEs is set as $d_{\mathcal{U}}=200$.
To train the neural networks, we minimize the loss function  \eqref{eq:loss_dis} with $R(\theta)=\|\theta\|_1$ and $\lambda=10^{-5}$ using an ADAM optimizer followed by certain LBFGS iterations.
The parameters for $\mathcal{N}_{\theta_\alpha}$ and the NODEs  are initialized by the default \texttt{PyTorch} settings. We use the errors defined in \eqref{eq:errors_def} to measure the accuracy of the NODE-ONets for learning the three operators above. Other parameters are summarized in Table \ref{tab:ex2_parameters}. 

\begin{table}[!htbp]
\setlength{\tabcolsep}{0.5em}
\footnotesize
\centering
\caption{Parameter settings for learning the solution operators of \eqref{eq:2dexample}. }
\begin{tabular}[c]{|c||c|c|c|c|c|c|c|c|c|c|c|}
\hline
~ &\multicolumn{1}{|c|}{Training}&\multicolumn{1}{|c|}{Learning} & \multicolumn{1}{|c|}{Training} & \multicolumn{1}{|c|}{Test}
&\multirow{2}{*}{$d_{\mathcal{V}}$} & \multirow{2}{*}{$d_{\mathcal{U}}$}& \multirow{2}{*}{$N_{v}$} \\ 
& \text{epochs} & rate & \text{resolutions} & resolutions 
&&&\\
\hline
$\Psi_i$&  $\begin{tabular}[c]{@{}l@{}}ADAM $5\times10^5$\\ LBFGS 100\end{tabular}$ & $10^{-4}$ & \begin{tabular}[c]{@{}l@{}}$N_x=50^2$\\ $N_t=10$\end{tabular} & $\begin{tabular}[c]{@{}l@{}}$N_x=100^2$\\ $N_t=100$\end{tabular}$ & $50^2$ & 200 &  $\begin{tabular}[c]{@{}l@{}}1000 (training)\\ 200 (test)\end{tabular}$  \\
\hline
$\Psi_f$&  $\begin{tabular}[c]{@{}l@{}}ADAM $5\times10^5$\\ LBFGS 100\end{tabular}$  &  $10^{-4}$ & \begin{tabular}[c]{@{}l@{}}$N_x=50^2$\\ $N_t=10$\end{tabular} & $\begin{tabular}[c]{@{}l@{}}$N_x=100^2$\\ $N_t=100$\end{tabular}$ & $50^2$ & 200 &$\begin{tabular}[c]{@{}l@{}}1000 (training)\\ 200 (test)\end{tabular}$ \\
\hline
$\Psi_m$ &  $\begin{tabular}[c]{@{}l@{}}ADAM $5\times10^5$\\ LBFGS 100\end{tabular}$  & $10^{-4}$ &$\begin{tabular}[c]{@{}l@{}}$N_x=50^2$\\ $N_t=20$\end{tabular}$& $\begin{tabular}[c]{@{}l@{}}$N_x=100^2$\\ $N_t=100$\end{tabular}$ & $50^2$ & 200 & $\begin{tabular}[c]{@{}l@{}}1000 (training)\\ 200 (test)\end{tabular}$\\
\hline

\end{tabular}
\label{tab:ex2_parameters}
\end{table}

\begin{table}[!htpb]
\setlength{\tabcolsep}{0.5em}
\footnotesize
\centering
\caption{Test and prediction accuracy for learning the solution operators of \eqref{eq:2dexample}. }
\begin{tabular}[c]{|c||c|c|c|c|c|c|c|}
\hline
& \text{Training} & Test&Absolute test & Relative test&Absolute test& Relative test\\
& time frame & time frame&error in $[0, 10]$& error in $[0, 10]$&error in $[0, 20]$& error in $[0, 20]$\\
\hline
$\Psi_i$ & $t\in [0,10]$ &  $t\in [0,20]$ &$1.396 \times 10^{-2}$ & $3.053 \times 10^{-2}$ &$5.860 \times 10^{-2}$ & $8.491 \times 10^{-2}$   \\
\hline
$\Psi_f$  & $t\in [0,10]$&  $t\in [0,20]$ &$2.751 \times 10^{-3}$ & $3.180 \times 10^{-2}$ & $ 7.379 \times 10^{-3}$ & $ 7.167 \times 10^{-2}$  \\
\hline
$\Psi_m$ &$t\in [0,10]$& $t\in [0,20]$ & $1.320 \times 10^{-2}$ & $8.827 \times 10^{-2}$ & $1.208\times 10^{-2}$ &$8.857 \times 10^{-2}$ \\
\hline

\end{tabular}
\label{tab:ex2_results}
\end{table}

Numerical results for learning the operators $\Psi_i$, $\Psi_f$, and $\Psi_m$ are presented in Table \ref{tab:ex2_results} and Figures \ref{fig: initial_ex2}$-$\ref{fig: two_inputs_ex2}. The NODE-ONets efficiently learn various solution operators of \eqref{eq:2dexample}, producing highly accurate numerical solutions for $t\in [0,10]$. Furthermore, it maintains satisfactory prediction accuracy for $t\in [10, 20]$ in all cases. These results demonstrate the framework's capability to handle some complex systems in different settings while consistently delivering reliable numerical solutions.

\begin{figure}[!htpb]
\caption{Numerical results for learning the initial value-to-solution operator $\Psi_i: u_0(x)\mapsto u(t, x)$ of \eqref{eq:2dexample} with one random input function. From top to bottom: $t=2, 6, 10$ (test within the training time frame), and $t=12, 20$ (prediction beyond the training time frame).
}\label{fig: initial_ex2}
\centering
\includegraphics[scale=0.35]{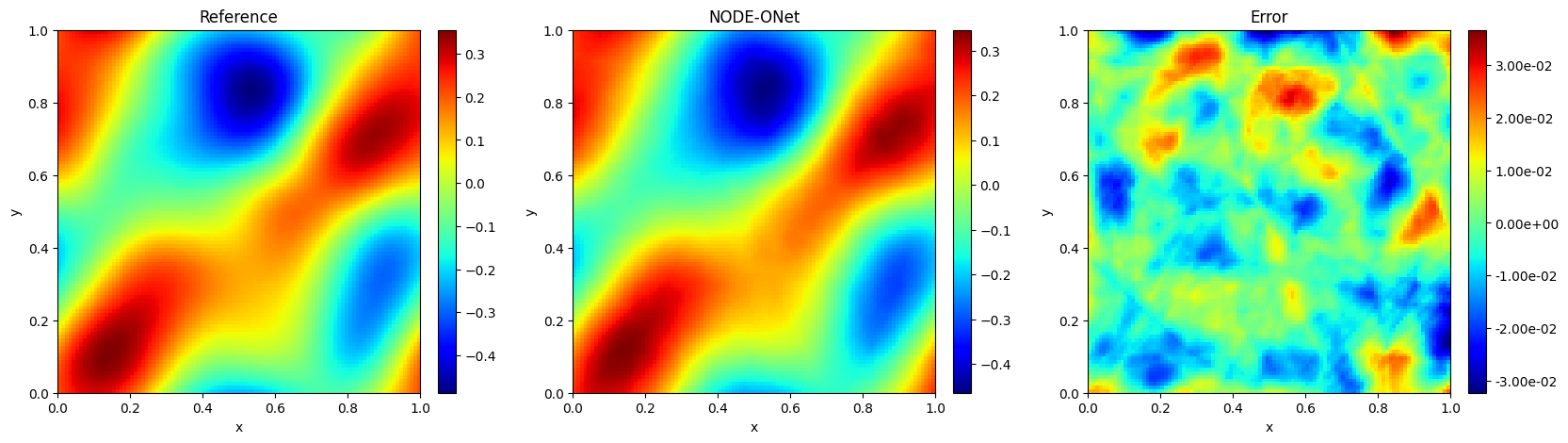}
\includegraphics[scale=0.35]{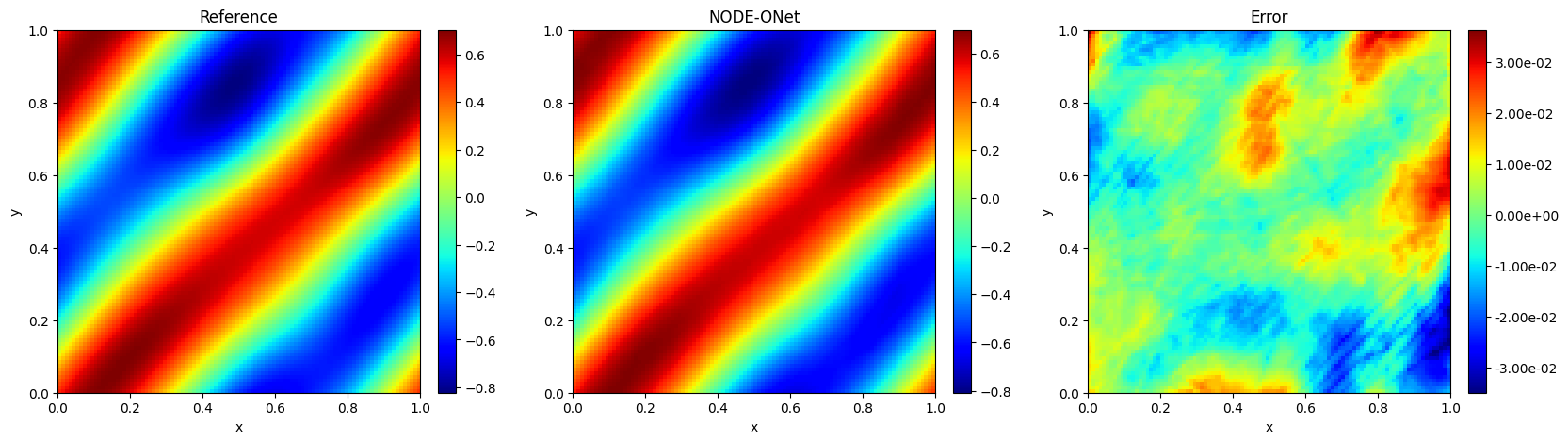}
\includegraphics[scale=0.35]{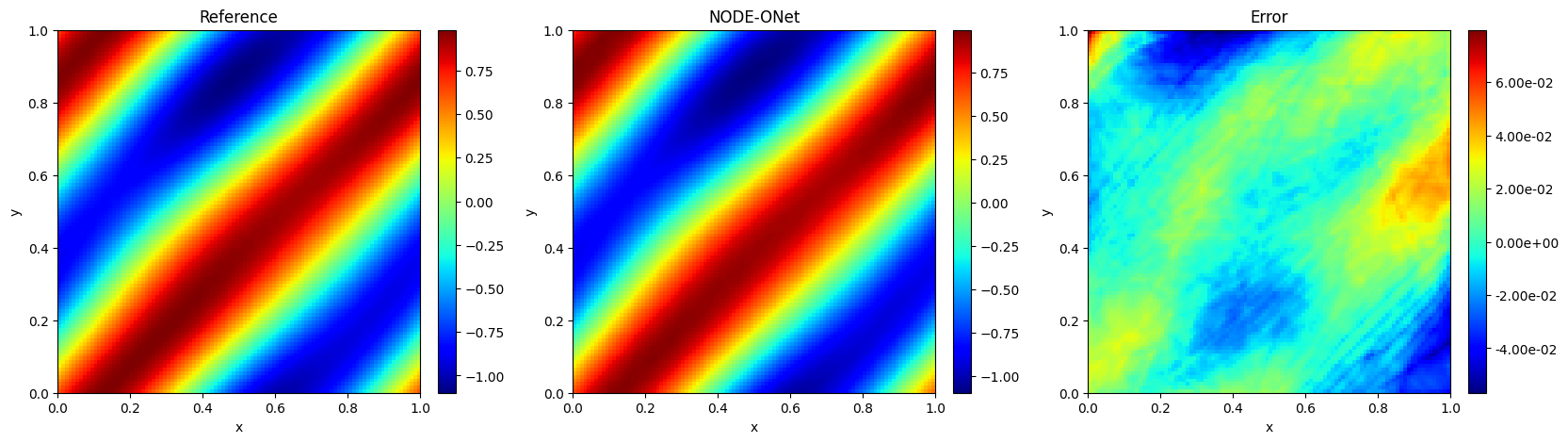}
\includegraphics[scale=0.35]{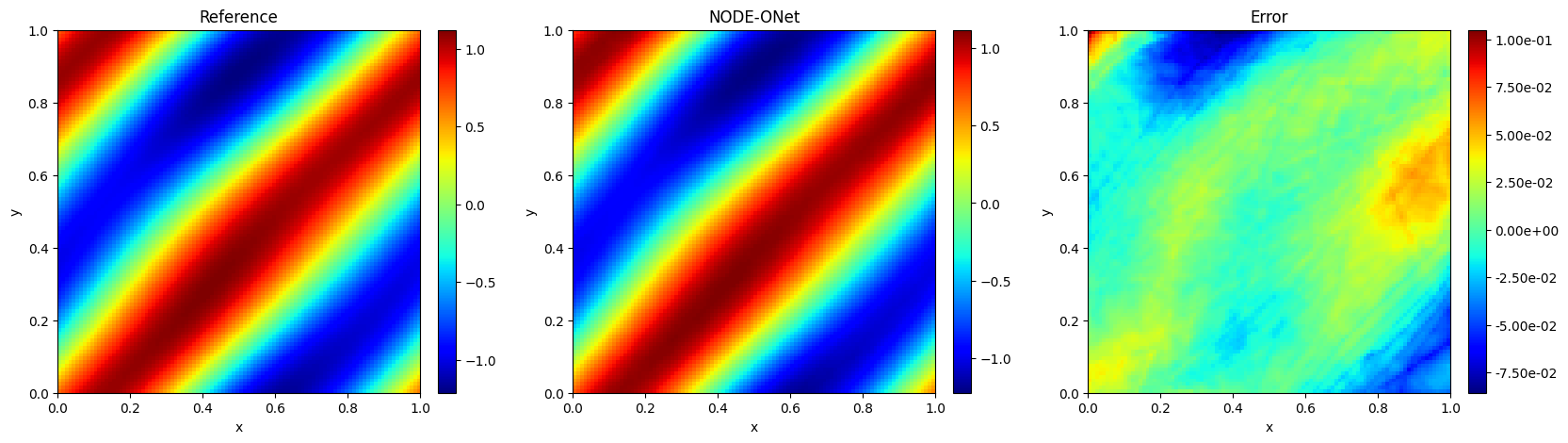}
\includegraphics[scale=0.35]{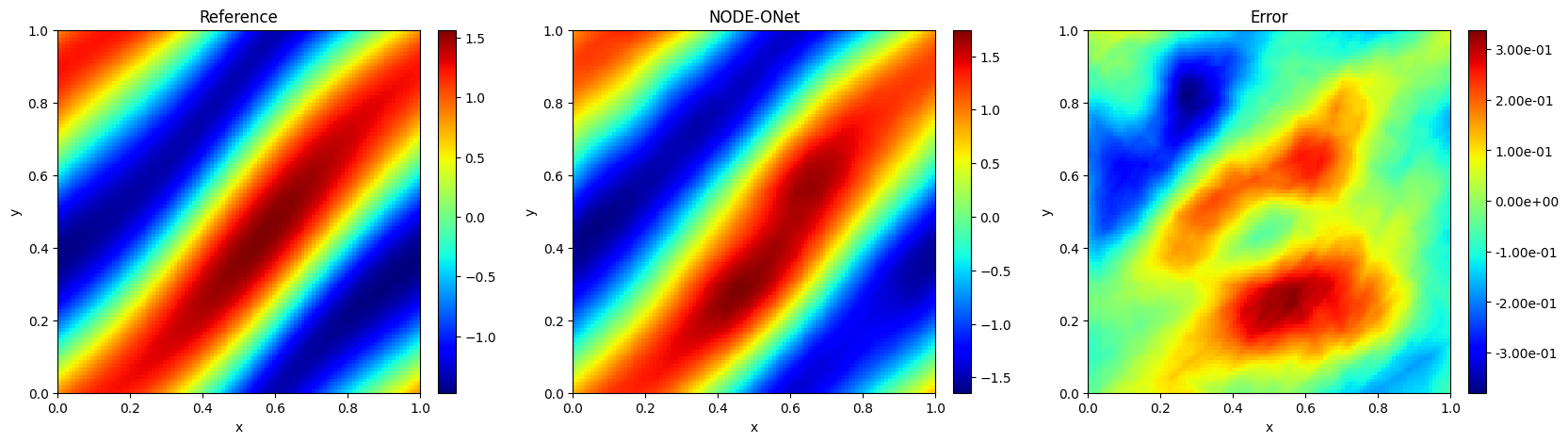}
\end{figure}


\begin{figure}[!htpb]
\caption{Numerical results for learning the source-to-solution operator $\Psi_f: f(x)\mapsto u(t, x)$ of \eqref{eq:2dexample} with one random input function. From top to bottom: $t=2,6,10$ (test within the training time frame), and $t=12, 20$ (prediction beyond the training time frame).
}\label{fig: source_ex2}
\centering
\includegraphics[scale=0.35]{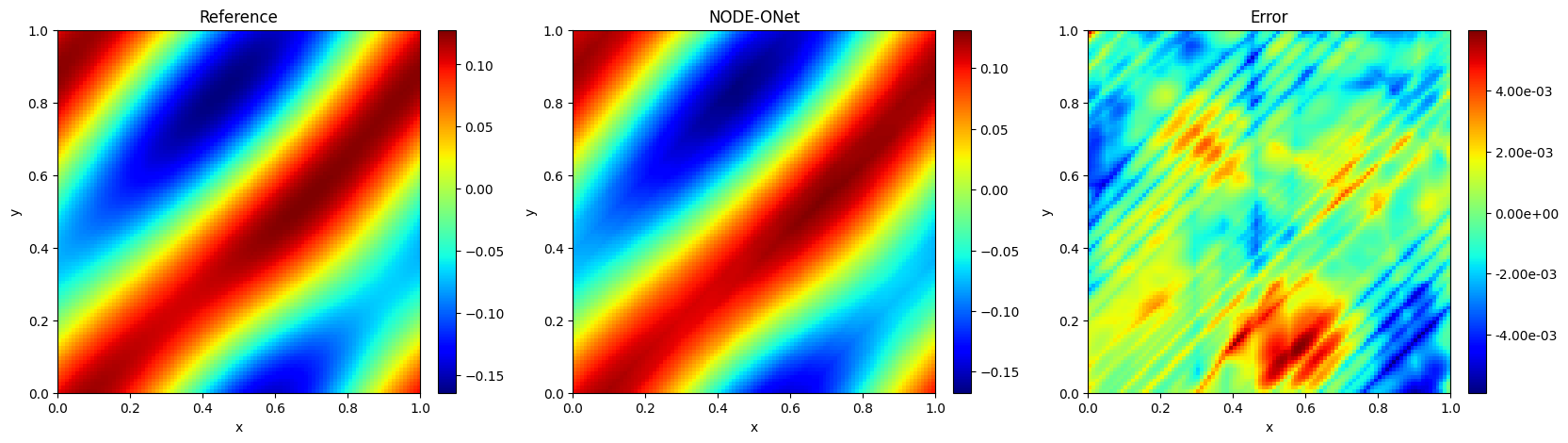}
\includegraphics[scale=0.35]{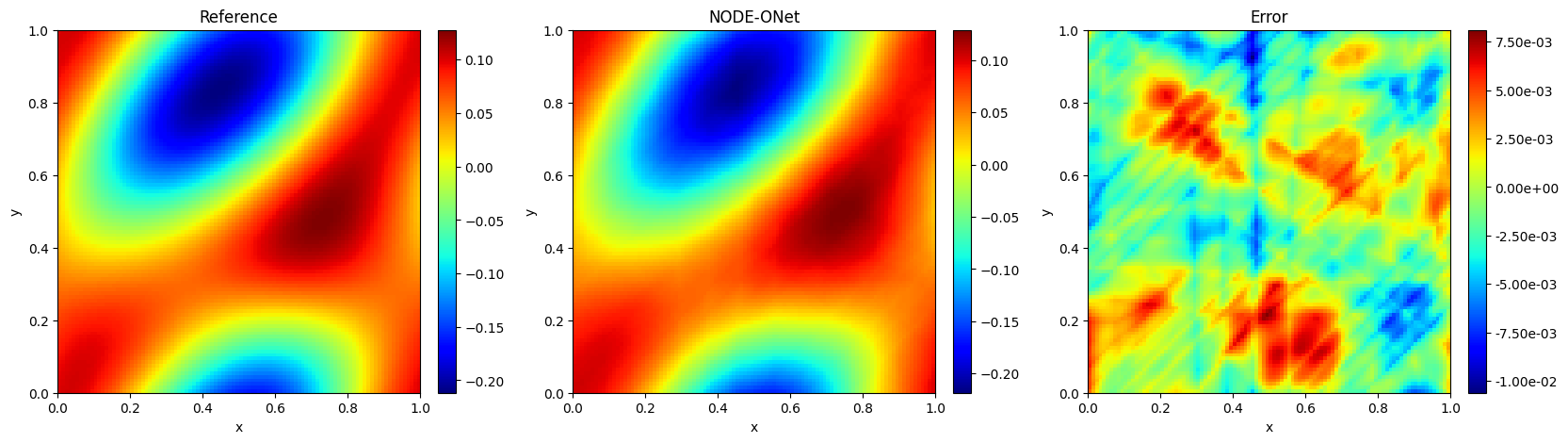}
\includegraphics[scale=0.35]{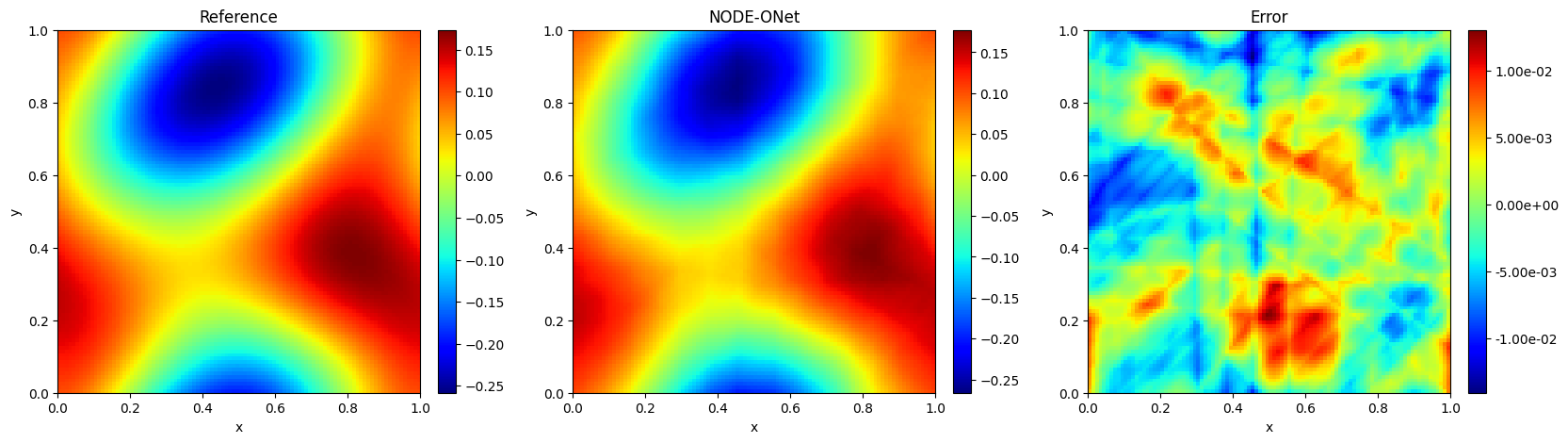}
\includegraphics[scale=0.35]{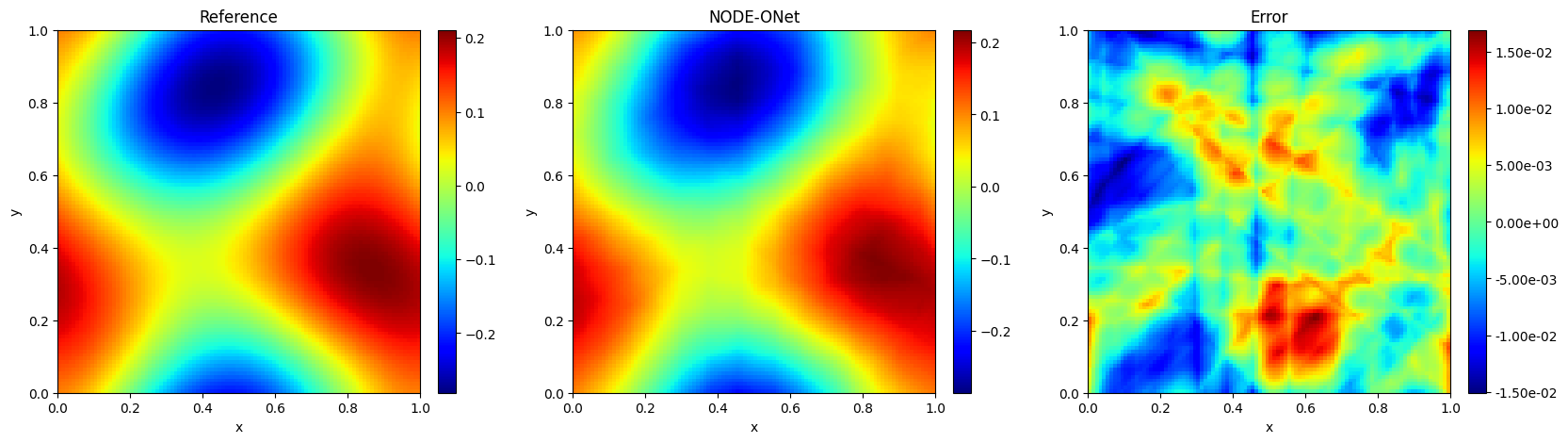}
\includegraphics[scale=0.35]{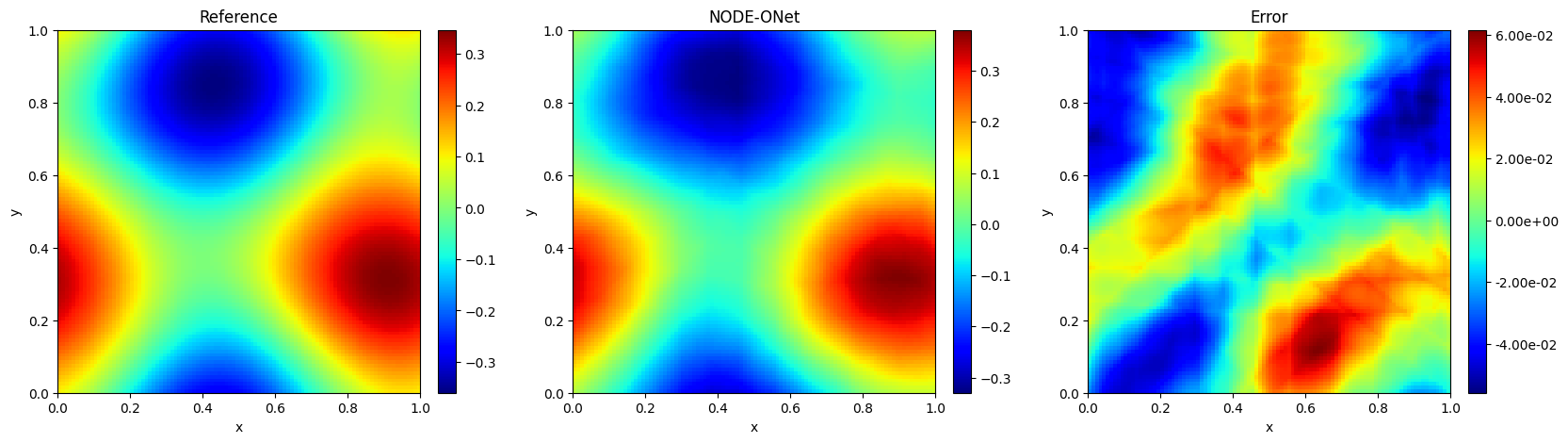}
\end{figure}


\begin{figure}[!htpb]
\caption{ Numerical results for learning the solution operator of \eqref{eq:2dexample} with multi-inputs $\Psi_m: \{u_0(x), f(x)\}\mapsto u(t, x)$ with one random input pair. From top to bottom: $t=2, 6, 10$ (test within the training time frame), and $t=12, 20$ (prediction beyond the training time frame).
}\label{fig: two_inputs_ex2}
\centering
\includegraphics[scale=0.35]{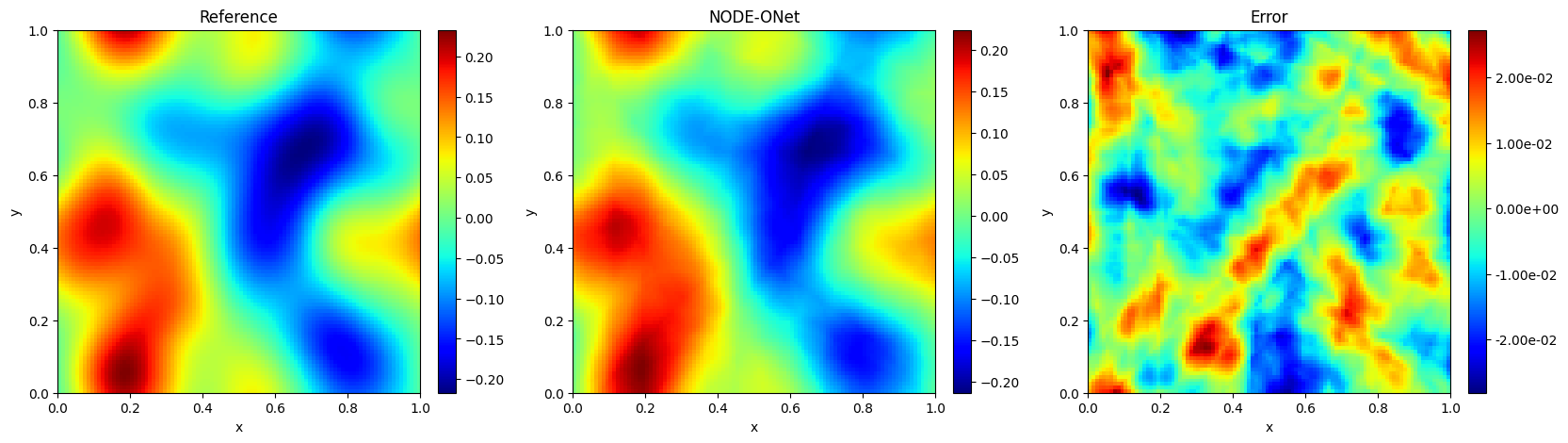}
\includegraphics[scale=0.35]{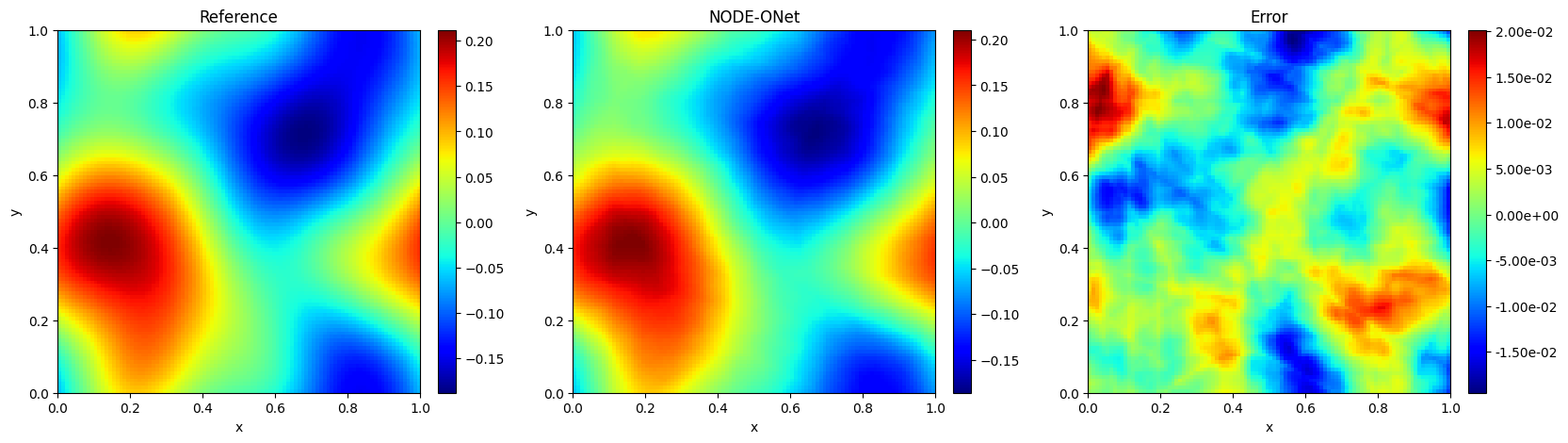}
\includegraphics[scale=0.35]{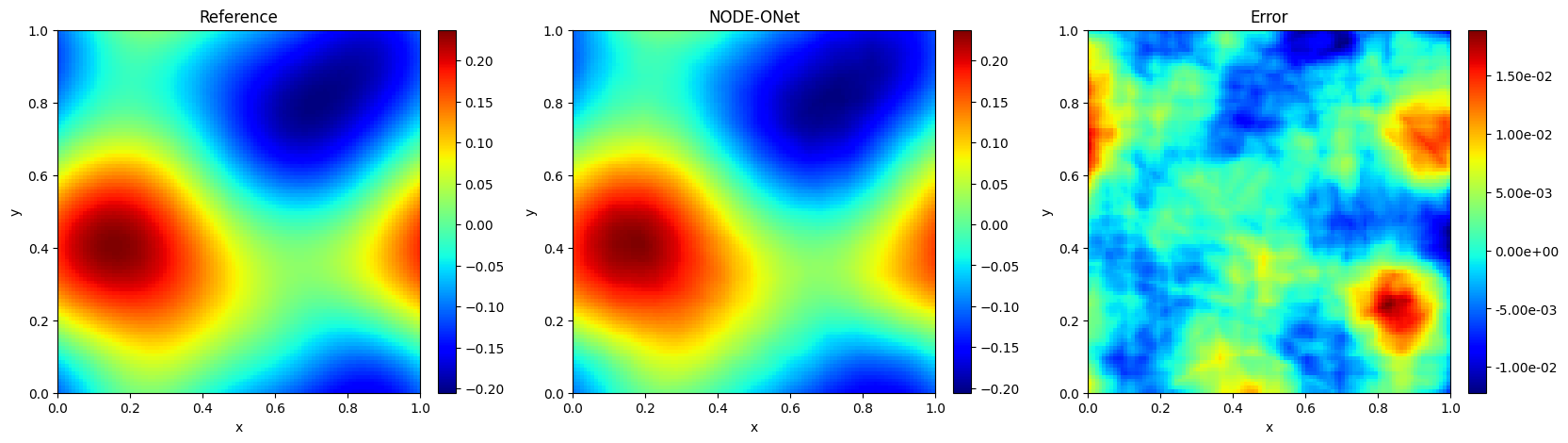}
\includegraphics[scale=0.35]{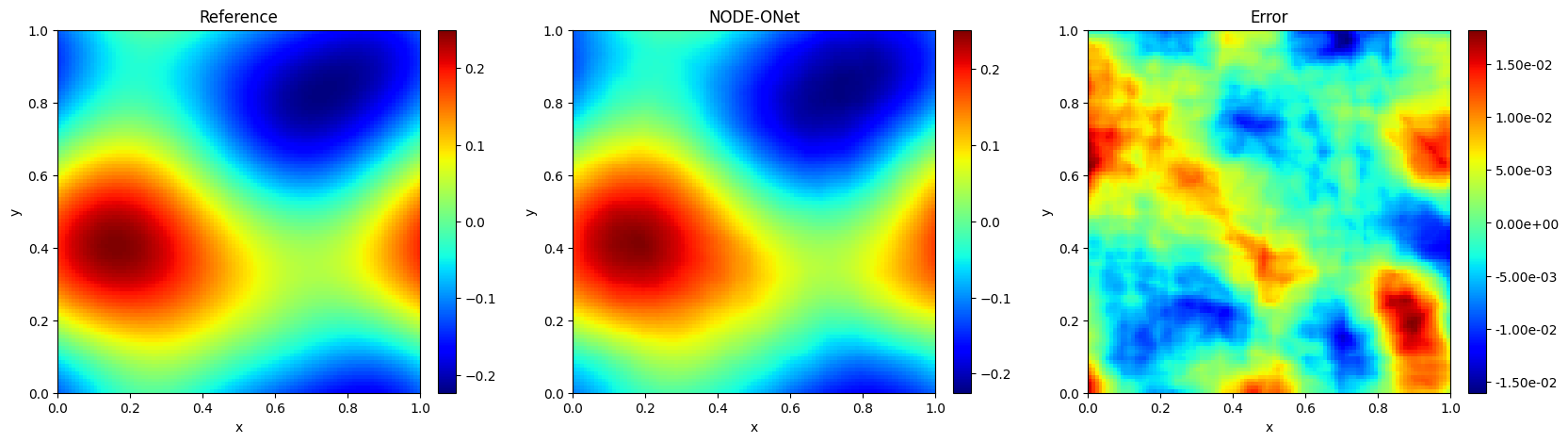}
\includegraphics[scale=0.35]{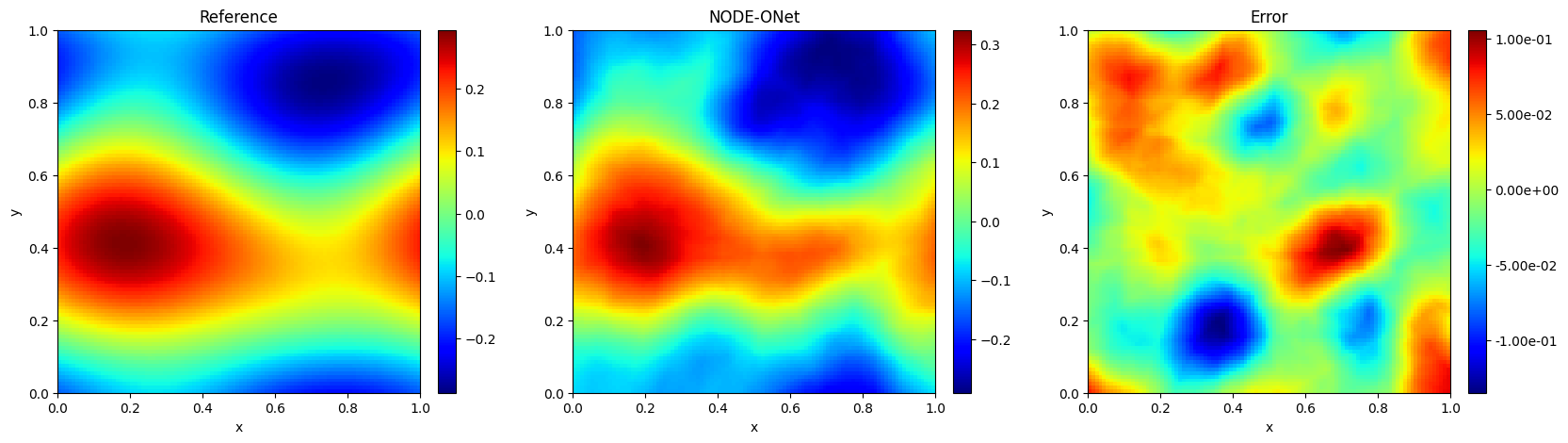}
\end{figure}


\section{Conclusions and Perspectives}\label{se:conclusion}
This paper introduces the deep Neural Ordinary Differential Equation Operator Network (NODE-ONet), a framework for learning solution operators of partial differential equations (PDEs). By integrating neural ODEs (NODEs) within an encoder-decoder architecture, the NODE-ONet framework effectively decouples spatial and temporal variables, aligning with traditional numerical methods for time-dependent PDEs. The core innovation lies in the design of physics-encoded NODEs, which incorporate structural properties of the underlying PDEs (e.g., bilinear couplings, additive source terms) into their architecture. Such well-designed physics-encoded NODEs not only enhance the numerical efficiency and robustness of the resulting NODE-ONets but also enable accurate extrapolation beyond the training temporal domain. 
Our key contributions and findings include:

\begin{itemize}
\item \textbf{Theoretical Foundation:} A general error analysis for encoder-decoder networks is established, providing mathematical insights on operator approximation errors and guiding the design of NODE-ONets.

\item \textbf{Physics-Encoded NODEs:} By enforcing explicit time dependence in trainable parameters and embedding PDE-specific knowledge (e.g., nonlinear dependencies and effects of known PDE parameters), these NODEs achieve superior generalization while maintaining low model complexity.

\item \textbf{Numerical Efficiency:} The NODE-ONets outperform state-of-the-art methods (e.g., DeepONets, MIONet) in terms of numerical accuracy, model complexity, and training cost, especially for learning operators with multi-input functions. 

\item \textbf{Generalization:} Trained encoders/decoders can be transferred to related PDEs without retraining, and predictions remain satisfactory beyond the training time horizon.

\item \textbf{Flexibility:} The framework accommodates various encoders/decoders (e.g., neural networks, Fourier basis) and adapts to both stationary and non-stationary PDEs.
\end{itemize}
Hence, the NODE-ONet framework represents a significant step toward scalable and physics-encoded computational tools for PDEs by combining data-driven learning with mathematical structures.

Our work leaves some important questions, which are beyond the scope of the paper and will be the subject of future investigation.
\begin{itemize}
\item \textbf{Further Error Analysis.} The error analysis presented in Section \ref{se:error_analysis} is mainly devoted to the generic encoder-decoder architecture. It is relevant to analyze the (approximate and generalization) errors for the NODE-ONet framework, which depends intricately on the specific PDE under consideration and is technically involved,  see also Remark \ref{re:analysis}.
\item \textbf{Optimal NODEs.} Note that the physics-encoded NODEs used in our experiments (and also the generic one (\ref{eq:NODE_pe3})) are not unique. For instance, an alternative physics-encoded NODE to \eqref{eq:NODE_m}  can be given by
$$
{\small	\begin{dcases}
\dot{\bm{\psi}}(t) =-P_D\cdot \text{Diag}(D)\cdot P_D^\top\cdot \bm{\psi}+\sum_{i = 1}^P \Big\{W_i\odot\sigma\left(A_i\odot\bm{\psi} +\bm{a}_i^1t+B_i\right)+\mathcal{P}_f\bm{f}\Big\},\\
\bm{\psi}(0) = \bm{0}\in \mathbb{R}^{d_{\mathcal{U}}},
\end{dcases}}
$$
where $\text{Diag}(\cdot): \mathbb{R}^{d_{\mathcal{V}}}\rightarrow \mathbb{R}^{{d_{\mathcal{V}}} \times {d_{\mathcal{V}}}}$ is the diagonal matrix operator, $P_D\in  \mathbb{R}^{{d_{\mathcal{U}}} \times {d_{\mathcal{V}}}}$, and other parameters are the same as those in \eqref{eq:NODE_m}. The above NODE has the same model complexity as that of \eqref{eq:NODE_m} and the resulting NODE-ONet demonstrates comparable numerical performance to that in Table \ref{tab:ex1_compare_minonet}. Hence, it is of great theoretical and practical significance to establish a mathematical principle to determine the optimal physics-encoded NODE for a specific PDE solution operator. 
\item \textbf{Extensions.} 
First, extending the NODE-ONet framework to address optimal control and inverse problems involving PDEs presents a compelling research direction. Such problems typically require solving coupled time-forward and time-backward equations. A central challenge in this extension is, therefore, the design of appropriate NODE architectures capable of simultaneously capturing the evolving dynamics of both equations. Second, while our current focus is on parabolic equations, developing NODE-ONets for hyperbolic equations remains crucial. For this purpose, one may consider some second-order NODEs and the ideas in \cite{ruthotto2020deep} could be useful.

\end{itemize}

\section{Acknowledgment}

Z. Li is supported by the Alexander von Humboldt Professorship program and the Deutsche Forschungsgemeinschaft (DFG, German Research Foundation) under project C07 of the Sonderforschungsbereich/Transregio 154 "Mathematical Modelling, Simulation and Optimization Using the Example of Gas Networks" (project ID: 239904186).

\noindent Y. Song was supported by a Start-Up Grant from Nanyang Technological University. 

\noindent H. Yue was supported by the National Natural
Science Foundation of China (No. 12301399).

\noindent E. Zuazua was funded by the ERC Advanced Grant CoDeFeL (ERC-2022-ADG-101096251) , the grants TED2021-131390B-I00-DasEl of MINECO, and PID2023-146872OB-I00-DyCMaMod of MICIU (Spain); the Alexander von Humboldt Professorship program; the European Union’s Horizon Europe MSCA project ModConFlex (HORIZON-MSCA-2021-DN-01(project 101073558); the Transregio 154 Project “Mathematical Modelling, Simulation and Optimization Using the Example of Gas Networks” of the DFG; the AFOSR 24IOE027 project; and the Madrid Government–UAM Agreement for the Excellence of the University Research Staff in the context of the V PRICIT (Regional Programme of Research and Technological Innovation).

\bibliographystyle{siamplain}

\begin{thebibliography}{10}

\bibitem{adams2003sobolev}
{\sc R.~A. Adams}, {\em Sobolev Spaces}, vol.~Vol. 65 of Pure and Applied
Mathematics, Academic Press [Harcourt Brace Jovanovich, Publishers], New
York-London, 1975.

\bibitem{Azizzadenesheli2024neural}
{\sc K. Azizzadenesheli, N. Kovachki, Z. Li, M. Liu-Schiaffini, J. Kossaifi, and A. Anandkumar}, {\em Neural operators for accelerating scientific simulations and design}, Nature Reviews Physics, 6 (2024), pp.~320-328, \url{https://doi.org/10.1038/s42254-024-00712-5}.

\bibitem{barnard1992extrapolation}
{\sc E. Barnard, L.F.A. Wessels}, {\em Extrapolation and interpolation in neural network classifiers}, IEEE Control Syst. Mag. 12 (1992),  pp.~50–53,
\url{http://dx.doi.org/10.1109/37.158898.}

\bibitem{bhattacharya2021model}
{\sc K.~Bhattacharya, B.~Hosseini, N.~B. Kovachki, and A.~M. Stuart}, {\em
Model reduction and neural networks for parametric {PDE}s}, SMAI J. Comput.
Math., 7 (2021), pp.~121--157.

\bibitem{boichuk2016generalized}
{\sc A.~A. Boichuk and A.~M. Samoilenko}, {\em Generalized inverse operators
and {F}redholm boundary-value problems}, vol.~59 of Inverse and Ill-posed
Problems Series, De Gruyter, Berlin, second~ed., 2016,
\url{https://doi.org/10.1515/9783110378443}.
\newblock Translated from the Russian by Peter V. Malyshev.

\bibitem{cai2021deepm}
{\sc S.~Cai, Z.~Wang, L.~Lu, T.~A. Zaki, and G.~E. Karniadakis}, {\em
Deep{M}\&{M}net: inferring the electroconvection multiphysics fields based on
operator approximation by neural networks}, J. Comput. Phys., 436 (2021),
pp.~ 110296, 17,
\url{https://doi.org/10.1016/j.jcp.2021.110296}.

\bibitem{cao2023lno}
{\sc Q.~Cao, S.~Goswami, and G.~E. Karniadakis}, {\em Laplace neural operator
for solving differential equations}, Nature Machine Intelligence, 6 (2024),
pp.~631--640,
\url{https://doi.org/10.1038/s42256-024-00844-4}.

\bibitem{chen1995universal}
{\sc T.~Chen and H.~Chen}, {\em Universal approximation to nonlinear operators
	by neural networks with arbitrary activation functions and its application to
	dynamical systems}, IEEE Transactions on Neural Networks, 6 (1995),
pp.~911--917, \url{http://dx.doi.org/10.1109/72.392253}.

\bibitem{chen2018neural}
{\sc R.~T.~Q. Chen, Y.~Rubanova, J.~Bettencourt, and D.~K. Duvenaud}, {\em
Neural ordinary differential equations}, in Advances in Neural Information
Processing Systems, vol.~31, 2018.


\bibitem{chen2023deep}
{\sc K. Chen, C. Wang, and H. Yang}, {\em Deep operator learning lessens the curse of dimensionality for PDEs}, Transactions on machine learning research,  (2023).

\bibitem{choi2024spectral}
{\sc J.~Choi, T.~Yun, N.~Kim, and Y.~Hong}, {\em Spectral operator learning for
parametric {PDE}s without data reliance}, Comput. Methods Appl. Mech. Engrg.,
420 (2024), pp.~ 116678, 24,
\url{https://doi.org/10.1016/j.cma.2023.116678}.

\bibitem{cuomo2022scientific}
{\sc S.~Cuomo, V.~Schiano Di~Cola, F.~Giampaolo, G.~Rozza, M.~Raissi, and
F.~Piccialli}, {\em Scientific machine learning through physics-informed
neural networks: where we are and what's next}, J. Sci. Comput., 92 (2022),
pp.~ 88, 62,
\url{https://doi.org/10.1007/s10915-022-01939-z}.

\bibitem{cybenko1989approximation}
{\sc G.~Cybenko}, {\em Approximation by superpositions of a sigmoidal
function}, Math. Control Signals Systems, 2 (1989), pp.~303--314,
\url{https://doi.org/10.1007/BF02551274}.

\bibitem{e2018deep}
{\sc W.~E and B.~Yu}, {\em The deep {R}itz method: a deep learning-based
numerical algorithm for solving variational problems}, Commun. Math. Stat., 6
(2018), pp.~1--12, 
\url{https://doi.org/10.1007/s40304-018-0127-z}.

\bibitem{faroughi2024physics}
{\sc S.~A. Faroughi, N.~M. Pawar, C.~Fernandes, M.~Raissi, S.~Das, N.~K.
Kalantari, and S.~Kourosh~Mahjour}, {\em Physics-guided, physics-informed,
and physics-encoded neural networks and operators in scientific computing:
Fluid and solid mechanics}, Journal of Computing and Information Science in
Engineering, 24 (2024), p.~040802.

\bibitem{gao2025prox}
{\sc Y. Gao, Y. Song, Z. Tan, H. Yue, and S. Zeng}, {\em Prox-PINNs: a deep learning algorithmic framework for elliptic variational inequalities}, arXiv preprint arXiv:2505.14430, (2025).

\bibitem{garg2022variational}
{\sc S.~Garg and S.~Chakraborty}, {\em Variational Bayes deep operator network:
A data-driven Bayesian solver for parametric differential equations}, arXiv
preprint arXiv:2206.05655,  (2022).

\bibitem{hao2022physics}
{\sc Z.~Hao, S.~Liu, Y.~Zhang, C.~Ying, Y.~Feng, H.~Su, and J.~Zhu}, {\em
Physics-informed machine learning: A survey on problems, methods and
applications}, arXiv preprint arXiv:2211.08064,  (2022).

\bibitem{he2016deep}
{\sc K.~He, X.~Zhang, S.~Ren, and J.~Sun}, {\em Deep residual learning for
image recognition}, in Proceedings of the IEEE conference on computer vision
and pattern recognition, 2016, pp.~770--778.

\bibitem{hornik1989multilayer}
{\sc K.~Hornik, M.~Stinchcombe, and H.~White}, {\em Multilayer feedforward
networks are universal approximators}, Neural Networks, 2 (1989),
pp.~359--366,
\url{https://www.sciencedirect.com/science/article/pii/0893608089900208}.

\bibitem{hua2023basis}
{\sc N.~Hua and W.~Lu}, {\em Basis operator network: A neural network-based
model for learning nonlinear operators via neural basis}, Neural Networks,
164 (2023), pp.~21--37.

\bibitem{hwang2022solving}
{\sc R.~Hwang, J.~Y. Lee, J.~Y. Shin, and H.~J. Hwang}, {\em Solving
{PDE}-constrained control problems using operator learning}, Proceedings of
the AAAI Conference on Artificial Intelligence, 36 (2022), pp.~4504--4512.

\bibitem{jin2022mionet}
{\sc P.~Jin, S.~Meng, and L.~Lu}, {\em M{ION}et: learning multiple-input
operators via tensor product}, SIAM J. Sci. Comput., 44 (2022),
pp.~A3490--A3514,  \url{https://doi.org/10.1137/22M1477751}.

\bibitem{karniadakis2021physics}
{\sc G.~E. Karniadakis, I.~G. Kevrekidis, L.~Lu, P.~Perdikaris, S.~Wang, and
L.~Yang}, {\em Physics-informed machine learning}, Nature Reviews Physics, 3
(2021), pp.~422--440.

\bibitem{kawaguchi2017generalization}
{\sc K.~Kawaguchi, Y.~Bengio, and L.~Kaelbling}, {\em Generalization in deep
learning}, in Mathematical aspects of deep learning, Cambridge Univ. Press,
Cambridge, 2023, pp.~112--148.

\bibitem{kidger2020universal}
{\sc P.~Kidger and T.~Lyons}, {\em {Universal approximation with deep narrow
networks}}, in Proceedings of Thirty Third Conference on Learning Theory,
J.~Abernethy and S.~Agarwal, eds., vol.~125 of Proceedings of Machine
Learning Research, PMLR, 09--12 Jul 2020, pp.~2306--2327,
\url{https://proceedings.mlr.press/v125/kidger20a.html}.

\bibitem{kobayashi2024improved}
{\sc K. Kobayashi, J. Daniell, and S. B. Alam},  {\em Improved generalization with deep neural operators for engineering systems: Path towards digital twin}, Engineering Applications of Artificial Intelligence, 131 (2024), pp.~107844, \url{https://doi.org/10.1016/j.engappai.2024.107844}.

\bibitem{kobayashi2024deep}
{\sc K. Kobayashi and S. B. Alam},  {\em Deep neural operator-driven real-time inference to enable digital twin solutions for nuclear energy systems}, Scientific Reports, 14 (2024), pp.~2101.

\bibitem{kovachki2023neural}
{\sc N.~Kovachki, Z.~Li, B.~Liu, K.~Azizzadenesheli, K.~Bhattacharya,
A.~Stuart, and A.~Anandkumar}, {\em Neural operator: learning maps between
function spaces with applications to {PDE}s}, J. Mach. Learn. Res., 24
(2023), pp.~ [89], 97.

\bibitem{kovachki2024operator}
{\sc N.~B. Kovachki, S.~Lanthaler, and A.~M. Stuart}, {\em Operator
learning: Algorithms and analysis}, in Numerical Analysis Meets Machine
Learning, S.~Mishra and A.~Townsend, eds., vol.~25 of Handbook of Numerical
Analysis, Elsevier, 2024, pp.~419--467,
\url{https://www.sciencedirect.com/science/article/pii/S1570865924000097}.

\bibitem{krishnapriyan2021characterizing}
{\sc A.~Krishnapriyan, A.~Gholami, S.~Zhe, R.~Kirby, and M.~W. Mahoney}, {\em
Characterizing possible failure modes in physics-informed neural networks},
Advances in Neural Information Processing Systems, 34 (2021),
pp.~26548--26560.

\bibitem{krylov1996lectures}
{\sc N.~V. Krylov}, {\em Lectures on Elliptic and Parabolic Equations in
{H}\"older spaces}, vol.~12 of Graduate Studies in Mathematics, American
Mathematical Society, Providence, RI, 1996,
\url{https://doi.org/10.1090/gsm/012}.

\bibitem{krylov2024lectures}
{\sc N.~V. Krylov}, {\em Lectures on Elliptic and Parabolic Equations in
{S}obolev spaces}, vol.~96 of Graduate Studies in Mathematics, American
Mathematical Society, Providence, RI, 2008,
\url{https://doi.org/10.1090/gsm/096}.

\bibitem{lai2025hard}
{\sc M.-C. Lai, Y.~Song, X.~Yuan, H.~Yue, and T.~Zeng}, {\em The
hard-constraint {PINN}s for interface optimal control problems}, SIAM J. Sci.
Comput., 47 (2025), pp.~C601--C629,
\url{https://doi.org/10.1137/23M1601249}.

\bibitem{li2020neural}
{\sc Z.~Li, N.~Kovachki, K.~Azizzadenesheli, B.~Liu, K.~Bhattacharya,
A.~Stuart, and A.~Anandkumar}, {\em Neural operator: Graph kernel network for
partial differential equations}, arXiv preprint arXiv:2003.03485,  (2020).

\bibitem{li2021fourier}
{\sc Z.~Li, N.~B. Kovachki, K.~Azizzadenesheli, B.~liu, K.~Bhattacharya,
A.~Stuart, and A.~Anandkumar}, {\em Fourier neural operator for parametric
partial differential equations}, in International Conference on Learning
Representations, 2021.

\bibitem{li2024universal}
{\sc Z.~Li, K.~Liu, L.~Liverani, and E.~Zuazua}, {\em Universal approximation
of dynamical systems by semi-autonomous neural ODEs and applications}, arXiv
preprint arXiv:2407.17092,  (2024).

\bibitem{lin2021operator}
{\sc C.~Lin, Z.~Li, L.~Lu, S.~Cai, M.~Maxey, and G.~E. Karniadakis}, {\em
Operator learning for predicting multiscale bubble growth dynamics}, The
Journal of Chemical Physics, 154 (2021).

\bibitem{liu2024deep}
{\sc N. Liu, X. Li, M. R. Rajanna, E. W. Reutzel, B. Sawyer, P. Rao, J. Lua, N. Phan, and Y. Yu,}  {\em Deep neural operator enabled digital twin modeling for additive manufacturing}, {Advances in Computational Science and Engineering}, 2 (2024), pp. 174-201, \url{10.3934/acse.2024010}.

\bibitem{lu2021learning}
{\sc L.~Lu, P.~Jin, G.~Pang, Z.~Zhang, and G.~E. Karniadakis}, {\em Learning
nonlinear operators via DeepONet based on the universal approximation theorem
of operators}, Nature Machine Intelligence, 3 (2021), pp.~218--229,
\url{https://doi.org/10.1038/s42256-021-00302-5}.

\bibitem{lu2022comprehensive}
{\sc L.~Lu, X.~Meng, S.~Cai, Z.~Mao, S.~Goswami, Z.~Zhang, and G.~E.
Karniadakis}, {\em A comprehensive and fair comparison of two neural
operators (with practical extensions) based on {FAIR} data}, Comput. Methods
Appl. Mech. Engrg., 393 (2022), pp.~114778, 35,
\url{https://doi.org/10.1016/j.cma.2022.114778}.

\bibitem{lu2021physics}
{\sc L.~Lu, R.~Pestourie, W.~Yao, Z.~Wang, F.~Verdugo, and S.~G. Johnson}, {\em
Physics-informed neural networks with hard constraints for inverse design},
SIAM J. Sci. Comput., 43 (2021), pp.~B1105--B1132,
\url{https://doi.org/10.1137/21M1397908}.

\bibitem{lv2025neural}
{\sc K. Lv, J. Wang, Y. Zhang, and H. Yu}, {\em Neural operators for adaptive control of freeway traffic}, Automatica, 182 (2025), pp.~112553, \url{https://doi.org/10.1016/j.automatica.2025.112553}.

\bibitem{nair2025understanding}
{\sc A.~S. Nair, S.~Barwey, P.~Pal, J.~F. MacArt, T.~Arcomano, and R.~Maulik},
{\em Understanding latent timescales in neural ordinary differential equation
models of advection-dominated dynamical systems}, Phys. D, 476 (2025),
pp.~ 134650, 16,
\url{https://doi.org/10.1016/j.physd.2025.134650}.

\bibitem{nelsen2021random}
{\sc N.~H. Nelsen and A.~M. Stuart}, {\em The random feature model for
input-output maps between {B}anach spaces}, SIAM J. Sci. Comput., 43 (2021),
pp.~A3212--A3243, \url{https://doi.org/10.1137/20M133957X}.

\bibitem{neyshabur2017exploring}
{\sc B.~Neyshabur, S.~Bhojanapalli, D.~McAllester, and N.~Srebro}, {\em
Exploring generalization in deep learning}, Advances in Neural Information
Processing Systems, 30 (2017).

\bibitem{ong2022integral}
{\sc Y.~Z. Ong, Z.~Shen, and H.~Yang}, {\em Integral autoencoder network for
discretization-invariant learning}, J. Mach. Learn. Res., 23 (2022),
pp.~ [286], 45.


\bibitem{pathak2022fourcastnet}
{\sc J. Pathak, S. Subramanian, P. Harrington, S. Raja, A. Chattopadhyay, M. Mardani, T. Kurth, D. Hall, Z. Li, K. Azizzadenesheli, P. Hassanzadeh, K. Kashinath, and A. Anandkumar}, {\em FourCastNet: A global data-driven high-resolution weather model using adaptive Fourier neural operators}, Proceedings of the Platform for Advanced Scientific Computing Conference, (2023), \url {https://doi.org/10.1145/3592979.3593412},


\bibitem{prasthofer2022variable}
{\sc M.~Prasthofer, T.~De~Ryck, and S.~Mishra}, {\em Variable-input deep
operator networks}, arXiv preprint arXiv:2205.11404,  (2022).

\bibitem{raissi2019physics}
{\sc M.~Raissi, P.~Perdikaris, and G.~E. Karniadakis}, {\em Physics-informed
neural networks: a deep learning framework for solving forward and inverse
problems involving nonlinear partial differential equations}, J. Comput.
Phys., 378 (2019), pp.~686--707,
\url{https://doi.org/10.1016/j.jcp.2018.10.045}.

\bibitem{regazzoni2024learning}
{\sc F.~Regazzoni, S.~Pagani, M.~Salvador, L.~Dede’, and A.~Quarteroni}, {\em
Learning the intrinsic dynamics of spatio-temporal processes through latent
dynamics networks}, Nature Communications, 15 (2024), p.~1834,
\url{https://doi.org/10.1038/s41467-024-45323-x}.

\bibitem{ruiz2023neural}
{\sc D.~Ruiz-Balet and E.~Zuazua}, {\em Neural {ODE} control for
classification, approximation, and transport}, SIAM Rev., 65 (2023),
pp.~735--773, \url{https://doi.org/10.1137/21M1411433}.

\bibitem{ruthotto2020deep}
{\sc L.~Ruthotto and E.~Haber}, {\em Deep neural networks motivated by partial
differential equations}, J. Math. Imaging Vision, 62 (2020), pp.~352--364,
\url{https://doi.org/10.1007/s10851-019-00903-1}.

\bibitem{sirignano2018dgm}
{\sc J.~Sirignano and K.~Spiliopoulos}, {\em D{GM}: a deep learning algorithm
for solving partial differential equations}, J. Comput. Phys., 375 (2018),
pp.~1339--1364, 
\url{https://doi.org/10.1016/j.jcp.2018.08.029}.

\bibitem{song2023accelerated}
{\sc Y.~Song, X.~Yuan, and H.~Yue}, {\em Accelerated primal-dual methods with
enlarged step sizes and operator learning for nonsmooth optimal control
problems}, arXiv preprint arXiv:2307.00296,  (2023).

\bibitem{song2024admm}
{\sc Y.~Song, X.~Yuan, and H.~Yue}, {\em The {ADMM}-{PINN}s algorithmic
framework for nonsmooth {PDE}-constrained optimization: a deep learning
approach}, SIAM J. Sci. Comput., 46 (2024), pp.~C659--C687,
\url{https://doi.org/10.1137/23M1566935}.

\bibitem{song2024operator}
{\sc Y.~Song, X.~Yuan, H.~Yue, and T.~Zeng}, {\em An operator learning approach
to nonsmooth optimal control of nonlinear PDEs}, arXiv preprint
arXiv:2409.14417,  (2024).

\bibitem{tanyu2023deep}
{\sc D.~N. Tanyu, J.~Ning, T.~Freudenberg, N.~Heilenk\"otter, A.~Rademacher,
U.~Iben, and P.~Maass}, {\em Deep learning methods for partial differential
equations and related parameter identification problems}, Inverse Problems,
39 (2023), pp.~ 103001, 75,
\url{https://doi.org/10.1088/1361-6420/ace9d4}.

\bibitem{wang2021fast}
{\sc S.~Wang, M.~A. Bhouri, and P.~Perdikaris}, {\em Fast PDE-constrained
optimization via self-supervised operator learning}, arXiv preprint
arXiv:2110.13297,  (2021).

\bibitem{wang2024respecting}
{\sc S.~Wang, S.~Sankaran, and P.~Perdikaris}, {\em Respecting causality for
training physics-informed neural networks}, Comput. Methods Appl. Mech.
Engrg., 421 (2024), pp.~ 116813, 17,
\url{https://doi.org/10.1016/j.cma.2024.116813}.

\bibitem{wang2021learning}
{\sc S.~Wang, H.~Wang, and P.~Perdikaris}, {\em Learning the solution operator
of parametric partial differential equations with physics-informed
DeepONets}, Science Advances, 7 (2021), p.~eabi8605,
\url{https://www.science.org/doi/abs/10.1126/sciadv.abi8605}.
\bibitem{xu2021how}
{\sc K. Xu, M. Zhang, J. Li, S. Du, K. Kawarabayashi, S. Jegelka}, {\em How neural networks extrapolate: From
feedforward to graph neural networks}, International Conference on Learning Representations, 2021. 


\bibitem{yang2023context}
{\sc L.~Yang, S.~Liu, T.~Meng, and S.~J. Osher}, {\em In-context operator
learning with data prompts for differential equation problems}, Proc. Natl.
Acad. Sci. USA, 120 (2023), pp.~ e2310142120, 10.

\bibitem{yin2022simulating}
{\sc M.~Yin, E.~Ban, B.~V. Rego, E.~Zhang, C.~Cavinato, J.~D. Humphrey, and
G.~Em~Karniadakis}, {\em Simulating progressive intramural damage leading to
aortic dissection using DeepONet: an operator--regression neural network},
Journal of the Royal Society Interface, 19 (2022), p.~20210670.

\bibitem{zhang2023belnet}
{\sc Z.~Zhang, W.~T. Leung, and H.~Schaeffer}, {\em Bel{N}et: basis enhanced
learning, a mesh-free neural operator}, Proc. A., 479 (2023), pp.~
20230043, 20, \url{https://doi.org/10.1098/rspa.2023.0043}.

\bibitem{zhu2023reliable}
{\sc M. Zhu, H. Zhang, A. Jiao,  G. Em. Karniadakis,  and L. Lu,}  {\em Reliable extrapolation of deep neural operators informed by physics or sparse observations}, Comput. Methods Appl. Mech. Engrg., , 412 (2023), pp.~116064, \url{https://doi.org/10.1016/j.cma.2023.116064}.
\end{thebibliography}

\end{document}